\tikzset{
  edge/.style = {
    semithick
  },
  arc/.style = {
    ->,
    semithick,
    >={[round,sep]Stealth}
  },
  bidir/.style = {
    <->,
    semithick,
    >={[round,sep]Stealth}
  }
}
\newcommand\tikzrightarrow[1][1.4em]{\tikz[baseline=-0.5ex, shorten
<=2pt, shorten >=2pt] \draw[-{Stealth[round,sep]}] (0,0) -- (#1,0);}
\newcommand\tikzleftarrow[1][1.4em]{\tikz[baseline=-0.5ex, shorten
<=2pt, shorten >=2pt] \draw[{Stealth[round,sep]}-] (0,0) -- (#1,0);}
\newcommand\tikzleftrightarrow[1][1.4em]{\tikz[baseline=-0.5ex,
  shorten  <=1pt, shorten >=1pt]
\draw[{Stealth[round,sep]}-{Stealth[round,sep]}] (0,0) -- (#1,0);}
\lstdefinelanguage{ruletable}{
  keywords={EDGES,COLORS,SETS,START,OUTPUT,AT,not,in,|},%
  literate={|}{\textbar}1,%
  comment=[l]{\#},%
  alsoletter={-,>,<},%
}
\colorlet{structure.fg}{black!80}
\itshape\color{black!50},
\bfseries\color{black},
\scriptsize\color{black!50},
\sffamily\color{black!50},
\lstdefinestyle{codeStyle}{
  basicstyle=\footnotesize\ttfamily,
}
\def\Cpp{{C\nolinebreak[4]\hspace{-.05em}\raisebox{.4ex}{\tiny\bf ++}}}
\def\de{\mathrm{de}}
\def\an{\mathrm{an}}
\def\pa{\mathrm{pa}}
\def\ch{\mathrm{ch}}
\def\sib{\mathrm{sib}}
\def\possan{\mathrm{possan}}
\def\possde{\mathrm{possde}}
\def\causal{\mathrm{causal}}
\def\forb{\mathrm{forb}}
\def\closure{\mathrm{closure}}
\def\dis{\mathrm{dis}}
\def\displus{\mathrm{dis}^+}
\def\zopt{Z^{\mathrm{opt}}}
\def\zoptnew{Z^{\mathrm{opt}}_{\mathrm{new}}}
\def\wopt{W^{\mathrm{opt}}}
\def\woptnew{W^{\mathrm{opt}}_{\mathrm{new}}}
\DeclareRobustCommand{\VAN}[3]{#2}
\def\cleartheorem#1{%
  \expandafter\let\csname#1\endcsname\relax
  \expandafter\let\csname c@#1\endcsname\relax
}
\declaretheorem[name=Theorem]{theorem}
\declaretheorem[name=Example]{example}
\begin{document}

\title{Linear-Time Primitives for Algorithm Development in Graphical Causal Inference}

\author{\name Marcel Wien\"obst \email m.wienoebst@uni-luebeck.de \\
  \addr Institute for Theoretical Computer Science\\
  University of Lübeck, Germany
  \AND
  \name Sebastian Weichwald \email sweichwald@math.ku.dk \\
  \addr Department of Mathematical Sciences\\
  University of Copenhagen, Denmark
  \AND
  \name Leonard Henckel \email leonard.henckel@ucd.ie \\
  \addr School of Mathematics and Statistics\\
University College Dublin, Ireland}

\editor{My editor}

\maketitle

\begin{abstract}%
We introduce CIfly, a framework for efficient algorithmic primitives in graphical causal inference that isolates reachability as a reusable core operation. It builds on the insight that many causal reasoning tasks can be reduced to reachability in purpose-built state-space graphs that can be constructed on the fly during traversal. We formalize a rule table schema for specifying such algorithms and prove they run in linear time. We establish CIfly as a more efficient alternative to the common primitives moralization and latent projection, which we show are computationally equivalent to Boolean matrix multiplication. Our open-source Rust implementation parses rule table text files and runs the specified CIfly algorithms providing high-performance execution accessible from Python and R. We demonstrate CIfly’s utility by re-implementing a range of established causal inference tasks within the framework and by developing new algorithms for instrumental variables. These contributions position CIfly as a flexible and scalable backbone for graphical causal inference, guiding algorithm development and enabling easy and efficient deployment.
\end{abstract}

\begin{keywords}
  causal inference, graphical models, linear-time algorithms, graph reachability, statistical software 
\end{keywords}

\section{Introduction}
Causal graphical models that encode covariates as nodes and the qualitative
causal relationships between them as edges are a popular framework for rigorous
causal and probabilistic reasoning \citep{pearl2009causality}. For a variety of
theoretical and practical reasons, the causal inference literature has
developed many types of graphical models encoding different levels of causal
information, such as directed acyclic graphs (DAGs), completed partially
directed acyclic graphs (CPDAGs), acyclic direct mixed graphs (ADMGs), maximal
ancestral graphs (MAGs), and partial ancestral graphs (PAGs). This variety of
graph types can be challenging but causal graphical models are nonetheless
attractive as they allow for mathematically sound and, importantly, automated
causal and probabilistic reasoning. A fundamental probabilistic task, for
example, is to find a separating set between two variables, that is, a set of
random variables such that the two are conditionally independent
\citep{geiger1989dseparation}. A popular causal task is to identify a
functional in the observational density that estimates some target causal
quantity \citep[for example,][]{huang2006identifiability}. This task is also often
studied under restrictions on the functionals, such as, that they must
correspond to an adjustment formula \citep{perkovic2018complete}, an
instrumental variable model \citep{brito2002generalized}, a front-door model
\citep{pearl1995causal} or a g-formula \citep{robins1986new,
perkovic2020identifying}. Other common tasks are to verify whether a candidate
functional is valid \citep{perkovic2018complete} or to find the most
statistically efficient valid functional in some class
\citep{rotnitzky2020efficient,witte2020efficient, henckel2019graphical}. 

For truly automated and scalable causal reasoning, we need efficient algorithms
with broadly accessible implementations. This is particularly important in
applications with large graphs \citep{maathuis2010predicting} or complex
simulation studies \citep{heinze2018causal}. In response, a broad literature on
algorithms in causal inference has developed \citep[for example,][]{shachter1998bayes,
van2019separators}. However, due to the diversity of causal graph types, the
variety of causal reasoning tasks, and the continual development of new
methodology \citep[for example,][]{guo2023variable, henckel2024graphical}, these
efforts remain ongoing.  Moreover, efficient implementations are not always
available \citep{peters2015structural,jeong2022frontdoor}, or only in few
programming languages \citep{kalisch2012causal,textor2016robust}, which limits
their utility for the growing and academically diverse community interested in
causal inference. This stands in contrast to the overwhelming success of
efficient implementations for many machine learning algorithms built upon
linear algebra primitives, which are broadly accessible across programming
languages.

Graphical causal algorithms can often be built upon reachability-based
primitives. In its standard form, reachability is the problem of finding all
nodes that can be reached from some starting set of nodes in a graph via
directed paths. It can be solved in linear time in the input size, that is, given
a graph with $p$ nodes and $m$ edges, in $O(p+m)$, by breadth-first (BFS) or
depth-first search (DFS). Reachability can not only be used for tasks such as
finding descendants in a DAG, but also more generally by applying it to
problem-specific graph constructions. In the causal inference literature, this
approach was first used by \citet{shachter1998bayes} in the Bayes-Ball
algorithm for verifying d-separation statements in DAGs. Bayes-Ball can be seen
as a reachability algorithm in a new, not necessarily acyclic, directed graph,
where each node represents a node-edge tuple from the original DAG and the
edges are added based on rules that depend on the conditioning set (see Figure
\ref{figure:dsep:statespace}). The connection between Bayes-Ball and
reachability was made explicit by \citet{van2020algorithmics} who also noted
that, by modifying the edge rules, it is possible to solve other tasks via
reachability in this node-edge tuple graph, for example, finding the possible
descendants in a CPDAG. Such modified edge-rule reachability algorithms are
also used as primitives in the first linear-time algorithms for finding a
minimal d-separating set \citep{van2020finding} or a (minimal) front-door set
\citep{wienobst2024linear}. \citet{wienobst2024linear} also noted that it is
possible to succinctly encode these reachability algorithms with a rule table.
Finally, \citet{henckel2024adjustment} used a similar approach, but relaxed the
requirement that nodes in the problem-specific graph be node-edge tuples from
the original graph, to develop a linear-time algorithm that, for a fixed
treatment, verifies the validity of an adjustment set for all possible outcomes
simultaneously.

In this paper, we build on these developments to propose
CIfly~(\textipa{/"saI""flaI/}), a formal and flexible framework for developing
linear-time algorithmic primitives, called CIfly algorithms. CIfly formalizes
the idea of creating a task-specific state-space graph on the fly while running
a standard reachability algorithm. We prove that CIfly algorithms have linear
run-time complexity and develop a schema to specify them with rule tables that
can be encoded as simple text files. We also provide the \texttt{cifly} package,
written in Rust, that parses such rule table text files and enables the
efficient execution of the corresponding CIfly algorithms. This functionality
is made available in Python and R via the \texttt{ciflypy} and \texttt{ciflyr}
packages.

We showcase CIfly by using it to re-implement the algorithms for various
important causal inference tasks, thus demonstrating its broad applicability.
We establish the competitive performance of CIfly algorithms by benchmarking
against existing implementations in \texttt{pcalg} by
\citet{kalisch2012causal}, \texttt{DAGitty} by \citet{textor2016robust}, and
\texttt{gadjid} by \citet{henckel2024adjustment}. We also use CIfly to
implement three new algorithms for instrumental sets: The first sound and
complete algorithm for verifying whether a candidate set is a valid conditional
instrumental set, the first linear-time algorithm for identifying the
statistically most efficient conditional instrumental set, and the first sound
and complete algorithm for finding a valid conditional instrumental set. The
rule tables and implementations for these and further algorithms are available
on the CIfly website at \url{cifly.pages.dev}. Finally, we contrast CIfly to
two other algorithmic primitives in graphical causal inference, namely
moralization \citep{cowell2007probabilistic} and latent projection
\citep{richardson2003markov}. We show that both are computationally equivalent
to Boolean matrix multiplication and, hence, can currently \emph{not} be solved
in time quadratic in the number of nodes. This disproves contrary claims about
moralization in the literature. Neither moralization nor latent projection are therefore
suitable primitives for algorithms with time complexity linear in the input
size.

CIfly facilitates and transforms efficient algorithm development and deployment
in causal inference. First, the CIfly framework provides guidance for
developing causal algorithms based on reachability primitives, while
guaranteeing linear run time. Second, the rule tables provide a succinct,
language-agnostic way to specify and store CIfly algorithms. Third, our
open-source software packages makes it possible to efficiently deploy any
CIfly algorithm from its rule table to both Python and R. This obviates the
time-consuming and error-prone task of re-implementing algorithms across
languages. Combined, CIfly provides an efficiently implemented, easy-to-use and
extensible framework for linear-time primitives as well as guidance for
developing new algorithms. 

\section{Preliminaries}

In this paper, we study algorithms for graphs where nodes represent random
variables and edges encode different types of causal relations. We begin with
an overview of the main graphical and algorithmic notions. For space reasons,
and because they are not required for the results in this paper, we do not
formally introduce the causal models corresponding to these graphical objects
but refer the reader to the wider causal inference literature for these
definitions \citep[for example,][]{pearl2009causality, perkovic2018complete,
henckel2024graphical}.

\vspace{.1cm}

\noindent\emph{Edge-types, graphs and directed graphs:} 
To accommodate the many graph types used within the causal inference
literature, we use the following broad definition of a graph. Let
$\mathcal{E}=\{\epsilon_1,\dots, \epsilon_k\}$ denote a finite set of distinct
edge-types, for example, $\mathcal{E}=\{\tikzrightarrow,\tikzleftrightarrow\}$. Every
edge-type is either ordered, such as \ $\tikzrightarrow$, or unordered, such as
$\tikzleftrightarrow$. A graph $G$ with edge-types $\mathcal{E}$ consists of a
set of \emph{nodes} $V$ and a $k$-tuple of edge sets $(E_1, \dots, E_k)$, where
$E_i$ corresponds to the edge-type $\epsilon_i$ and satisfies either $E_i
\subseteq V^2$ if $\epsilon_i$ is ordered or $E_i \subseteq \binom{V}{2}$ if
$\epsilon_i$ is unordered. A \emph{directed} graph, for example, has edge-type
set $\{\tikzrightarrow\}$, that is, it only contains edges of the form $u
\tikzrightarrow v$, and can thus be defined as $G = (V, (E_1))$ with $E_1
\subseteq V^2$. For graphs with a single set of edges, we write $G = (V, E)$
for simplicity. We use $\mathcal{G}_{\mathcal{E}}$ to denote the set of all
graphs with edge-type set $\mathcal{E}$. Given a graph $G$, we use $p$ to
denote the number of nodes in $G$ and $m$ to denote the total number of edges.

\vspace{.1cm}

\noindent\emph{Adjacency, walks and paths:}
A \emph{walk} $w$ in $G$ is defined by a sequence of nodes $v_1, \dots,v_\ell$
and a corresponding sequence of edges $e_1, e_2, \dots, e_{\ell-1}$ such that
$e_i$ is an edge between $v_i$ and $v_{i+1}$ for all $i \in \{1, \dots,
\ell-1\}$.  Nodes with an edge in common are called \emph{adjacent}. An edge
connecting $u$ and $v$ is \emph{incident} to these nodes.  A \emph{path} is a
walk that contains distinct nodes. The first node $v_1$ and the last node
$v_\ell$ on a walk $w$ are called endpoints of $w$. A walk (or path) from a set
$X$ to a set $Y$ is a walk (or path) from some $x \in X$ to some $y \in Y$. A
walk (or path) from a set $X$ to a set $Y$ is proper if only the first node is
in $X$. 

\vspace{.1cm}

\noindent\emph{Neighbor-types and sets:}
Given a set of edge-types $\mathcal{E}$, we use $\mathcal{N}_{\mathcal{E}}$ to
denote the set of neighbor-types in a graph $G \in \mathcal{G}_{\mathcal{E}}$.
The set $\mathcal{N}_{\mathcal{E}}$ contains one element for each unordered
edge-type in $\mathcal{E}$ and two for each ordered one. Consider, for example,
a node $v$ in a graph $G = (V,(E_1,E_2))$ where $E_1$ contains ordered edges of
the form $\tikzrightarrow$ and $E_2$ unordered edges of the form $-$. Every
node $u$ adjacent to $v$ in $G$ satisfies $u \tikzrightarrow v$, $u
\tikzleftarrow v$ or $u - v$ in which case $u$ is called a parent, a child or a
sibling of $v$, respectively. We therefore, have three types of neighboring
nodes, that is,
$\mathcal{N}_{\{\tikzrightarrow,-\}}=\{\tikzrightarrow,\tikzleftarrow,-\}$.
Given a neighbor type $n$ and a node $v$, we use $N_n(v)$ to denote the set of
neighbors of $v$ that have type $n$. 

\vspace{.1cm}

\noindent\emph{Directed acyclic graphs (DAGs) and acyclic directed mixed graphs (ADMGs):}
A walk $w$ from $x$ to $y$ is called directed towards $y$ if all edges on $w$
are of the form $v_i \tikzrightarrow v_{i+1}$ and point towards $y$. A directed
walk from $x$ to $y$ and the edge $y \tikzrightarrow x$ forms a directed cycle.
A directed graph without a directed cycle is called \emph{acyclic} and we call
such graphs directed acyclic graphs (DAGs). A graph $G = (V, (E_1, E_2))$ is
called an acyclic directed mixed graph (ADMG) if $E_1$ is a set of directed
edges $\tikzrightarrow$, $E_2$ is a set of bidirected edges
$\tikzleftrightarrow$, and the subgraph $G_{\mathrm{dir}} = (V, (E_1))$ is
acyclic. In causal inference, DAGs and ADMGs are commonly used to encode
qualitative causal information but they also represent conditional
independencies \citep{spirtes2000causation}. The conditional independence
statements can be read off from these graphs using the d-separation criterion. 

\vspace{.1cm}

\noindent\emph{Colliders, non-colliders and d-separation:}
Given a walk $w$ in a DAG $G=(V,E)$, we call a node $v$ a collider on $w$ if
the two edges on $w$ incident to $v$ are of the form $\tikzrightarrow v
\tikzleftarrow$. We call $v$ a non-collider on $w$ if it is neither an endpoint
node nor a collider on $w$. Consider two disjoint node sets $X$ and $Z$ in a
DAG $G = (V, E)$. Then, $Y \subseteq V \setminus (X \cup Z)$ is
\emph{d-connected} to $X$ in $G$ if there exist $x \in X, y \in Y$ and a walk
$w$ from $x$ to $y$ such that 
\begin{enumerate*}[label=(\roman*)]
  \item\label{condition:collider} for any collider $v$ on $w$, it holds that $v \in Z$ and
  \item\label{condition:non-collider} for any non-collider $v$ on $w$, it holds that $v \not\in Z$.
\end{enumerate*}
This is commonly denoted as $X \not\perp_{G} Y \mid Z$. Conversely, if $X$ and
$Y$ are not d-connected, they are called \emph{d-separated}, written $X
\perp_{G} Y \mid Z$. A walk that satisfies Conditions \ref{condition:collider}
and \ref{condition:non-collider} is called \emph{open} and walk that does not
\emph{closed}. This definition of d-separation is equivalent to the more
popular one using paths \citep{shachter1998bayes,pearl2009causality}.
For ADMGs, d-separation is defined analogously, with a collider on a walk being
a node $v$ with incident edges having arrowtips pointing towards $v$, that is,
$\tikzrightarrow v \tikzleftarrow,\tikzrightarrow v
\tikzleftrightarrow,\tikzleftrightarrow v \tikzleftarrow$ or
$\tikzleftrightarrow v \tikzleftrightarrow$.

\vspace{.1cm}

\noindent\emph{Completed partially directed acyclic graphs (CPDAGs) and possibly directed paths:}
Several DAGs can encode the same d-separation statements. Such DAGs form a
Markov equivalence class which can be described uniquely by a completed
partially directed acyclic graph \citep{meek1995causal}. A CPDAG contains
directed edges $\tikzrightarrow$ and undirected edges $-$. An edge $v_1
\tikzrightarrow v_2$ in a CPDAG $C$ encodes that all DAGs in the Markov
equivalence class contain the edge $v_1 \tikzrightarrow v_2$, whereas $v_1 -
v_2$ encodes that some, but not all, contain the edge $v_1 \tikzleftarrow v_2$
instead. A walk $w$ from $x$ to $y$ is called possibly directed if all directed
edges on $w$ point towards $y$. We call a walk that is not possibly directed, non-causal.

\vspace{.1cm}

\noindent\emph{Ancestral relations and forbidden nodes:}
Consider a DAG, ADMG or CPDAG $G$. If $x \tikzrightarrow y$ then $x$ is a
parent of $y$ and $y$ is a child of $x$. If $x \tikzleftrightarrow y$ then $x$
is a sibling of $y$. If there is a directed path from $x$ to $y$ then $y$ is a
descendant of $x$ and $x$ is an ancestor of $y$. If there is a possibly
directed path from $x$ to $y$ then $y$ is a possible descendant of $x$ and $x$
is a possible ancestor of $y$. If this path does not contain nodes in $W$,
then we say the $y$ is a proper (possible) descendant of $y$ relative to $W$
and $x$ is a proper (possible) ancestor of $y$ relative to $W$. We denote the
corresponding sets with $\pa_G(x)$, $\ch_G(x)$, $\sib_G(x)$, $\de_{G,W}(x)$,
$\an_{G,W}(x)$, $\possde_{G,W}(x)$ and $\possan_{G,W}(x)$, where we drop $W$ if
it is the empty set. For a set $X$, $\pa_G(X)\coloneq\bigcup_{x \in X}
\pa_G(x)$ with analogous definitions for the other sets. Let $\causal_G(X, Y)
\coloneq \possan_{G,X}(Y) \cap \possde_G(X)$ be the set of causal nodes
relative to $(X,Y)$ in $G$ and $\forb_G(X,Y) \coloneq \possde_G(\causal_G(X,
Y)) \cup X$ the set of forbidden nodes. We drop the $\mathrm{poss}$ prefixes
when considering a DAG or an ADMG. 

\vspace{.1cm}

\noindent\emph{Causal Primitives:}
A causal primitive is a task that, for an input consisting of a graph
$G=(V,(E_1,\dots,E_k))$ and a sequence of sets $L = (L_1, L_2 \dots,
L_{\ell})$, produces a set $R \subseteq V$ as output. An example is the problem of finding all nodes d-connected to a set of nodes $X$
in a DAG $G$ given a set $Z$. The input here is $G$ and $L=(X, Z)$ and
the output $R$ the set of d-connected nodes. Many tasks
in causal inference do not immediately fit into this form, for example, because
the output is a Boolean instead of a set (is there a $Z$ d-separating $X$ and $Y$ in $G$?), but these tasks can often be solved using causal primitives as subtasks with minor post-processing, such as
checking a set membership (see Figure \ref{figure:admgcode}). 

\vspace{.1cm}

\noindent\emph{Reachability:}
In a directed graph $G_{\mathrm{dir}}=(V_{\mathrm{dir}},E_{\mathrm{dir}})$, a
node $y$ is \emph{reachable} from node $x$ in $G$ if there exists a directed
path from $x$ to $y$ in $G$. Given two sets of nodes $X, Y \subseteq V_{\mathrm{dir}}$, we say
that $Y$ is reachable from $X$ if there exist $x \in X$ and $y \in Y$ such that
$y$ is reachable from $x$. The computational problem \textsc{reach} takes as
\emph{input} a graph $G_{\mathrm{dir}} = (V_{\mathrm{dir}}, E_{\mathrm{dir}})$
and a set $S \subseteq V_{\mathrm{dir}}$ and returns as \emph{output} the set
$T \subseteq V_{\mathrm{dir}}$ of nodes reachable from $S$ in
$G_{\mathrm{dir}}$. Classical algorithms such as breadth-first search (BFS) and
depth-first search (DFS) initiated from $S$ \citep{cormen2022introduction}
solve \textsc{reach} in linear time, that is, in time $O(p+m)$. To decide whether
a set $Y$ is reachable from $X$ in linear time, one can solve \textsc{reach} for
$S = X$ and check if the intersection $Y \cap T$ is non-empty. 

\section{CIfly: Reachability in State-Space Graphs Created on the Fly}
\label{section:framework}
In this section, we propose a framework for designing linear-time algorithms in
causal inference called CIfly. It relies on reductions of graphical causal
inference tasks to \textsc{reach} instances. We first define the general
framework and show that any CIfly reduction can be solved by a linear-time
algorithm. We then introduce rule tables as purpose-built notation for
succinctly encoding a CIfly algorithm. Finally, we discuss our software
implementation that enables users to deploy CIfly algorithms specified by rule
tables to R or Python.

\begin{figure}
\centering
  \begin{tikzpicture}
    \node (leftcol)   at (0,-.5) {\textsf{Causal Primitive}};
    \node (rightcol)  at (6,-.5) {\textsf{Reachability Task}};
    \node (upperrow)  at (-3.5,-2) {\textsf{Input}};
    \node (lowerrow)  at (-3.64,-4) {\textsf{Output}};
    \node (causalin)  at (0,-2) {$G, (L_1, \dots, L_{\ell})$};
    \node (causalout) at (0,-4) {$R$};
    \node (reachin)   at (6,-2) {$G_{\mathrm{dir}}, S$};
    \node (reachout)  at (6,-4) {$T$};

    \draw[arc] (2,-2) -- (4,-2) node[midway, above]{$\mathcal{I}$};
    \draw[arc] (6,-2.5) -- (6,-3.5) node[midway, right=0.1cm]{\textsc{reach}};
    \draw[arc] (4,-4) -- (2,-4) node[midway, above]{$\mathcal{O}$};
    \draw[arc, dotted] (0,-2.5) -- (0,-3.5);
  \end{tikzpicture}
\caption{An illustration of the general principle underlying causal-to-reach reductions.}\label{figure:causaltoreach}
\end{figure}
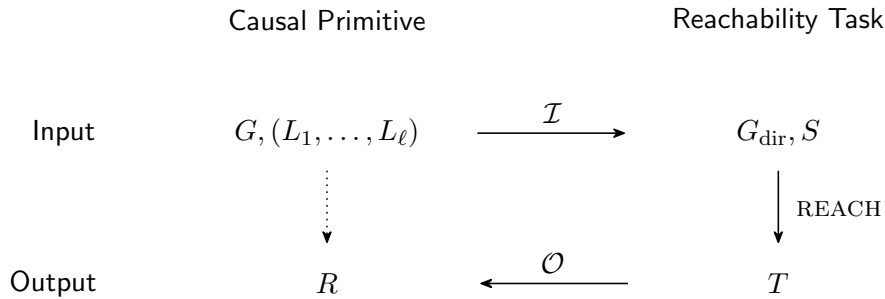

\subsection{The CIfly Framework}
Suppose we are interested in developing an efficient algorithm for solving a
causal inference task. We assume that the task can be solved through subtasks
that can be expressed as causal primitives, that is, tasks that take as input a
graph $G$ and an $\ell$-tuple of node sets in $G$ and produce a set of nodes as
output. The principle underlying CIfly is to solve causal primitives by mapping
the input to a corresponding \textsc{reach} instance, solving this task by
standard algorithms and, afterwards, mapping the output back to the original
domain. Hence, it can be seen as a framework for reductions from causal
primitives to reachability. We formalize such reductions in the following
definition. 

\begin{definition}[Causal-to-reach reduction]\label{definition:causaltoreach}
  Consider a set of edge-types $\mathcal{E}$ and an integer $\ell$. An
  $(\mathcal{E},\ell)$-causal-to-reach reduction consists of two maps
  $\mathcal{I}$ and $\mathcal{O}$:
  \begin{itemize}
    \item $\mathcal{I}$ is a map assigning to each pair, consisting of a graph $G
      \in \mathcal{G}_{\mathcal{E}}$ and $\ell$-tuple of node sets
      $(L_1,\dots,L_\ell)$ in $G$, a reachability input $(G_{\mathrm{dir}},S)$.
    \item $\mathcal{O}$ is a map  from a set $T$ of nodes in $G_{\mathrm{dir}}$,
      specifically the output of \textsc{reach} applied to
      $(G_{\mathrm{dir}},S)$, to a set of nodes in $G$. 
  \end{itemize}
\end{definition}

The map $\mathcal{I}$ constructs the \textsc{reach} input, whereas the map
$\mathcal{O}$ maps the resulting \textsc{reach} output back to a set of nodes
of $G$. Overall, this yields an algorithm consisting of three steps: the
mapping to a \textsc{reach} input, the reachability algorithm itself and the
translation back to the causal domain. This approach is illustrated in
Figure~\ref{figure:causaltoreach}. While
Definition~\ref{definition:causaltoreach} formalizes the general principle of
CIfly, it is too general to be of much use. In particular, while \textsc{reach}
can be solved in linear time, this may not hold for the reduction, meaning the
overall procedure may not yield a linear-time algorithm for solving the task at
hand. Furthermore, it is unclear how to encode an arbitrary
$(\mathcal{E},\ell)$-causal-to-reach reduction, which is important from both a
methodological perspective and for software development. We now propose a
subclass of $(\mathcal{E},\ell)$-causal-to-reach reductions that avoids these
two problems.

\begin{definition}[CIfly reduction]\label{definition:cifly} 
  Let $\mathcal{E}$ be a set of edge types and $\ell$ an integer. An
  $(\mathcal{E}, \ell)$-\emph{CIfly reduction} is defined by the tuple $(C,
  \mathcal{S}, \mathcal{T}, \phi)$, where
  \begin{enumerate} 
    \item $C$ is a finite set whose elements we call \emph{colors},
    \item $\mathcal{S}$ is a subset of $\{1, 2, \dots, \ell\} \times
      \mathcal{N}_{\mathcal{E}} \times C$, 
    \item $\mathcal{T}$ is a subset of $\mathcal{N}_{\mathcal{E}} \times C$,
      and 
    \item $\phi$ is a  function mapping from $\left(\{0, 1\}^{\ell} \times 
      \mathcal{N}_{\mathcal{E}} \times C \right)\times \left(\{0, 1\}^{\ell} \times 
      \mathcal{N}_{\mathcal{E}} \times C \right)$ to $\{0, 1\}$. 
  \end{enumerate}
  The corresponding map $\mathcal{I}_{\text{CIfly}}$ maps any $G \in
  \mathcal{G}_{\mathcal{E}}$ with node set $V$ and any $(L_1,\dots,L_\ell)
  \subseteq V^\ell$ to the \textsc{reach} input $(G_{\mathrm{dir}} =
  (V_{\mathrm{dir}}, E_{\mathrm{dir}}), S)$, where
  \begin{enumerate}
    \item $V_{\mathrm{dir}} = V \times \mathcal{N}_{\mathcal{E}} \times C$, 
    \item 
      $\begin{aligned}[t]
  E_{\mathrm{dir}} = \{ (v_1, &n_1, c_1)\tikzrightarrow (v_2, n_2, c_2)
    \mid v_2 \in N_{n_2}(v_1) \text{ and } \\ 
  &\phi(v_1 \in L_1, \dots, v_1 \in L_{\ell}, v_2 \in L_1, \dots, v_2 \in
    L_{\ell}, n_1, n_2, c_1, c_2) = 1\}, \text{ and}
      \end{aligned}$
    \item $S = \{(v, n, c) \mid \text{there exists } (i, n, c) \in \mathcal{S}
      \text{ such that } v \in L_i\}$.
  \end{enumerate}
   The corresponding map $\mathcal{O}_{\text{CIfly}}$ maps the output $T$ of
   the resulting \textsc{reach} input to the set $R = \{v \mid \text{there
   exists } (v, n, c) \in T \text{ such that } (n, c) \in \mathcal{T} \}$. 
\end{definition}

To differentiate the causal primitive input graph $G$ from the corresponding
\textsc{reach} input graph $G_{\mathrm{dir}}$ we call the latter state-space
graph. We also call the nodes and edges of $G_{\mathrm{dir}}$ states and
transitions. Each element of the CIfly reduction $(C, \mathcal{S}, \mathcal{T},
\phi)$ has a clear role in the construction of these states and transitions.
The set $C$ controls the number of states. The set $S$ provides the rules for
how to construct the set of starting states for \textsc{reach}. The Boolean
function $\phi$ is used to determine which transitions exist in
$G_{\mathrm{dir}}$. Finally, $R$ gives the rules how to map the resulting
\textsc{reach} output consisting of states to the correct causal primitive
output. In a concrete implementation, we do not preconstruct
$G_{\mathrm{dir}}$, but generate neighboring states and check transitions on
the fly. For this reason, we call our framework causal inference on the fly or,
for short, \emph{CIfly}. In Algorithm \ref{algorithm:ciflydfs}, we provide a
concrete algorithm illustrating this approach. CIfly reductions can be used to
efficiently solve many causal primitives. The intuitive reason why, is that
causal primitives typically involve statements about the existence or absence
of certain paths or walks in the original graph $G$, with specific conditions
concerning the preceding and next edge type. CIfly translates such problems to
\textsc{reach} by constructing the state-space graph $G_{\mathrm{dir}}$. Here
nodes are node-edge-color tuples from the original graph and therefore directed
paths correspond to walks in the original graph $G$, with the edge-types and
colors encoding additional information about these walks. By constructing an
appropriate $\phi$ function we can ensure that these walks satisfy the
properties we require of them. 

The choice function $\phi$ is therefore at the core of any CIfly reduction. It
is often complex and may be represented in different ways, which we discuss in
detail in the next subsection. For now, the observation suffices that there
are, for concrete $(\mathcal{E}, \ell)$ and a finite number of colors, only a
constant amount of such functions, which can be evaluated in constant time. By
this fact and since $G_{\mathrm{dir}}$, by construction, has size linear in the
original input graph $G$, CIfly reductions can be implemented to run in linear
time. Before proving this, we illustrate a concrete CIfly reduction for a basic
causal primitive.

\begin{figure}[t]
  \centering
  \begin{tikzpicture}
    \node (linput) at (0.75,3.0) {\textsf{\large Causal-primitive input:}};
    \node (x) at (0,1.75) {$v_2$};
    \node (m) at ($(x)+(1.5,0)$) {$v_3$};
    \node (y) at ($(x)+(3,0)$) {$v_4$};
    \node (z) at ($(x)+(-1.5,0)$) {$v_1$};
    \node (l) at ($(x)+(0,-1.5)$) {$v_5$};

    \path[arc] (m) edge (x);
    \path[arc] (m) edge (y);
    \path[arc] (z) edge (x);
    \path[arc] (x) edge (l);

    \node (ll) at (-0.25,-0.85) {$L = (\{v_1\}, \{v_5\})$};
    \node (ltransition) at (7.75,3.0) {\textsf{\large \textsc{reach} input:}};
    \node (xu) at ($(x)+(7,0)$) {$v_2$};
    \draw [draw=black, rounded corners] ($(xu)+(-2.2,-.5)$) rectangle ($(xu)+(5,.8)$);
    \node (mu) at ($(xu)+(1.5,0)$) {$v_3$};
    \node (yu) at ($(xu)+(3,0)$) {$v_4$};
    \node (zu) at ($(xu)+(-1.5,0)$)   {$v_1$};
    \node (lu) at ($(xu)+(4.5,-0)$)   {$v_5$};
    \node (edgestate) at ($(xu)+(-1.85,.5)$) {\small $\tikzleftarrow$};

    \node (xl) at (7,0.25) {$v_2$};
    \draw [draw=black, rounded corners] ($(xl)+(-2.2,-.5)$) rectangle ($(xl)+(5,0.8)$);
    \node (ml) at ($(xl)+(1.5,0)$) {$v_3$};
    \node (yl) at ($(xl)+(3,0)$) {$v_4$};
    \node (zl) at ($(xl)+(-1.5,0)$)   {$v_1$};
    \node (ll) at ($(xl)+(4.5,0)$)   {$v_5$};
    \node (edgestate) at ($(xl)+(-1.85,.5)$) {\small $\tikzrightarrow$};

    \path[arc]            (xu) edge (zu);
    \path[arc, color=red] (xu) edge (mu);
    \path[arc]            (yu) edge (mu);

    \path[arc]                       (zl) edge (xl);
    \path[arc]                       (ml) edge (xl);
    \path[arc]                       (ml) edge (yl);
    \path[arc,bend right=15, color=red] (xl) edge (ll);

    \path[arc,bend left=55]            (xu) edge (ll);
    \path[arc]            			  (mu) edge (xl);
    \path[arc, color=red] (ll) edge (xu);
    \path[arc, color=red]             (zu) edge (xl);
    \path[arc, color=red]             (mu) edge (yl);

    \node (ll) at (6.15,-.85) {$S = \{v_1,\tikzleftarrow\}$};
  \end{tikzpicture}
  \caption{Causal-primitive input (left) to the CIfly reduction from
    Example \ref{example:bayesball} for finding all d-connected nodes in
    DAGs and the corresponding \textsc{reach} input (right). We split the
    state-space graph into states with neighbor-type $\tikzleftarrow$
    (top rectangle) and states with neighbor-type $\tikzrightarrow$
    (bottom rectangle) to reduce clutter.}
  \label{figure:dsep:statespace}
\end{figure}
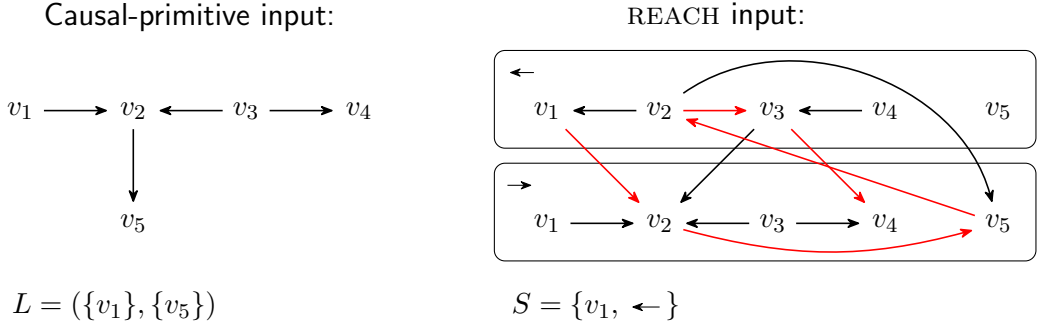

\begin{example}
  \label{example:bayesball}
  Suppose we are given a DAG $G$ and a pair of node sets $L=(X,Z)$ and are
  asked to find all nodes $y$ such that $X$ is d-connected to $y$ given $Z$.
  Thus, we have $\mathcal{E} = \{ \tikzrightarrow \}$, which implies
  $\mathcal{N}_{\mathcal{E}} = \{\tikzrightarrow, \tikzleftarrow\}$, and $\ell
  = 2$. We can solve this task using the $(\mathcal{E}, \ell)$-CIfly reduction
  with input $(G,(X,Z))$ specified as follows:
  \begin{itemize}
    \item $C= \{ \cdot \}$. Here, distinctions based on $C$ are not needed and
      we set a default color. For each node $v$ in $G$ the state-space graph
      therefore contains a state of form $(v,\tikzleftarrow,\cdot)$ and another
      of the form $(v,\tikzrightarrow,\cdot)$. For ease of presentation, we
      omit the color. 
    \item $\mathcal{S} = \{(1, \tikzleftarrow)\}$. The reachability algorithm
      thus starts at the set of nodes $\{(v, \tikzleftarrow) \mid v \in X
      \}$, since $X$ is the first set in $L$. 
    \item $\mathcal{T} = \{(\tikzleftarrow), (\tikzrightarrow)\}$. By including
      both possible neighbor types, the output set of nodes thus consists of
      all $v$ reached in a state $(v, \tikzleftarrow)$ or $(v,
      \tikzrightarrow)$. 
    \item 
      $\begin{aligned}[t]
  \phi&(v_1 \in X, v_1 \in Z, v_2 \in X, v_2 \in Z, n_1, n_2) \\ 
      &= \big(n_1 = \tikzrightarrow \land n_2 = \tikzleftarrow
      \land v_1 \in Z\big) \lor \big((n_1 \neq \tikzrightarrow
      \lor n_2 \neq \tikzleftarrow) \land (v_1 \not\in
      Z)\big).
      \end{aligned}$\hfill\mbox{}

      Here, we express $\phi$ directly as a logical formula. In the next subsection we introduce a more convenient representation.
  \end{itemize}
  By construction, a path in $G_{\mathrm{dir}}$ corresponds to a sequence of edges and vertices in $G$. Furthermore, a transition from  $(v_1,n_1)$ to $(v_2,n_2)$ exists if
  $v_1$ has neighbor $v_2$ with type $n_2$ in $G$ and $\phi$ evaluates to true.
  This is the case if $v_1$ is a collider with regard to $n_1, n_2$ and $v_1
  \in Z$ (left clause) or if $v_1$ is a non-collider with regard to $n_1,n_2$
  and $v_1 \in Z$ (right clause). Thus, directed paths in
  $G_{\mathrm{dir}}$ in fact correspond to d-connecting walks in $G$. For illustration, consider
  the DAG on the left of Figure \ref{figure:dsep:statespace} and suppose we
  have $L=\{\{v_1\},\{v_5\}\}$. The resulting state-space graph is shown on the
  right. We can see that $(v_4,\rightarrow)$ is reachable from $(v_1,\leftarrow)$
  in the state-space graph; see the path highlighted in red. It corresponds to the d-connecting walk $v_1 \rightarrow v_2 \rightarrow v_5 \leftarrow v_2 \leftarrow v_3 \rightarrow v_4$. Therefore $v_1$ and $v_4$ are d-connected given
  $\{v_5\}$ in the original DAG. In other words, we have translated
  d-separation into a \textsc{reach} task.
\end{example}

Applying reachability to state-space graphs like the one in
Figure~\ref{figure:dsep:statespace} yields a linear-time algorithm for finding
all d-connected nodes. Of course, this corresponds to the well-known Bayes-Ball
algorithm~\citep{shachter1998bayes} which can therefore be seen as a CIfly
algorithm. We now formally prove that, in fact, \emph{every} CIfly reduction
admits a linear-time algorithm.

\begin{algorithm}[t]
  \DontPrintSemicolon
  \SetKwInOut{Input}{input}\SetKwInOut{Output}{output}
  \SetKwInput{Specification}{specification}
  \SetKwFunction{FPass}{$\phi$}
  \Specification{$(\mathcal{E}, \ell)$-CIfly reduction given by $(C,
    \mathcal{S}, \mathcal{T}, \phi)$.}
  \Input{A graph $G = (V, (E_1, \dots, E_{\ell}))$ with neighbor-types
    $\mathcal{N}_{\mathcal{E}}$ and a list of sets $L = (L_1, \dots, L_{\ell})$
    with $L_i \subseteq V$.}
  \Output{Set of nodes $R$.}
  \SetKwFunction{FVisit}{visit}
  \SetKwProg{Fn}{function}{}{end}

  \BlankLine
  Initialize $\mathrm{visited}[(v, n, c)]$ with \texttt{false} for all $v \in
  V$ and $n \in \mathcal{N}_{\mathcal{E}}$ and $c \in C$. \;
  \BlankLine
  \Fn(\tcp*[h]{is called at most $p \cdot |\mathcal{N}_{\mathcal{E}}| \cdot |C|$ times})
  {\FVisit{$(v_1, n_1, c_1)$}}{
    $\mathrm{visited}[(v_1, n_1, c_1)] := \texttt{true}$ \;
    \ForEach{$v_2 \in N_{n_2}(v_1)$ in $G$}{
      \ForEach{$c_2 \in C$}{\label{line:ciflydfs:colors} \tcp{in total, $\phi$ is called at most $m \cdot |\mathcal{N}_{\mathcal{E}}| \cdot |C|^2$ times}
        \If{\textbf{\emph{not}} $\mathrm{visited}[(v_2, n_2, c_2)]$ \textbf{\emph{and}}
          $\FPass(v_1 \in L_1, \dots, v_1 \in L_{\ell}, v_2 \in L_1, \dots, v_2 \in L_{\ell}, n_1, n_2, c_1, c_2)$ \label{line:ciflydfs:phi}}{
          \FVisit{$(v_2, n_2, c_2)$}
        }
      }
    }
  }
  \BlankLine
  \ForEach{$(i, n, c) \in \mathcal{S}$}{
    \ForEach{$v \in L_i$}{
      \lIf{\textbf{\emph{not}} $\mathrm{visited}[(v, n, c)]$}{
        \FVisit{$(v, n, c)$}
      }
    }
  }
  \Return $\{v \mid \mathrm{visited}[(v, n, c)] = \texttt{true} \text{ and }
  (n, c) \in \mathcal{T}\}$\;
 \caption{Linear-time algorithm performing a CIfly reduction.}
  \label{algorithm:ciflydfs}
\end{algorithm}

\begin{theorem}\label{theorem:lintime}
  Let $(C, \mathcal{S}, \mathcal{T}, \phi)$ be a $(\mathcal{E}, \ell)$-CIfly
  reduction and $(G, (L_1, \dots, L_{\ell}))$ an input. Then,
  Algorithm~\ref{algorithm:ciflydfs} computes the output set of nodes $R$ in
  time $O(p+m)$, that is, in linear time in the size of $G$.  
\end{theorem}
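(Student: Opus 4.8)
The plan is to separate the claim into a correctness part and a running-time part, using the observation that Algorithm~\ref{algorithm:ciflydfs} is simply a depth-first search on the state-space graph $G_{\mathrm{dir}}$ that $\mathcal{I}_{\text{CIfly}}$ associates to $(G,(L_1,\dots,L_\ell))$, with the states and transitions of $G_{\mathrm{dir}}$ materialized on demand rather than precomputed.

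For correctness I would first check that the set of states on which \texttt{visit} is initially called is exactly the start set $S$: the loop over $(i,n,c)\in\mathcal{S}$ and $v\in L_i$ enumerates precisely $\{(v,n,c) : \exists\,(i,n,c)\in\mathcal{S},\ v\in L_i\} = S$. Next I would argue that, for a fixed state $(v_1,n_1,c_1)$, the body of \texttt{visit} examines exactly the out-neighbors of $(v_1,n_1,c_1)$ in $G_{\mathrm{dir}}$: ranging $n_2$ over $\mathcal{N}_{\mathcal{E}}$ and $v_2$ over $N_{n_2}(v_1)$ enumerates all pairs $(v_2,n_2)$ with $v_2\in N_{n_2}(v_1)$, ranging $c_2$ over $C$ covers all candidate target states, and the guard on line~\ref{line:ciflydfs:phi} retains exactly those for which $(v_1,n_1,c_1)\tikzrightarrow(v_2,n_2,c_2)\in E_{\mathrm{dir}}$. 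The $\mathrm{visited}$ flags guarantee each state triggers at most one \texttt{visit}, so by the standard DFS invariant the set of states with $\mathrm{visited}=\texttt{true}$ at termination is exactly the set $T$ of states reachable from $S$ in $G_{\mathrm{dir}}$, i.e.\ the output of \textsc{reach}. The returned set is then $\{v : (v,n,c)\in T,\ (n,c)\in\mathcal{T}\}$, which is precisely $R=\mathcal{O}_{\text{CIfly}}(T)$.

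For the running time, the key point is that $|\mathcal{N}_{\mathcal{E}}|\le 2|\mathcal{E}|$, $|C|$, and $\ell$ depend only on the fixed CIfly reduction and not on the input, hence are constants. I would make the standard assumptions that $G$ is given in adjacency-list form and that a one-time preprocessing pass costing $O(\ell p)=O(p)$ stores each $L_i$ as a Boolean membership array, so that each test $v\in L_i$ takes $O(1)$, the full argument tuple passed to $\phi$ is assembled in $O(1)$, and $\phi$ — being a fixed function on a domain of constant size — is evaluated in $O(1)$ (e.g.\ via a precomputed lookup table). Initializing $\mathrm{visited}$ costs $O(p\cdot|\mathcal{N}_{\mathcal{E}}|\cdot|C|)=O(p)$. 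Because of the $\mathrm{visited}$ guard, \texttt{visit} is called at most $p\cdot|\mathcal{N}_{\mathcal{E}}|\cdot|C|=O(p)$ times, contributing $O(p)$ for the flag assignments and loop set-up. For the inner work I would charge each $\phi$-evaluation to the edge of $G$ giving rise to the transition in question: a fixed edge incident to $v_1$ and $v_2$ is inspected from $v_1$'s adjacency list once for each of the $|\mathcal{N}_{\mathcal{E}}|$ choices of $n_1$ and $|C|$ choices of $c_1$, and then once per $c_2\in C$, so it accounts for at most $|\mathcal{N}_{\mathcal{E}}|\cdot|C|^2$ evaluations from each of its (at most two) endpoints; summing over all edges gives at most $2m\cdot|\mathcal{N}_{\mathcal{E}}|\cdot|C|^2=O(m)$ calls to $\phi$, each $O(1)$. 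Adding the contributions yields the claimed $O(p+m)$.

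I expect the argument to be essentially routine; the two points requiring care are (i) verifying that the on-the-fly neighbor enumeration reproduces $E_{\mathrm{dir}}$ exactly, neither missing transitions nor inspecting spurious ones, and (ii) pinning down the input-representation assumptions (adjacency lists together with $L_i$ membership arrays) under which $\phi$ is genuinely an $O(1)$ primitive — without the latter the ``constant'' factors would hide an unacknowledged dependence on how the sets $L_i$ are encoded. The edge-charging bookkeeping is the most error-prone part numerically, but it only affects the constant multiplying $m$, not the asymptotics.
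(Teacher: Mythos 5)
Your proposal is correct and follows essentially the same route as the paper's proof: identify the algorithm as a DFS on the on-the-fly state-space graph, bound the number of \texttt{visit} calls by $p\cdot|\mathcal{N}_{\mathcal{E}}|\cdot|C|=O(p)$ via the $\mathrm{visited}$ flags, charge the at most $2m\cdot|\mathcal{N}_{\mathcal{E}}|\cdot|C|^2=O(m)$ transition checks to the edges of $G$, and observe that $\phi$ is an $O(1)$ primitive. You are somewhat more explicit than the paper about the correctness of the on-the-fly neighbor enumeration and about the input-representation assumptions (adjacency lists, Boolean membership arrays for the $L_i$), which the paper compresses into ``inputs can be retrieved in constant time by look-ups,'' but the substance is identical.
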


\begin{proof}
  Algorithm~\ref{algorithm:ciflydfs} directly performs the CIfly reduction by
  calling the recursive \texttt{visit} function for each state given by set
  $\mathcal{S}$. This effectively performs a DFS version of the \textsc{reach}
  algorithm on $G_{\mathrm{dir}}$ with the only modification being that
  $G_{\mathrm{dir}}$ is not preconstructed but its edges are generated during
  the traversal. Lastly, the set $R$ is computed using $\mathcal{T}$.

  We now analyze the time complexity of the algorithm. First, observe that the
  state space has, by definition, size $p \cdot |\mathcal{N}_{\mathcal{E}}|
  \cdot |C|$ and, as we assume that the number of edge types is constant and
  define $C$ to have constant size as well, this is in $O(p)$. Hence, the
  \texttt{visit} function is called at most $O(p)$ times because this happens
  only if a triple $(v, n, c)$ has not been visited previously. Second, the
  number of possible transitions that are evaluated in
  line~\ref{line:ciflydfs:phi} by calling the function $\phi$ is upper bounded
  by $2 \cdot m \cdot |\mathcal{N}_{\mathcal{E}}| \cdot |C|^2$, because each
  edge is considered at most twice with degrees of freedom for $n_1$, $c_1$ and
  $c_2$ but not for $n_2$ as it is determined by the edge's type. This upper
  bound is in $O(m)$ by the same arguments as above. Third, the function $\phi$
  can be evaluated in constant time because (i) its inputs can be retrieved in
  constant time by look-ups and (ii) any Boolean function with constant input
  and output size can be computed in constant time as well.
\end{proof}

\subsection{Specifying CIfly Algorithms with Rule Tables}
\label{section:rule tables}
The specification of an $(\mathcal{E}, \ell)$-CIfly reduction, $(C,
\mathcal{S}, \mathcal{T}, \phi)$, contains all the information required to
implement the corresponding \textsc{reach} algorithm as defined in Algorithm
\ref{algorithm:ciflydfs}. However, such a specification is obviously a complex
object; particularly the function $\phi$ specifying the transitions.
\citet{wienobst2024linear} and \citet{henckel2024graphical} have shown that it
is possible to encode such a function with an object they call a \emph{rule
table}. We now adapt and formalize this idea to develop notation for succinctly
representing a CIfly specification.

Specifically, a CIfly rule table consists of three columns. The first two
contain pairs of neighbor-type and color. In Algorithm
\ref{algorithm:ciflydfs}, these correspond to the current state, specifically
its values $n_1$ and $c_1$, and the next state, specifically its values $n_2$
and $c_2$. The third column contains those parts of $\phi$ that depend only on
the membership of $v_1$ and $v_2$ in the sets in $L$, hence, it can be seen as
a function $\phi_{n_1, c_1, n_2, c_2}: \{0, 1\}^{2\ell} \mapsto \{0, 1\}$. To
avoid confusion between $v_1$ and $v_2$, we write \emph{current} and
\emph{next} instead of $v_1$ and $v_2$ when specifying a rule table. As a
simple example of a rule table row consider $\tikzleftarrow$ and \texttt{blue}
in the first column, $\tikzleftrightarrow$ and \texttt{red} in the second
column and $\text{next} \in L_1 \land \text{current} \not\in L_2$ in the third.
This encodes, that if $(v_1, \tikzleftarrow, \texttt{blue})$ is a state in the
state-space graph and $v_1$ has a neighbor $v_2$ connected via an edge of the
form $\tikzleftrightarrow$ in $G$, then there is a transition to state $(v_2,
\tikzleftrightarrow, \texttt{red})$ if $v_2 \in L_1 \land v_1 \not\in L_2$.  We
refer to a row of the rule table as a \emph{rule} with the first two columns
constituting the \emph{case}, which is matched against to decide which rule is
used, and the third column the \emph{expression}, which is evaluated to decide
whether there exists a transition. 

To streamline the notation and ease presentation, we introduce three additional
notational innovations. First, we allow the specification of multiple
neighbor-types and colors within a single rule, and the rule then matches any
combination of the given neighbor-types and colors. Hence, one could have
$\tikzleftarrow, \tikzleftrightarrow$ in the first column to match both
neighbor-types.  As a shorthand for the list of all neighbor-types or colors,
we write \emph{any}. Second, we allow multiple rules matching the same
combination of neighbor-types and colors. In such cases, the expression of the
\emph{first} matching rule is evaluated to decide whether the transition
exists. If there is no matching rule, we assume that there is no transition.
Furthermore, we ignore colors altogether, if they are not used in a
specification. Third and finally, we add the additional information required to
specify $\mathcal{E}$, $\ell$, $C$, $\mathcal{S}$ and $\mathcal{T}$. Instead of
just stating $\ell$, we usually give symbols to the sets in $L$ and later refer
to these. The mappings $\mathcal{S}$ and $\mathcal{T}$ are given informally in
the lines $\emph{Start}$ and $\emph{Return}$, referring to the introduced set
symbols instead of integers, using \emph{any} to refer to all neighbor-types and
omitting the color if not needed. If for $\mathcal{T}$ we return all reached states of a color irrespective of the neighbor-type we simply state the color. This is the case for all tables in this paper. By combining these elements, a rule table
fully specifies an $(\mathcal{E}, \ell)$-CIfly reduction $(C, \mathcal{S},
\mathcal{T},\phi)$. 

\begin{figure}
  \begin{minipage}{0.47\textwidth}
    \begin{tabular}{rrr}
      \multicolumn{3}{l}{\textbf{Sets}: $X, Z$}                                                                   \\
      \multicolumn{3}{l}{\textbf{Start}: $(X, \tikzleftarrow)$}                                                   \\
      \multicolumn{3}{l}{\textbf{Return}: any}                                                                    \\ [0.25em] \toprule
      current                                & next                                  & rule                       \\ \midrule
      $\tikzrightarrow, \tikzleftrightarrow$ & $\tikzleftarrow, \tikzleftrightarrow$ & $\text{current} \in Z$     \\
      any                                    & any                                   & $\text{current} \not\in Z$ \\ \bottomrule
    \end{tabular}
  \end{minipage}
  \begin{minipage}{0.52\textwidth}
    \vspace*{.48cm}
    \begin{lstlisting}[language=ruletable]
EDGES  --> <--, <->
SETS   X, Z
START  <-- AT X
OUTPUT ...

-->, <-> | <--, <-> | current in Z
...      | ...      | current not in Z
    \end{lstlisting}
  \end{minipage}
  \caption{CIfly rule table for computing all nodes d-connected with $X$ given
  $Z$ in an ADMG (left) and the text file version as parsed by the
  \texttt{cifly} package (right).} 
  \label{figure:ruletableformat}
\end{figure}

\begin{example}
\label{example:ADMG d-sep}
We illustrate the rule table format by using it to specify the CIfly primitive
for finding all nodes $y$ d-connected to $X$ given $Z$ in an ADMG on the left
of Figure~\ref{figure:ruletableformat}. Here, $\phi$ is succinctly described by
a mere two rows. One covers the case of encountering a collider and the other
the case of encountering a non-collider, that is whether or not the arrowheads
meet at the current node. Since we only use the first matching rule to evaluate
a transition, the second rule matches only non-colliders even though \emph{any}
is given as current and next. 

To make this more concrete, consider a potential transition from $(v_1, n_1)$
to $(v_2, n_2)$ and suppose $v_2 \in N_{n_2}(v_1)$. Here, we omit colors as
they are not used. To decide this transition, we look for the first row in
which $n_1$ matches in the column current and $n_2$ matches in the column next.
For example, for $n_1 = \tikzleftrightarrow$ and $n_2 = \tikzleftarrow$, this
would be the upper row (the bottom row also matches but it is not considered as
it is not the first match). Hence, we evaluate whether $v_1 \in Z$ (written in
the table as $\text{current} \in Z$) to decide whether a transition exists
between these two states. If instead $n_1 = \tikzleftrightarrow$ and $n_2 =
\tikzrightarrow$ then only the second row would match and we would evaluate
$v_1 \not\in Z$ to decide whether a transition exists.  
\end{example}

Even though CIfly reductions and algorithms are complex objects, the rule table
notation provides us with a concise way to summarize, communicate and compare
them. Deliberately, these table include syntactic sugar that make them
intuitive to use even without having the transition graph and the formal
reduction to \textsc{reach} in mind. For simple CIfly algorithms, such as for
finding all d-connected nodes (see Example \ref{example:bayesball}), this is
already helpful. For more complex graph types and problems, these
simplifications are crucial to keep the specification manageable and we will
rely on rule tables extensively for the remainder of this paper.

\subsection{The CIfly Software Packages}
\label{subsection:software}
The restrictions to the rule table specification of reachability algorithms in
CIfly are designed to allow a generic and convenient interface for running
those algorithms efficiently in practice. We provide such an implementation in
the \texttt{cifly} Rust package. Its functionalities are available in R and
Python through the \texttt{ciflyr} and \texttt{ciflypy} wrapper packages. The
main idea is that a CIfly algorithm, and the rule table at its center, can conveniently be
specified in a text-based format illustrated on the right of
Figure~\ref{figure:ruletableformat}. This representation is language-agnostic
and machine-readable. The corresponding reachability algorithm can then be
executed from R and from Python by calling a function called \texttt{reach}. 


This interface offers multiple advantages. First, because the core logic is
language-independent it can easily be used across project boundaries.
Currently, trying to port R code to Python, or vice versa, is time-consuming
and error-prone. Copying or reusing a rule-table file is simple. Second, our
implementation is designed with efficiency in mind. We transform the rules to a
format that allows a fast lookup of transitions, by precomputing, for each
triple $(n_1, c_1, n_2)$, the first matching row for each color $c_2$ that
matches at any row, and parse the expressions into abstract syntax trees before
running the reachability algorithm. The underlying rule table parser and
reachability algorithm is implemented in Rust, a compiled language
significantly faster than native R or Python code for running graph algorithms.
This methodology is similar to using efficient C and \Cpp{} libraries for
linear algebra primitives as is common in R and Python. In
Section~\ref{section:cifly:in:action}, we document the speed of our
implementation by comparing it to several state of the art causality suites.
Third, restricting the rule table ensures that the specified algorithm runs in
linear time (see Theorem~\ref{theorem:lintime}). These guarantees translate to
our CIfly implementation as the text-file rule tables prohibit the
specification of a super-linear-time algorithm. 

\begin{figure}[t]
  \begin{minipage}{0.49\textwidth}
    \begin{lstlisting}[language=Python, style=codeStyle]
import ciflypy as cifly
dsep_table_path = "path/to/ruletable"

def test_dsep(G, X, Y, Z):
    R = cifly.reach(G, {"X": X, "Z": Z}, dsep_table_path)
    return y not in R 
    \end{lstlisting}
  \end{minipage}
  \hfill
  \begin{minipage}{0.49\textwidth}
    \begin{lstlisting}[language=R, style=codeStyle]
library("ciflyr")
dsepTablePath <- "path/to/ruletable"

test_dsep <- function(G, X, Y, Z) {
    R <- reach(G, list("X" = X, "Z" = Z), dsepTablePath)
    return (!(y %in% R))}
    \end{lstlisting}
  \end{minipage}
  \caption{Python (left) and R code (right), respectively, for checking whether
  $X\perp_G  Y \mid W$ in an ADMG $G$ by using the CIfly rule table given on
  the right of Figure~\ref{figure:ruletableformat}.}
  \label{figure:admgcode}
\end{figure}

Rule table files in the CIfly software mirror the rule table format as introduced in the
previous section. They consist of two parts: the first one specifies
$\mathcal{E}$, $\ell$, $C$, $\mathcal{S}$ and $\mathcal{T}$  and the second
part describes $\phi$ as a table. In the first part, each line starts with a
keyword in capital letters followed by the corresponding specification.
Notably, instead of ``any'' we write ``\texttt{...}'' and combine the edges and
neighbor-types in one line, with edges separated by commas. For ordered edges, both corresponding 
neighbor-types must be provided, separated by whitespace. In the second
part, the columns are separated by '\texttt{|}' and the expression in the third
column may use the variables \texttt{current} and \texttt{next}, the set
symbols defined in the first part and the logical operators \texttt{in},
\texttt{not in}, \texttt{and}, \texttt{or} and \texttt{not} combined with
parentheses '\texttt{(}' and '\texttt{)}'. For further details regarding the rule
table syntax we refer readers to the documentation at \url{cifly.pages.dev}.

A rule table can be stored as a text file or be embedded directly into Python
or R code. In Figure~\ref{figure:admgcode}, we present code for checking
d-separation statements in ADMGs. It loads the rule table from
Figure~\ref{figure:ruletableformat} via its file path. The \texttt{ciflypy} and
\texttt{ciflyr} packages provide a single function named \texttt{reach}, that
has to be called with the graph, the sets $L$ and the rule table (or a path to
it) and returns the output nodes of the causal reduction. The graph and sets
are represented by Python dictionaries and R lists, respectively. Again, we
refer to the documentation for the details and all further options of the
\texttt{reach} function. 

\section{CIfly in Action}
\label{section:cifly:in:action}
In this section, we illustrate CIfly from a software-oriented perspective by
using it to develop algorithms for two established problems regarding covariate
adjustment. First, verifying whether an adjustment set in a CPDAG is valid
\citep{perkovic2018complete} and second, computing the parent adjustment
identification distance between two CPDAGs \citep{henckel2024graphical}, the
latter being closely related to the more commonly known structural intervention
distance \citep{peters2015structural}. We focus on established problems as it
allows us to benchmark CIfly algorithms against state-of-the art software
implementations. We consider CPDAGs to illustrate the versatility of CIfly. Our
aim is to showcase three core strengths of CIfly: methodological clarity,
computational efficiency and language agnosticism.

\subsection{Verifying Adjustment Set Validity}
\label{section:adjustment}
Covariate adjustment is a popular approach for inferring causal effects from
observational data. Deciding which covariates to adjust for is a difficult and
fundamental problem. In causal graphical models, we can characterize which sets
of covariates are suitable for inferring causal effects via adjustment and such
sets are called valid adjustment sets. The following result characterizes valid
adjustment sets in CPDAGs.

\begin{figure}[t]
  \begin{minipage}{0.54\textwidth}
    \begin{tabular}{rlrlr}
      \multicolumn{5}{@{}p{2in}@{}}{\textbf{Sets}:   $X$}                                \\
      \multicolumn{5}{@{}p{2in}@{}}{\textbf{Start}:  $(X, -, \text{init})$}              \\
      \multicolumn{5}{@{}p{2in}@{}}{\textbf{Return}: yield}                              \\ [0.25em] \toprule
      \multicolumn{2}{c}{current} & \multicolumn{2}{c}{next} &                           \\ \cmidrule(r){1-2}\cmidrule(l){3-4}
      n.-t.\              & color & n.-t.\              & color & rule                   \\ \midrule
      $-$                 & init  & $-$                 & yield & $\text{next} \notin X$ \\
      $-,\tikzrightarrow$ & yield & $-,\tikzrightarrow$ & yield & $\text{next} \notin X$ \\ \bottomrule
    \end{tabular}
  \end{minipage}
  \begin{minipage}{0.45\textwidth}
    \centering
    \vspace*{-1.145cm}
    \begin{tabular}{rrr}
      \multicolumn{3}{@{}p{2in}@{}}{\textbf{Sets}: $X,W$}                               \\
      \multicolumn{3}{@{}p{2in}@{}}{\textbf{Start}: (X, $\tikzleftarrow$)}              \\
      \multicolumn{3}{@{}p{2in}@{}}{\textbf{Return}: any}                               \\ [0.25em] \toprule
      \multicolumn{1}{c}{current} & \multicolumn{1}{c}{next} & \multicolumn{1}{r}{rule} \\ \midrule
      $\tikzleftarrow,-$          & $\tikzleftarrow,-$       & $\text{next} \notin W$   \\ \bottomrule
    \end{tabular}
  \end{minipage}
  \caption{CIfly rule table for computing all nodes violating Condition \ref{condition: amenability} of Theorem \ref{theorem:adjustment} with respect to $X$ (left) and $\possan_{G,W}(X)$ (right), respectively, in a CPDAG.
  }
  \label{figure:adjustment simple}
\end{figure}

\begin{restatable}[\citealt{perkovic2018complete}]{theorem}{CpdagAdjustmentCriterion}
  \label{theorem:adjustment}
  Let $X,Y$ and $W$ be pairwise disjoint node sets in a CPDAG $G$. Then, $W$ is a
  valid adjustment set relative to $(X,Y)$ in $G$ if and only if
  \begin{enumerate}
    \item all proper possibly directed paths from $X$ to $Y$ begin with a
      directed edge, \label{condition: amenability}
    \item $\forb_G(X, Y) \cap W = \emptyset$, and \label{condition: forbidden}
    \item $X \perp_{\tilde{G}} Y \mid W$, where $\tilde{G}$ is $G$ with all
      directed edges from nodes in $X$ to nodes in $causal_G(X,Y)$ removed.
    \label{condition: non-causal adjustment}
  \end{enumerate}
\end{restatable}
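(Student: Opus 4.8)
The plan is to reduce the statement to the known characterization of valid adjustment sets in DAGs and then translate the resulting conditions, which must hold simultaneously for every member of the Markov equivalence class, into the three graphical conditions on the CPDAG $G$. Recall that $W$ is a valid adjustment set relative to $(X,Y)$ in a CPDAG $G$ exactly when it is a valid adjustment set relative to $(X,Y)$ in every DAG $D$ in the Markov equivalence class represented by $G$ \citep{meek1995causal}. Hence it suffices to prove that the conjunction of Conditions~\ref{condition: amenability}--\ref{condition: non-causal adjustment} is equivalent to: for every such $D$, $W$ is a valid adjustment set relative to $(X,Y)$ in $D$.

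The first ingredient is the generalized adjustment criterion for DAGs: in a DAG $D$, $W$ is a valid adjustment set relative to $(X,Y)$ if and only if (i)~$W \cap \forb_D(X,Y) = \emptyset$ and (ii)~every proper non-causal path from $X$ to $Y$ in $D$ is blocked by $W$. I would first use this to show that Condition~\ref{condition: amenability} is necessary: if $G$ is not amenable, an undirected edge starting a proper possibly directed path from $X$ to $Y$ can be oriented towards $X$ in one $D \in [G]$ and away from $X$ in another, which produces a node that becomes a forbidden descendant (hence must be excluded from $W$) in the first DAG and a non-collider on an otherwise open non-causal path (hence must be included in $W$) in the second, so no $W$ can be valid in all of $[G]$. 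Assuming amenability, the remaining characterizations become uniform over $[G]$.

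Next I would establish the two translations. First, under Condition~\ref{condition: amenability}, one shows $\forb_G(X,Y) = \bigcup_{D \in [G]} \forb_D(X,Y)$; the key sub-fact is that every possibly directed path of $G$ can be realized as a directed path in some $D \in [G]$, so that $\possde_G$ and $\causal_G$ are the unions of the corresponding DAG quantities, whence Condition~\ref{condition: forbidden} is precisely ``(i) holds for every $D \in [G]$''. Second, deleting the directed edges from $X$ into $\causal_G(X,Y)$ in the CPDAG mirrors, at the level of the equivalence class, deleting from each $D$ the first edge of each of its proper causal paths; consequently $X \perp_{\tilde G} Y \mid W$ holds if and only if for every $D \in [G]$ every proper non-causal path from $X$ to $Y$ is blocked by $W$, i.e. ``(ii) holds for every $D \in [G]$''. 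Combining these equivalences with the DAG criterion proves the theorem.

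The main obstacle is this last translation together with the interplay among the three conditions. One must show that the surgery $G \mapsto \tilde G$ is compatible with the equivalence class --- that $X \perp_{\tilde G} Y \mid W$ agrees with ``$X \perp_{\tilde D} Y \mid W$ for all $D \in [G]$'', where $\tilde D$ is $D$ with the first edges of its proper causal paths removed --- and that this deletion neither unblocks a non-causal path nor spuriously blocks one. This requires careful path-surgery arguments exploiting CPDAG structure --- Meek's orientation rules, chordality of the undirected components, and the structure of possibly directed paths --- and it is exactly here that Conditions~\ref{condition: amenability} and~\ref{condition: forbidden} are invoked, to exclude the degenerate cases in which a possibly directed walk through a causal node fails to yield a proper causal path or in which orienting undirected edges turns a non-causal path into a causal one. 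One could instead argue directly through a CPDAG-level manipulation in the spirit of do-calculus, avoiding an explicit enumeration of DAGs, but the bookkeeping for undirected edges makes the equivalence-class reduction the more transparent route; in either case, the detailed verification of these path arguments is what \citet{perkovic2018complete} carry out.
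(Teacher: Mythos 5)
First, a point of reference: the paper does not prove this theorem at all. It is imported verbatim from \citet{perkovic2018complete} --- the theorem header carries the citation --- and no proof appears anywhere in the main text or the appendices. So there is no in-paper argument to compare your proposal against; the only meaningful comparison is with the original source, whose strategy your sketch echoes.

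As a proof plan, the reduction to the DAG case over the Markov equivalence class is the standard route. But as written it is a plan rather than a proof: the two statements carrying all the weight are asserted, not established. (a) The identity $\forb_G(X,Y)=\bigcup_{D\in[G]}\forb_D(X,Y)$ under amenability is nontrivial, and your supporting sub-fact --- that \emph{every} possibly directed path of $G$ can be realized as a directed path in some $D\in[G]$ --- is stronger than what holds: the standard lemma only guarantees that if $y\in\possde_G(x)$ then \emph{some} DAG in $[G]$ contains a directed path from $x$ to $y$ (obtained from a shortest or unshielded possibly directed path, not an arbitrary one), and the corresponding union formula for $\causal_G(X,Y)$ additionally requires a single DAG realizing the $X$-to-$v$ and $v$-to-$Y$ segments simultaneously, which needs its own argument. (b) The equivalence between $X\perp_{\tilde G}Y\mid W$ --- where d-separation in a CPDAG is itself defined via definite-status walks, a subtlety your sketch does not address --- and ``every proper non-causal path from $X$ to $Y$ is blocked by $W$ in every $D\in[G]$'' is the crux of the theorem, and you explicitly defer it, writing that the detailed verification is what \citet{perkovic2018complete} carry out. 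Ending the argument there makes the proposal a citation rather than a proof, which is precisely what the paper itself does more economically. The outline is sound, but it does not constitute an independent derivation of the result.
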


\begin{table}[t]
  \begin{center}
    \begin{tabular}{rrr}
      \multicolumn{3}{@{}p{2in}@{}}{\textbf{Sets}: $X,W,C$}                                                                                      \\
      \multicolumn{3}{@{}p{2in}@{}}{\textbf{Start}: (X, $\tikzleftarrow$)}                                                                       \\
      \multicolumn{3}{@{}p{2in}@{}}{\textbf{Return}: any}                                                                                        \\ [0.25em] \toprule
      \multicolumn{1}{c}{current} & \multicolumn{1}{c}{next} & \multicolumn{1}{r}{rule}                                                          \\ \midrule
      $\tikzrightarrow$           & $\tikzleftarrow$         & $\text{current} \in W$                                                            \\
      $-$                         & $-$                      & $\text{current} \notin W$                                                         \\
      any                         & $\tikzrightarrow$        & $\text{current} \notin W \text{ and (current} \notin X \text{ or next} \notin C)$ \\
      $\tikzleftarrow$            & any                      & $\text{current} \notin W$                                                         \\ \bottomrule
    \end{tabular}
  \end{center} \label{table:non-causal adjustment}
  \caption{CIfly rule table for finding all nodes violating Condition
  \ref{condition: non-causal adjustment} of Theorem \ref{theorem:adjustment}
  with respect to $X,W$ and $C\coloneq\causal_G(X,Y)$ in a CPDAG.}
\end{table}

We are interested in the algorithmic problem of deciding whether $W$ is a valid
adjustment set relative to $(X,Y)$ in $G$.  \cite{perkovic2018complete} provide
two implementations to solve this problem: one comparably slow implementation
called \texttt{gac} provided in \texttt{pcalg}~\citep{kalisch2012causal} and
one more efficient version called \texttt{isAdjustmentSet} provided in
\texttt{DAGitty}~\citep{textor2016robust}. We now use CIfly to develop an
algorithm for the same task. It proceeds by computing the sets $C_1$,
consisting of the nodes $y$ violating Condition \ref{condition: amenability}
relative to $X$, $C_2 = \forb_G(X, Y) $, and finally, $C_3$, consisting of the
nodes $y$ violating Condition \ref{condition: non-causal adjustment} relative
to $X$ and $W$. We then check whether $Y \cap (C_1 \cup C_3) = \emptyset$ and
$W \cap C_2 =\emptyset$. If both are true, then we return that $W$ is a valid
adjustment. 

To compute $C_1$, we need to find all $y$ such that there exists a proper
possibly directed path from some $x \in X$ to $y$ that starts with an
undirected edge. This is very similar to standard reachability but there are two additional
subtleties. First, the path's starting edge has to be undirected, whereas all
other edges may be either undirected or directed away from $x$. Second, we need
to track whether we encounter another node in $X$. In CIfly we can accommodate
the former, by introducing a separate color which we use for initializing the
reachable algorithm with a special starting rule. We can accommodate the latter
by adding a check whether the current node is in $X$ to the transition rules.
The resulting rule table is given on the left of Figure \ref{figure:adjustment
simple}. 

To compute $C_2$, we first find $\causal_G(X,Y)=\possan_{G,W}(X) \cap
\possde_G(Y)$. We can compute the unconventional $\possan_{G,W}(X)$ with a
simple CIfly algorithm that follows up possibly directed paths that do not
contain nodes in $W$. We provide the corresponding rule table on the right of
Figure \ref{figure:adjustment simple}. With a similar rule table, we can also
compute $\possde_G(Y)$. Once we have learned $\causal_G(X,Y)$, we can use the
latter table again to compute $C_2=\forb_G(X,Y)$. Overall, we therefore rely on
three simple CIfly calls to compute $\forb_G(X,Y)$. 

Finally, we compute $C_3$ by re-implementing the Bayes-Ball algorithm for
CPDAGs \citep{van2020algorithmics} in CIfly with one modification: we ignore
edges from $X$ to $\causal_G(X,Y)$. We provide the rule table for this in Table
\ref{table:non-causal adjustment}. The algorithm relies on a characterization
of d-separation in CPDAGs via walks that do not contain segments of the form
$\tikzrightarrow v \ -$ of $-\ v \tikzleftarrow$ called almost-definite-status
walks. An almost-definite-status walk is open given $W$ if all colliders are in
$W$ and none of the other nodes are. The rules in Table \ref{table:non-causal adjustment} 
check the former in Rule 1 and the latter in Rules 2, 3 and 4. For
more details regarding this characterization of d-separation in CPDAGs, 
see Section 3.1.2 of \citet{van2020algorithmics}. Notably, we avoid
constructing the graph $\tilde{G}$ by adapting Rule 3 to ignore edges from $X$
to $\causal_G(X,Y)$. This is not necessary but good practice as it is easy to
inadvertently implement a super-linear algorithm for computing $\tilde{G}$.

The algorithm uses five CIfly calls in total. In each, we exploit the
versatility of CIfly in terms of transition functions and, when necessary, use
colors to seamlessly solve algorithmic tasks that are similar to
reachability. The ubiquity of such problems makes
CIfly broadly applicable in graphical causal inference. We provide R code for
this algorithm in Figure~\ref{figure:gacalgorithm} and Python code using the
same rule tables under~\url{cifly.pages.dev}; the first Python implementation
for this task made trivial by CIfly.

\begin{figure}
  \begin{lstlisting}[language=R, style=codeStyle]
library(cifly)
# cpdag is the graph and X, Y, W are sets
# notAmenable, possibleAncestors, ... are rule tables
cpdagAdjustmentCheck <- function(cpdag, X, Y, W) {
  notAmenable <- reach(cpdag, list("X" = X), notAmenable)
  anc <- reach(cpdag, list("X" = Y, "W" = X), possibleAncestors)
  des <- reach(cpdag, list("X" = X), possibleDescendants)
  causal <- intersect(anc, des)
  forb <- reach(cpdag, list("X" = causal), possibleDescendants)}
  dsConn <- reach(cpdag, list("X" = X, "W" = W), backdoorConnected)
  return (length(intersect(notAmenable, Y)) == 0 && length(intersect(forb, W)) == 0 && length(intersect(dsConn, Y)) == 0)}
  \end{lstlisting}
  \caption{R code for verifying whether $W$ is a valid adjustment set relative
  to $(X,Y)$ in a CPDAG. The code relies on five separate CIfly \texttt{reach}
  calls chained with basic logical and set operators.}
  \label{figure:gacalgorithm}
\end{figure}

We compare the run-time performance of our new implementation to both
\texttt{gac} and \texttt{isAdjustmentSet} in a small simulation study. For the
experiments we generate CPDAGs with expected degree 4, choose two nodes $x$ and
$y$ randomly and draw a $W$ of size 5; exact details are provided in
Appendix \ref{appendix:adjustment simulation}. We then run the function
\texttt{gac} in \texttt{pcalg}, \texttt{isAdjustmentSet} in \texttt{DAGitty}
and \texttt{cpdagAdjustmentCheck} from Figure~\ref{figure:gacalgorithm} to
check whether $W$ is a valid adjustment relative to $(x,y)$. The resulting
run times given in Table \ref{table:runtimes} are averages over $20$
repetitions of this procedure. They show that the CIfly run time grows roughly
linearly in the instance size and is, for larger instances, up to five orders
of magnitudes faster than the \texttt{pcalg} implementation, whose run time
clearly grows super-linear, and a factor 15 faster than \texttt{DAGitty}. 

\subsection{Computing the Parent Adjustment Identification Distance}
\label{section:parent adjustment}
Verifying whether an adjustment set is valid is also a crucial ingredient in a
more complex algorithmic task: computing adjustment identification distances
between pairs of graphs \citep{henckel2024adjustment}. These are a class of
distances between causal graphs that can be used to benchmark the performance
of causal discovery algorithms. Informally, they count the number of mistakes
one would make if one used a candidate graph $G_{\text{guess}}$ to select valid
adjustment sets but the ground truth graph is actually $G_{\text{true}}$. Here,
we focus on the parent adjustment identification distance for CPDAGs, where the
selected adjustment sets are the parents of the treatment.
\citet{henckel2024adjustment} provide an $O(p \cdot (p + m))$ algorithm for
computing this distance. They implement this algorithm in their software
package \texttt{gadjid} with a focus on run-time performance. In contrast to
common practice in the causal inference literature, this algorithm is
implemented in a \emph{compiled} language, namely Rust, with a Python wrapper
being offered for accessibility to end users, similar to CIfly. We re-implement
this algorithm using CIfly with details provided in
Appendix~\ref{appendix:gadjid} and Python code for the overall algorithm
provided in Figure \ref{cifly:code:parent:aid}.

\begin{table}
  \centering
  \begin{tabular}{cccc} \toprule
    \multicolumn{4}{c}{CPDAG adjustment criterion}               \\ \midrule
          & \multicolumn{3}{c}{Singleton $x$, $y$ and $|W| = 5$} \\ \cmidrule{2-4}
    $p$   & \texttt{pcalg} & \texttt{DAGitty} & CIfly            \\ \midrule
    $100$ & $2.60$        & $0.0235$         & $0.0006$         \\
    $200$ & $13.61$       & $0.0369$         & $0.0009$         \\
    $300$ & $40.53$       & $0.0531$         & $0.0012$         \\
    $400$ & $87.84$       & $0.0647$         & $0.0013$         \\
    $500$ & $156.80$      & $0.0744$         & $0.0014$         \\ \bottomrule
  \end{tabular}\hspace{1cm}
  \begin{tabular}{crrrr} \toprule
    \multicolumn{5}{c}{Parent adjustment distance}                 \\ \midrule
          & \multicolumn{2}{c}{Sparse} & \multicolumn{2}{c}{Dense} \\ \cmidrule(r){2-3}\cmidrule(l){4-5}
    $p$   & \texttt{gadjid} & CIfly    & \texttt{gadjid} & CIfly   \\ \midrule
    $100$ & $0.002$         & $0.005$  & $0.007$         & $0.008$ \\
    $200$ & $0.005$         & $0.012$  & $0.067$         & $0.039$ \\
    $300$ & $0.009$         & $0.024$  & $0.226$         & $0.116$ \\
    $400$ & $0.013$         & $0.038$  & $0.586$         & $0.238$ \\
    $500$ & $0.020$         & $0.055$  & $1.140$         & $0.426$ \\ \bottomrule
  \end{tabular}
  \caption{A run-time comparison of 
    CIfly, \texttt{DAGitty} and \texttt{pcalg}
    for checking the CPDAG adjustment criterion (left), and of CIfly and
    \texttt{gadjid} for computing the parent adjustment identification
    distance (right). Run times are reported in seconds.
  }
  \label{table:runtimes}
\end{table}

We compare the single-thread run-time performance of our CIfly re-implementation to
\texttt{gadjid} in a small simulation study. For the experiments, we generate CPDAGs
with $d=4$ (sparse graphs) and $d = p / 10$ (dense graphs) for $p$ equal to
$100$, $200$, $300$, $400$ and $500$; exact details are provided in Appendix
\ref{appendix:adjustment simulation}. We then run the functions \texttt{gadjid}
and \texttt{parentAdjustmentDistance} from Figure \ref{cifly:code:parent:aid}
to compute the parent adjustment distance between two CPDAGs. The resulting
run times, given on the right of Table \ref{table:runtimes}, are averages over
20 repetitions of this procedure. They show that CIfly is 2-3 times slower than
\texttt{gadjid} on sparse instances but slightly faster for dense ones, where
the number of edges grows quadratically in $p$. This shows we can match the
performance of optimized implementations with CIfly, even though CIfly is an
all-purpose tool and not optimized for any specific task. Users can therefore
focus exclusively on the causal path-based logic of the tables while still
reaping the rewards of an optimized implementation in terms of performance.
Moreover, by relying on the \texttt{ciflypy} and \texttt{ciflyr} packages
language porting is trivialized. In fact, the \texttt{gadjid} package does not
offer an R wrapper and therefore our CIfly implementation provides the first R
code for computing the parent adjustment distance for CPDAGs. 

\section{Developing Algorithms for Conditional Instrumental Sets in CIfly}
\label{section:iv}
In this section, we use CIfly to develop novel linear-time algorithms for
problems regarding \emph{instrumental variables}. Instrumental variables are
auxiliary covariates that can be used to estimate causal effects in the
presence of unmeasured confounding \citep{bowden1990instrumental}. They are
popular in econometrics and the health sciences
\citep{angrist1996identification,hernan2006instruments}. Considerable effort
has been devoted to understanding under what exact graphical conditions a
two-tuple $(Z,W)$ of covariate sets is a valid conditional instrumental set, in
the sense that it can be used to identify a causal effect from observational
data \citep{brito2002generalized,pearl2009causality}. There also exists an
$O(p \cdot (p+m))$ algorithm \citep{van2015efficiently,van2020algorithmics} for
finding a tuple satisfying the criterion by \citet{pearl2009causality}.
Recently, \citet{henckel2024graphical} have developed two novel results. First,
a necessary and sufficient graphical validity criterion for conditional
instrumental sets. Second, a procedure to compute a specific tuple $(\zopt,
\wopt)$ that under mild conditions is guaranteed to be valid and has a strong
statistical efficiency guarantee they call graphical optimality. However, no
efficient algorithms currently exist to either verify the new complete validity
criterion or compute the graphically optimal tuple $(\zopt, \wopt)$. In the
following, we use CIfly to develop linear-time algorithms for both tasks and
close this gap in the algorithmic literature. We also adapt the algorithm by
\citet{van2015efficiently} for finding valid conditional instrumental sets to
the criterion by \citet{henckel2024graphical} to obtain the first sound and
complete algorithm for this task. For space reasons we defer the latter
algorithm to Appendix~\ref{appendix:iv:sound:and:complete:finder} and this
section's proofs to Appendix~\ref{appendix:proofIV}. In
Appendix~\ref{appendix:iv:simulations}, we compare these algorithms with the
algorithm implemented in the \texttt{DAGitty} software package in a simulation
study, focusing on the practical run time and how often a valid conditional
instrument is returned. 

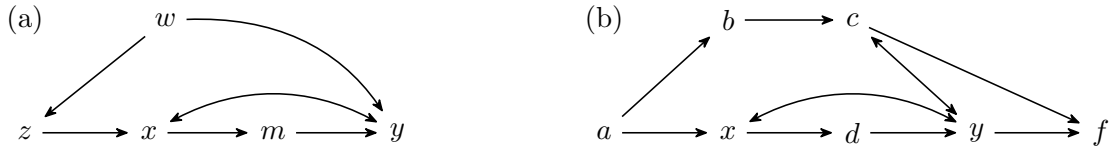
\begin{figure}[t]
  \centering
  \begin{tikzpicture}[xscale=1.1]
    \node (x)     at (0,0) {$x$};
    \node (m)     at ($(x)+(1.5,0)$) {$m$};
    \node (y)     at ($(x)+(3,0)$) {$y$};
    \node (z)     at ($(x)+(-1.5,0)$)   {$z$};
    \node (w)     at ($(x)+(0.2,1.5)$)   {$w$};
    \node (label) at ($(x) + (-1.5,1.5)$) {(a)};

    \path[arc]   (x) edge            (m);
    \path[arc]   (m) edge            (y);
    \path[arc]   (z) edge            (x);
    \path[arc]   (w) edge            (z);
    \path[arc]   (w) edge[bend left] (y);
    \path[bidir] (x) edge[bend left] (y);

    \node (x)     at (7,0)                {$x$};
    \node (m)     at ($(x) + (1.5,0)$)    {$d$};
    \node (y)     at ($(m) + (1.5,0)$)    {$y$};
    \node (d)     at ($(y) + (1.5,0)$)    {$f$};
    \node (w1)    at ($(x) + (0,1.5)$)    {$b$};
    \node (z)     at ($(x) + (-1.5,0)$)   {$a$};
    \node (w2)    at ($(x) + (1.5,1.5)$)  {$c$};
    \node (label) at ($(x) + (-1.5,1.5)$) {(b)};

    \path[arc]   (x)  edge            (m);
    \path[arc]   (m)  edge            (y);
    \path[arc]   (y)  edge            (d);
    \path[bidir] (x)  edge[bend left] (y);
    \path[arc]   (z)  edge            (x);
    \path[arc]   (z)  edge            (w1);
    \path[arc]   (w1) edge            (w2);
    \path[bidir] (w2) edge            (y);
    \path[arc]   (w2) edge            (d);
  \end{tikzpicture}
  \caption{Two ADMGs discussed in Example~\ref{example:validIV} and
  Example~\ref{example:optimalIV}, respectively.}
  \label{figure:graphsIV}
\end{figure}

\subsection{Conditional Instrumental Set Preliminaries}
\label{subsection:prelimsIV}
The graphical literature on conditional instrumental sets generally studies
them in the context of linear structural equation models with correlated errors
\citep{brito2002generalized,henckel2024graphical}. Such a model can be
represented by an ADMG, with directed edges representing direct causal effects
and bidirected edges latent variable induced dependencies between errors. For a
structural equation model, we can rigorously define the causal effect of a
treatment $x$ on an outcome $y$. Given the corresponding ADMG $G$, we can then
define valid conditional instrumental sets as those tuples $(Z,W)$ that allow
for consistent estimation of this causal effect for almost all models with ADMG $G$.
It it also possible to investigate which conditional instrumental sets result
in statistically efficient estimators for the target causal effect. We can
describe which conditional instrumental sets are valid or statistically
efficient using graphical criteria in $G$. We now introduce two such criteria. 
As only the graphical criteria are relevant for algorithmic development, we
refer readers interested in the precise definitions of these object to the
original publication by \citet{henckel2024adjustment} or, for a wider discussion
of causal models, to \citet{pearl2009causality}.

\begin{theorem}{\normalfont \textbf{\citep{henckel2024graphical}}}
  \label{theorem:validcis}
  Let $\{x,y\},Z$ and $W$ be pairwise disjoint node sets in an ADMG $G$. Then, $(Z,W)$
  is a valid conditional instrumental set relative to $(x,y)$ in $G$ if, and
  only if,
  \begin{enumerate*}[label=(\roman*)]
    \item $(Z \cup W) \cap \forb_G(x, y) = \emptyset$, \label{condition:validcis:forb}
    \item $x \not\perp_{G} Z \mid W$ and               \label{condition:validcis:ztox}
    \item $y \perp_{\tilde{G}} Z \mid W$,              \label{condition:validcis:znottoy}
  \end{enumerate*}
  where $\tilde{G}$ is $G$ with all directed edges from $x$ to a node in
  $\causal_G(x,y)$ removed. 
\end{theorem}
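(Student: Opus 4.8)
The plan is to argue in the language of linear structural equation models with correlated errors and to reduce the statement to Wright's trek rule, which expresses every covariance in a linear SEM --- and, after residualizing on a set, every partial covariance --- as a sum over the treks (paths with no collider) that are active given that set. First I would fix the notion of validity: $(Z,W)$ is a valid conditional instrumental set relative to $(x,y)$ if the conditional instrumental functional --- obtained by partialling $W$ out of $x$, $y$ and $Z$ and then applying the generalized instrumental-variables (two-stage least squares) formula with instruments $Z$ and treatment $x$ --- equals the total causal effect $\tau_{yx}$ of $x$ on $y$ for all but a measure-zero set of structural coefficients consistent with $G$. The algebraic target is then an identity $\Sigma_{Zy\cdot W}=\tau_{yx}\,\Sigma_{Zx\cdot W}+\rho$, where $\Sigma_{\bullet\cdot W}$ denotes a partial covariance and $\rho$ collects the contributions of treks between $Z$ and $y$ that do not route the effect through $x$; Conditions \ref{condition:validcis:forb}--\ref{condition:validcis:znottoy} are exactly what force $\rho = 0$ while keeping $\Sigma_{Zx\cdot W}$ generically nonzero, so that the functional is well posed and returns $\tau_{yx}$.

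For the ``if'' direction, assume Conditions \ref{condition:validcis:forb}--\ref{condition:validcis:znottoy}. Condition \ref{condition:validcis:forb}, that $(Z\cup W)\cap\forb_G(x,y)=\emptyset$ with $\forb_G(x,y)=\de_G(\causal_G(x,y))\cup\{x\}$ (dropping the $\mathrm{poss}$ prefixes since $G$ is an ADMG), guarantees that no node of $Z$ is causally downstream of a node on a proper causal path from $x$ to $y$ and that conditioning on $W$ neither partially blocks such a path nor introduces a selection-type bias, so the trek expansion of $\Sigma_{Zy\cdot W}$ is clean and $\tilde{G}$ is well behaved. Applying the trek rule in $\tilde{G}$, I would write $\Sigma_{Zy\cdot W}$ as a sum over treks between $Z$ and $y$ that are active given $W$; by Condition \ref{condition:validcis:znottoy}, $y\perp_{\tilde{G}}Z\mid W$, every such trek in $\tilde{G}$ is blocked, hence back in $G$ every surviving trek must traverse a deleted edge $x\to c$ with $c\in\causal_G(x,y)$. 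Splitting such a trek at $x$, its $Z$-to-$x$ part sums to $\Sigma_{Zx\cdot W}$ and its $x$-to-$y$ part, which runs through $\causal_G(x,y)$, sums to $\tau_{yx}$, so $\Sigma_{Zy\cdot W}=\tau_{yx}\,\Sigma_{Zx\cdot W}$. Condition \ref{condition:validcis:ztox}, $x\not\perp_G Z\mid W$, forces $\Sigma_{Zx\cdot W}\neq 0$ for generic coefficients, so the two-stage least squares normal equations are solvable and return $\tau_{yx}$.

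For the ``only if'' direction I would show that dropping any single condition destroys validity, by exhibiting a positive-measure set of coefficient vectors consistent with $G$ on which the functional differs from $\tau_{yx}$. If Condition \ref{condition:validcis:ztox} fails then $\Sigma_{Zx\cdot W}\equiv 0$ and the functional is ill posed. If Condition \ref{condition:validcis:znottoy} fails there is a trek between some $z\in Z$ and $y$ active given $W$ in $\tilde{G}$; choosing coefficients so that exactly this trek contributes a nonzero, uncancelled amount perturbs $\Sigma_{Zy\cdot W}$ while leaving $\Sigma_{Zx\cdot W}$ and $\tau_{yx}$ generically unchanged, breaking the identity. If Condition \ref{condition:validcis:forb} fails then, since $x\notin Z\cup W$ by disjointness, the violating node $v$ lies in $\de_G(c)$ for some $c\in\causal_G(x,y)$; I would treat the cases $v\in Z$ and $v\in W$ separately, tracing the offending trek through $c$ to show that either $\tau_{yx}$ is no longer recovered by the functional or a bias term survives in $\rho$. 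All genericity claims reduce to the standard fact that a polynomial in the structural parameters that is not identically zero vanishes only on a measure-zero set.

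The main obstacle is the necessity direction, and within it the bookkeeping for Condition \ref{condition:validcis:forb}: making precise how a forbidden node in $Z$ or in $W$ corrupts the functional requires carefully interleaving the d-separation and trek calculus in $G$ and in $\tilde{G}$ with the definitions of $\causal_G(x,y)$ and $\forb_G(x,y)$, and then constructing explicit coefficient choices isolating the induced bias --- whose existence again relies on polynomial-non-vanishing arguments. The sufficiency direction is more mechanical, but still needs care in the trek factorization at $x$ to avoid double-counting treks and to handle the ``for almost every choice'' quantifier uniformly, for instance by checking that $\Sigma_{WW}$ is generically invertible so that all partial covariances in the argument are defined.
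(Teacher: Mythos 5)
This statement is not proven in the paper at all: it is quoted verbatim from \citet{henckel2024graphical} (note the citation in the theorem header), and the paper treats it as a black box --- the proof of Theorem~\ref{theorem:walksIV} in Appendix~\ref{appendix:proofIV} opens with ``As \citet{henckel2024graphical} have proven Theorem~\ref{theorem:validcis}, it suffices to show \dots''. So there is no in-paper proof to compare against; what you have written is an attempt to reprove the external result from scratch.

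Judged on its own terms, your sketch has the right overall shape --- define validity via the probability limit of the two-stage least squares functional, reduce to the identity $\Sigma_{Zy.W}=\tau_{yx}\,\Sigma_{Zx.W}$ together with generic non-vanishing of $\Sigma_{Zx.W}$ --- but it contains two genuine gaps. First, Wright's trek rule expands \emph{plain} covariances as sums over treks; it does not deliver ``$\Sigma_{Zy.W}$ as a sum over treks active given $W$''. The partial covariance involves $\Sigma_{WW}^{-1}$ and is a rational function of the structural parameters, so relating it to walks open given $W$ requires either determinantal (Gessel--Viennot-type) trek formulas or the separate theorem that d-separation implies vanishing partial covariance; the clean trek-by-trek factorization at $x$ on which your sufficiency argument rests does not survive this passage without substantial extra machinery, and you must also rule out cancellation between the ``through-$x$'' and ``residualized'' contributions. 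Second, the necessity direction --- precisely the completeness half that distinguishes this criterion from its predecessors --- is reduced in your sketch to ``choose coefficients so that exactly this trek contributes a nonzero, uncancelled amount''. That presupposes that the relevant numerator polynomial is not identically zero, and establishing this non-vanishing (especially in the forbidden-node cases, where the bias enters through $\Sigma_{WW}^{-1}$ rather than through a single trek, and where one must handle $v\in Z$ and $v\in W$ with the other two conditions possibly still holding) is the hard part of the theorem and is left unaddressed. As it stands the proposal is a plausible roadmap, not a proof.
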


To provide the definition of $(\zopt,\wopt)$ and the conditions under which it
is valid and graphically optimal, let $\dis_{G, W}(u)$ denote the set of all
nodes connected to $u$ in $G$ via a path of bidirected edges with no nodes in
$W$ and $\displus_{G, W}(u) \coloneq (\dis_{G, W}(u) \cup \pa_G(\dis_{G,
W}(u))) \setminus W$. 

\begin{theorem}{\normalfont \textbf{\citep{henckel2024graphical}}}
  \label{theorem:optimalcis}
  Let $x$ and $y$ be nodes in an ADMG $G$ such that $\de_G(x) = \{x, y\}$. Let
  $\wopt \coloneq \displus_{G, \{x\}}(y) \setminus \{x, y\}$ and $\zopt
  \coloneq \displus_{G, \{y\}}(x) \setminus (W^{\mathrm{opt}} \cup \{x, y\})$.
  \begin{enumerate}
    \item \label{cond:optimalcis:sound} If $\zopt \neq \emptyset$, then
      $(\zopt, \wopt)$ is a valid conditional instrumental set relative to $(x,
      y)$ in $G$.
    \item \label{cond:optimalcis:opt} If $\zopt \cap (\pa_{G}(x) \cup
      \sib_{G}(x)) \neq \emptyset$, then $(\zopt, \wopt)$ is also graphically
      optimal.
  \end{enumerate}
\end{theorem}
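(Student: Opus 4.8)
The plan is to handle the two parts by quite different means. For Part~\ref{cond:optimalcis:sound} I would verify, in turn, the three conditions of Theorem~\ref{theorem:validcis} for the pair $(\zopt,\wopt)$, making full use of the structural hypothesis $\de_G(x)=\{x,y\}$. This hypothesis is very rigid: since $y\in\de_G(x)$ and $\de_G(x)$ has only two elements, the only directed path from $x$ to $y$ is the edge $x\tikzrightarrow y$ and $x$ has no other children, so by acyclicity $\de_G(y)=\{y\}$, hence $\causal_G(x,y)=\an_{G,\{x\}}(y)\cap\de_G(x)=\{y\}$ and $\forb_G(x,y)=\de_G(\{y\})\cup\{x\}=\{x,y\}$; in particular the modified graph $\tilde G$ is $G$ with the single edge $x\tikzrightarrow y$ deleted, so $\de_{\tilde G}(x)=\{x\}$.

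Given these computations, condition~\ref{condition:validcis:forb} is immediate, since $\zopt$ and $\wopt$ are defined to exclude $\{x,y\}=\forb_G(x,y)$. For condition~\ref{condition:validcis:znottoy}, $y\perp_{\tilde G}\zopt\mid\wopt$, I would show that $\wopt$ blocks every walk out of $y$ in $\tilde G$. In $\tilde G$ the node $y$ has no children, so such a walk starts $y\tikzleftrightarrow u$ or $y\tikzleftarrow u$, and in both cases $u\in\dis_{G,\{x\}}(y)\cup\pa_G(\dis_{G,\{x\}}(y))$; hence $u\in\wopt$ unless $u=x$, in which case the walk is blocked at $x$ because $x$ is necessarily a collider there (it has no outgoing edge in $\tilde G$) and $\de_{\tilde G}(x)=\{x\}$ misses $\wopt$. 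An inductive continuation shows that an open walk may pass a node of $\wopt$ only at a collider position, which forces it to remain inside $\dis_{G,\{x\}}(y)\cup\pa_G(\dis_{G,\{x\}}(y))$; since every node of this set other than $x$ and $y$ lies in $\wopt$ whereas every $z\in\zopt$ lies outside $\wopt$, no open walk reaches $\zopt$. For condition~\ref{condition:validcis:ztox}, $x\not\perp_G\zopt\mid\wopt$, I would exhibit an m-connecting walk, using $\zopt\neq\emptyset$: every $z\in\zopt$ lies in $\dis_{G,\{y\}}(x)$ or is a parent of a node in it, so there is a walk from $x$ to some node of $\zopt$ that follows bidirected edges inside the district $\dis_{G,\{y\}}(x)$, optionally closed off by one directed edge into the target; choosing it of minimal length, every intermediate node lies in $\dis_{G,\{y\}}(x)\setminus(\zopt\cup\{x\})\subseteq\wopt$ and occupies a collider position, so conditioning on $\wopt$ leaves the walk open.

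For Part~\ref{cond:optimalcis:opt} the argument is of a different nature, since graphical optimality is a statement about the asymptotic variance of the resulting conditional instrumental-variable estimator rather than about m-separation. Here the plan is to start from the closed-form expression for this asymptotic variance in the linear structural equation model and from the graphical efficiency-comparison results of \citet{henckel2024graphical} that characterise when one valid conditional instrumental set is at least as efficient as another, and then to show that $(\zopt,\wopt)$ is maximal in this ordering. Intuitively $\wopt$ already contains every node that can help explain the confounding between $x$ and $y$ (exactly the district of $y$ avoiding $x$ together with its parents), while $\zopt$ captures all of the remaining variation of $x$ usable as an instrument; the additional hypothesis $\zopt\cap(\pa_G(x)\cup\sib_G(x))\neq\emptyset$ is what rules out a strictly more efficient competitor, by certifying that $\zopt$ is ``directly attached'' to $x$. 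I expect this efficiency comparison, together with the delicate case analysis around the $\pa_G(x)\cup\sib_G(x)$ condition, to be the main obstacle: once the structural hypothesis is digested, Part~\ref{cond:optimalcis:sound} is careful bookkeeping with districts and collider positions, whereas Part~\ref{cond:optimalcis:opt} needs the full statistical machinery for ranking IV estimators.
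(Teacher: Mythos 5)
This theorem is imported verbatim from \citet{henckel2024graphical}; the paper you are working from states it with a citation and gives no proof of its own, so there is nothing in-paper to compare your argument against. Judged on its own terms, your treatment of Part~\ref{cond:optimalcis:sound} is a sound plan and the structural computations it rests on are correct: $\de_G(x)=\{x,y\}$ does force $x\tikzrightarrow y$ to be the unique directed path from $x$ to $y$, hence $\causal_G(x,y)=\{y\}$, $\forb_G(x,y)=\{x,y\}$, and $\tilde G=G$ minus the single edge $x\tikzrightarrow y$. Condition~\ref{condition:validcis:forb} is then immediate, and your district/collider bookkeeping for Conditions~\ref{condition:validcis:ztox} and~\ref{condition:validcis:znottoy} is the right argument: the key observation that any node of $\dis_{G,\{y\}}(x)\setminus\{x\}$ not in $\wopt$ must itself lie in $\zopt$ (so a minimal connecting walk has all intermediate nodes in $\wopt$ at collider positions) is exactly what makes the $x\not\perp_G\zopt\mid\wopt$ step work, and the blocking of walks out of $y$ in $\tilde G$ uses correctly that $y$ has no children and $x$ has none in $\tilde G$. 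The ``inductive continuation'' would need to be written out with the case split (bidirected step stays in $\dis_{G,\{x\}}(y)$, directed step $v\tikzrightarrow u$ puts $v$ in $\pa_G(\dis_{G,\{x\}}(y))$ at a non-collider position), but no step fails.

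Part~\ref{cond:optimalcis:opt} is a genuine gap, not a proof. Graphical optimality is a statement about the asymptotic variance of the two-stage least-squares estimator across \emph{all} valid conditional instrumental sets, and it is not even defined in this paper --- the paper explicitly defers the definition to \citet{henckel2024graphical}. Your plan is to ``start from the closed-form expression for the asymptotic variance'' and invoke ``the graphical efficiency-comparison results of \citet{henckel2024graphical}'', which is precisely the content of the theorem you are trying to prove; as written this is circular, or at best a pointer to where the real work lives. There is no purely graphical or m-separation argument that substitutes for it: one must compare $\mathrm{avar}(\hat\tau_{yx}^{Z.W})$ across candidate tuples, and the role of the hypothesis $\zopt\cap(\pa_G(x)\cup\sib_G(x))\neq\emptyset$ only emerges from that variance analysis. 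Since the present paper treats this entire theorem as an external citation, the honest conclusion is that Part~\ref{cond:optimalcis:sound} admits the self-contained verification you sketch, while Part~\ref{cond:optimalcis:opt} cannot be proved from the material available here and must remain a citation.
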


Theorem \ref{theorem:optimalcis} requires that $\de_G(x) = \{x, y\}$. This may
seem like a strong assumption. It is, however, based on the result by
\citet{henckel2024graphical} that covariates in $\de_G(x)$ are only part of
valid conditional instrumentals set if the causal effect of interest can also
be estimated with covariate adjustment. Since, in these settings, adjustment is
more statistically efficient than an instrumental-variable-based approach,
\citet{henckel2024graphical} use this as motivation to remove such covariates
from the graph using a graphical operation know as a latent projection
\citep{richardson2003markov} and assume $\de_G(x) = \{x, y\}$ without loss of
generality. 

\subsection{Verifying Conditional Instrumental Set Validity}
\label{subsection:validIV}
We use the CIfly framework to develop a novel verification algorithm for
conditional instrumental sets. It finds \emph{all} $y$ such that $(Z, W)$ is a
valid conditional instrumental set relative to $(x, y)$ in linear time and is a
sound and complete algorithm, as it, in contrast to previous
work~\citep{van2015efficiently}, relies on the necessary and sufficient
criterion given in Theorem~\ref{theorem:validcis} and recently developed
by~\citet{henckel2024graphical}.

We can formulate a linear-time algorithm for checking the criterion in Theorem
\ref{theorem:validcis} similar to the algorithm from Section
\ref{section:adjustment} for valid adjustment: Compute
$\forb_G(x, y)$ to verify Condition~\ref{condition:validcis:forb}; afterwards
perform d-separation checks in the original graph for
Condition~\ref{condition:validcis:ztox} and in the modified graph for
Condition~\ref{condition:validcis:znottoy}. However, $\forb_G(x, y)$ and the
graph manipulation depend on $x$ and $y$. To go beyond this and develop a
linear-time algorithm that, given $Z$, $W$ and $x$, finds all $y$ such that
$(Z, W)$ is a valid conditional instrumental set relative to $(x, y)$
simultaneously, we need to avoid computing $\forb_G(x, y)$ and the graph
manipulation. To circumvent these steps, we reformulate the graphical criterion
of Theorem \ref{theorem:validcis} into a statement explicitly about the
existence or absence of certain walks. 

\begin{restatable}[]{theorem}{ValidCISwalks}
  \label{theorem:walksIV}
  Let $\{x,y\},Z$ and $W$ be pairwise disjoint node sets in an ADMG $G$. Then
  $(Z,W)$ is a valid conditional instrumental set relative to $(x,y)$
  in $G$ if and only if
  \begin{enumerate}[label=(\roman*)]
    \item there does not exist a directed path from $x$ to $y$ that
      contains a node in $W$,
      \label{condition:causalblockedIV}
    \item there exists a walk from $Z$ to $x$ that is open given $W$, and
      \label{condition:powerIV}
    \item there does not exist a walk from $Z$ to $y$ open given
      $W$ that does not end with a segment of the form $x
      \tikzrightarrow \dots
      \tikzrightarrow y$.
      \label{condition:noncausalIV}
  \end{enumerate}
\end{restatable}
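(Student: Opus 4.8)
The plan is to prove that the three walk conditions hold if and only if the three conditions of Theorem~\ref{theorem:validcis} hold, matching them in a coupled rather than one-to-one fashion. The middle conditions already coincide: by the walk-based definition of d-separation in an ADMG, $x \not\perp_G Z \mid W$ (Condition~\ref{condition:validcis:ztox}) says exactly that there is a walk from $Z$ to $x$ open given $W$, which is Condition~\ref{condition:powerIV}. So the work is to relate the forbidden-set Condition~\ref{condition:validcis:forb} and the modified-graph d-separation Condition~\ref{condition:validcis:znottoy} to Conditions~\ref{condition:causalblockedIV} and~\ref{condition:noncausalIV}.

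The pivotal lemma is that, \emph{assuming Condition~\ref{condition:validcis:forb}}, the walks from $Z$ to $y$ open given $W$ in $\tilde G$ are exactly the walks from $Z$ to $y$ open given $W$ in $G$ that do not end with a segment of the form $x \tikzrightarrow \dots \tikzrightarrow y$; hence, under Condition~\ref{condition:validcis:forb}, Conditions~\ref{condition:validcis:znottoy} and~\ref{condition:noncausalIV} are equivalent. One inclusion: if such a walk in $G$ ends with $x \tikzrightarrow c \tikzrightarrow \dots \tikzrightarrow y$, then (using acyclicity, this directed tail is a path, so $x$ occurs once in it) we get $c \in \de_G(x) \cap \an_{G, x}(y) = \causal_G(x, y)$, so the edge $x \tikzrightarrow c$ is deleted in $\tilde G$ and the walk is not in $\tilde G$. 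The other inclusion: if an open walk in $G$ ever traverses a deleted edge $x \tikzrightarrow c$, then $c \in \causal_G(x, y) \subseteq \forb_G(x, y)$, so $c \notin W$ by Condition~\ref{condition:validcis:forb}; since $c$ is entered through an arrowhead, staying a non-collider forces it to be left through a tail, i.e.\ along a directed edge $c \tikzrightarrow c'$ with $c' \in \de_G(\causal_G(x, y)) \subseteq \forb_G(x, y)$; iterating, the walk must continue by directed edges to $y$, so it ends with a segment $x \tikzrightarrow \dots \tikzrightarrow y$. Thus an open walk in $G$ to $y$ not ending this way uses no deleted edge and is an open walk in $\tilde G$; conversely a segment $x \tikzrightarrow \dots \tikzrightarrow y$ always begins with a deleted edge.

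It remains to tie Condition~\ref{condition:validcis:forb} to the walk conditions. One direction is immediate: a directed path from $x$ to $y$ through some $b \in W$ has $b$ as an interior node, so $b \in \causal_G(x, y) \subseteq \forb_G(x, y)$ and $(Z \cup W) \cap \forb_G(x,y) \neq \emptyset$; hence the criterion of Theorem~\ref{theorem:validcis} implies Condition~\ref{condition:causalblockedIV}. For the converse, assume Conditions~\ref{condition:causalblockedIV}--\ref{condition:noncausalIV} and suppose, for contradiction, $u \in (Z \cup W) \cap \forb_G(x, y)$; then $u \neq x$, so $u \in \de_G(c)$ for some $c \in \causal_G(x, y)$, witnessed by directed paths $R\colon x \tikzrightarrow \dots \tikzrightarrow c$, $P\colon c \tikzrightarrow \dots \tikzrightarrow u$ and $Q\colon c \tikzrightarrow \dots \tikzrightarrow y$ with $Q$ not containing $x$; by acyclicity $R$ and $Q$ meet only at $c$, so $R \cdot Q$ is a directed path from $x$ to $y$ through $c$. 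Replacing $u$ repeatedly by an interior node of $P$ that lies in $W$ (which again lies in $\forb_G(x, y)$ and shortens $P$) we may assume the interior of $P$ avoids $W$. Now I would build a walk from $Z$ to $y$, open given $W$ and not ending with a segment $x \tikzrightarrow \dots \tikzrightarrow y$, contradicting Condition~\ref{condition:noncausalIV}: if $u = c$, then $c$ lies on $R \cdot Q$, which already contradicts Condition~\ref{condition:causalblockedIV} if $c \in W$, and is the walk $Q$ (which does not visit $x$) if $c \in Z$; if $c \neq u \in W$, concatenate an open $Z$-to-$x$ walk (Condition~\ref{condition:powerIV}), then $R$, then $P$ followed by its reverse (making $u$ a collider, open because $u \in W$), then $Q$; if $c \neq u \in Z$, take the reverse of $P$ followed by $Q$, starting at $u$. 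Using the reduction and, where applicable, the openness of the $Z$-to-$x$ walk, every interior non-collider of the constructed walk lying in $W$ must be a node of $R \cdot Q$, hence a causal node on a directed path from $x$ to $y$ --- which contradicts Condition~\ref{condition:causalblockedIV} --- so the walk is open given $W$; and since it either reverses direction at $u$ or never visits $x$, it does not end with a segment $x \tikzrightarrow \dots \tikzrightarrow y$. This establishes Condition~\ref{condition:validcis:forb} and, with the first two paragraphs, both directions of the equivalence.

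The step I expect to be most delicate is this final walk construction. Because we argue with walks rather than paths, nodes may recur, and at every occurrence one must check that a node of $W$ is either a collider of the walk (the intended open one, or one already present in the $Z$-to-$x$ walk) or is forced to be a causal node on $R \cdot Q$; acyclicity of the ADMG is what keeps the concatenations $R \cdot Q$, $R \cdot P$ and the like simple and lets the ancestor/descendant memberships propagate along the sub-paths. One also has to verify that the reduction clearing $W$ from the interior of $P$ terminates and to dispose of degenerate cases such as $u = c$ or zero-length sub-paths; these are routine but fiddly.
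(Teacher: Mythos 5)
Your proposal is correct and follows essentially the same route as the paper: both reduce the statement to the criterion of Theorem~\ref{theorem:validcis}, match Condition~\ref{condition:powerIV} via the walk characterization of d-separation, and settle the forbidden-set condition by building an open walk from $Z$ to $y$ that turns the offending node of $W$ into a collider (your $x \tikzrightarrow \dots \tikzrightarrow c \tikzrightarrow \dots \tikzrightarrow u \tikzleftarrow \dots \tikzleftarrow c \tikzrightarrow \dots \tikzrightarrow y$ construction plays the role of the paper's trek through $v$). The only spot to tighten is the second inclusion of your pivotal lemma, where you implicitly assume a deleted edge $x \tikzrightarrow c$ is traversed from $x$ to $c$; the case of an open walk from $Z$ traversing it in the reverse direction must be excluded separately (a short descending-regress argument using $(Z \cup W) \cap \forb_G(x,y) = \emptyset$ does this, since each preceding node would have to be a non-$W$, non-$Z$ descendant of $c$ entered from one of its own children), a point on which the paper's own proof is equally terse.
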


\begin{table}[t]
	\begin{center}
		\begin{tabular}{rlrlr}
			\multicolumn{5}{@{}p{4in}@{}}{\textbf{Sets}: $\{x\},W$}                                               \\
			\multicolumn{5}{@{}p{4in}@{}}{\textbf{Start}: $(x, \tikzrightarrow, \text{causal-open})$}             \\
			\multicolumn{5}{@{}p{4in}@{}}{\textbf{Return}: causal-blocked}                                        \\ [0.25em] \toprule
			\multicolumn{2}{c}{current state}   & \multicolumn{2}{c}{next state}     &                            \\ \cmidrule(r){1-2}\cmidrule(l){3-4}
			neighbor-type     & color           & neighbor-type\    & color          & rule                       \\ \midrule
			$\tikzrightarrow$ & causal-open     & $\tikzrightarrow$ & causal-open    & $\text{current} \not\in W$ \\
			$\tikzrightarrow$ & causal-open     & $\tikzrightarrow$ & causal-blocked & $\text{current} \in W$     \\
			$\tikzrightarrow$ & causal-blocked  & $\tikzrightarrow$ & causal-blocked & true                       \\ \bottomrule
		\end{tabular}
	\end{center}
	\caption{CIfly rule table for finding all nodes that violate
		Condition~\ref{condition:causalblockedIV} of Theorem~\ref{theorem:walksIV}
		with respect to $x$ and $W$ in an ADMG.}
	\label{table:causalblocked}
\end{table}

As the Condition~\ref{condition:powerIV} is independent of $y$, we can
check it directly and return $\emptyset$ if it is violated using the CIfly
algorithm for checking d-separation statements in ADMGs (see Figure
\ref{figure:admgcode}). Hence, it remains to verify the new
Conditions~\ref{condition:causalblockedIV} and \ref{condition:noncausalIV} for
all $y$ in linear time. We do so separately using the rule tables given in
Table~\ref{table:causalblocked} and~\ref{table:noncausal}, respectively. For
some intuition on Table \ref{table:causalblocked}, the corresponding CIfly
algorithm starts from $x$ and Rule 1 ensures that it follows directed paths
pointing away from $x$ in the original graph similar to a conventional
reachable algorithm. Once we reach a node in $W$, the color changes to
\textit{causal-blocked} by Rule 2, and as a result all subsequent nodes are
marked as \textit{causal-blocked} by Rule 3. Finally, the algorithm returns all
nodes marked as causal blocked, which are precisely those for which Condition
\ref{condition:causalblockedIV} is violated. For some intuition on Table
\ref{table:noncausal}, the corresponding CIfly algorithm starts from $Z$ and
has effectively the same rules as the CIfly algorithm for finding all
d-connected nodes in ADMGs (see Example \ref{example:ADMG d-sep}) with one
important distinction. Rules 4 and 5 ensure that after passing through $x$ and
following along edges of the form $\tikzrightarrow$, nodes are marked with
color \textit{causal-end}. Since the algorithm only returns those nodes reached
in color \textit{non-causal}, this ensures that nodes d-connected with $Z$
given $W$ due to paths ending with segments of the form $x \tikzrightarrow
\dots \tikzrightarrow y$ are \textit{not} returned. This ensures that the nodes
returned are precisely those for which Condition \ref{condition:noncausalIV} is
violated. 

We can thus compute the set of all $y$ such that $(Z,W)$ is a valid
conditional instrumental set relative to $(x,y)$ in $G$ as follows: Check
Condition~\ref{condition:powerIV} and return $\emptyset$ if it is violated. If
not, compute the sets $O_{\ref{condition:causalblockedIV}}$ and
$O_{\ref{condition:noncausalIV}}$, with the specifications from
Table~\ref{table:causalblocked} and~\ref{table:noncausal}, respectively, and
return $V \setminus \left(O_{\ref{condition:causalblockedIV}} \cup
  O_{\ref{condition:noncausalIV}} \cup W \cup Z \cup \{x\}\}\right)$.
  Therefore, the following holds.

\begin{table}[t]
  \begin{center}
    \begin{tabular}{rlrlr}
      \multicolumn{5}{@{}p{4in}@{}}{\textbf{Sets}:   $Z,\{x\},W$}                                                                                                  \\
      \multicolumn{5}{@{}p{4in}@{}}{\textbf{Start}:  $(Z, \tikzleftarrow, \text{non-causal})$}                                                                     \\
      \multicolumn{5}{@{}p{4in}@{}}{\textbf{Return}: non-causal }                                                                                                  \\ [0.25em] \toprule
      \multicolumn{2}{c}{current state}                          & \multicolumn{2}{c}{next state}                           &                                      \\ \cmidrule(r){1-2}\cmidrule(l){3-4}
      neighbor-type                            & color           & neighbor-type                           & color          & rule                                 \\ \midrule
      $\tikzrightarrow$, $\tikzleftrightarrow$ & any             & $\tikzleftarrow$, $\tikzleftrightarrow$ & non-causal     & $\text{current} \in W$               \\
      $\tikzleftarrow$                         & any             & $\tikzleftarrow$, $\tikzleftrightarrow$ & non-causal     & $\text{current} \notin W$            \\
      any                                      & non-causal      & $\tikzrightarrow$                       & non-causal     & $\text{current} \notin W \cup \{x\}$ \\
      any                                      & non-causal      & $\tikzrightarrow$                       & causal-end     & $\text{current} \in \{x\}$           \\
      $\tikzrightarrow$                        & causal-end      & $\tikzrightarrow$                       & causal-end     & $\text{current} \not\in W$           \\ \bottomrule
    \end{tabular}
  \end{center}
  \caption{CIfly rule table for finding all nodes violating
  Condition~\ref{condition:noncausalIV} of Theorem~\ref{theorem:walksIV} with
  respect to $x,Z$ and $W$ in an ADMG.}
  \label{table:noncausal}
\end{table}

\begin{restatable}[]{theorem}{ValidCISalgotheorem}
Let $Z$, $\{x\}$ and $W$ be pairwise disjoint node sets in an ADMG $G$. Then, there
exists an algorithm finding all $y$  such that $(Z, W)$ is a valid conditional
instrumental set relative to $(x, y)$ in time $O(p+m)$, that is, in linear time
in the size of $G$.
\end{restatable}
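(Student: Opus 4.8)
The plan is to invoke Theorem~\ref{theorem:walksIV}, which reduces validity of $(Z,W)$ relative to $(x,y)$ to the three walk conditions \ref{condition:causalblockedIV}--\ref{condition:noncausalIV}, and then to show that these conditions can be decided for all candidate $y$ simultaneously using a constant number of CIfly calls, each running in $O(p+m)$ by Theorem~\ref{theorem:lintime}. Since Theorem~\ref{theorem:walksIV} (inheriting the hypotheses of Theorem~\ref{theorem:validcis}) presupposes that $\{x,y\}$, $Z$ and $W$ are pairwise disjoint, the algorithm only needs to consider $y \in V \setminus (W \cup Z \cup \{x\})$, and nodes in $W \cup Z \cup \{x\}$ are simply excluded in post-processing.

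First I would dispatch Condition~\ref{condition:powerIV}. It asserts the existence of a walk from $Z$ to $x$ open given $W$, equivalently $x \not\perp_G Z \mid W$, and it does not mention $y$. Hence it can be tested once, via the CIfly d-separation primitive for ADMGs from Figure~\ref{figure:ruletableformat}: run \textsc{reach} from the states $(Z,\tikzleftarrow)$ and check whether $x$ is returned. If Condition~\ref{condition:powerIV} fails, no $y$ gives a valid set and the algorithm returns $\emptyset$; this call costs $O(p+m)$.

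Next I would argue that Table~\ref{table:causalblocked} and Table~\ref{table:noncausal} compute, respectively, the set $O_{\ref{condition:causalblockedIV}}$ of nodes violating Condition~\ref{condition:causalblockedIV} and the set $O_{\ref{condition:noncausalIV}}$ of nodes violating Condition~\ref{condition:noncausalIV}. For Table~\ref{table:causalblocked}, one shows that directed paths in the state-space graph from $(x,\tikzrightarrow,\text{causal-open})$ correspond to directed walks out of $x$ in $G$: Rule~1 keeps the colour \emph{causal-open} as long as no node of $W$ has been traversed, Rule~2 switches to \emph{causal-blocked} exactly when a node of $W$ is seen, and Rule~3 keeps that colour thereafter; since the return line outputs precisely the states reached in colour \emph{causal-blocked}, the returned set is exactly $\{y : \text{there is a directed path } x \tikzrightarrow \dots \tikzrightarrow y \text{ through a node of } W\}$. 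For Table~\ref{table:noncausal}, Rules~1--2 reproduce the ADMG d-separation semantics (a node on a walk is traversed only as a collider in $W$ or a non-collider outside $W$), so directed paths in the state-space graph correspond to walks from $Z$ open given $W$; Rules~3--5 switch the colour to \emph{causal-end} precisely after the walk passes through $x$ and continues along $\tikzrightarrow$ edges, and since only states reached in colour \emph{non-causal} are returned, the returned set is exactly the $y$ reachable by an open walk from $Z$ that does \emph{not} end with a segment $x \tikzrightarrow \dots \tikzrightarrow y$, i.e.\ those violating Condition~\ref{condition:noncausalIV}. Combining the pieces, for $y \in V \setminus (W \cup Z \cup \{x\})$ the tuple $(Z,W)$ is valid relative to $(x,y)$ iff $y \notin O_{\ref{condition:causalblockedIV}} \cup O_{\ref{condition:noncausalIV}}$, so the algorithm returns $V \setminus \big(O_{\ref{condition:causalblockedIV}} \cup O_{\ref{condition:noncausalIV}} \cup W \cup Z \cup \{x\}\big)$, and the run time is three \textsc{reach} calls of cost $O(p+m)$ plus $O(p+m)$ work for the unions and complement.

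The main obstacle is not the complexity bookkeeping but the correctness of the colour dynamics in Table~\ref{table:noncausal}: using the first-matching-rule semantics, one must verify that the transition to \emph{causal-end} is triggered on exactly the walks whose suffix is a pure directed tail $x \tikzrightarrow \dots \tikzrightarrow y$, that it never ``traps'' a walk that subsequently deviates from such a tail (so that whenever Condition~\ref{condition:noncausalIV} is genuinely violated, $y$ is still reachable in colour \emph{non-causal} via some admissible walk), and dually that no spurious open walk is introduced. Establishing this equivalence between the colour dynamics and the walk structure of Theorem~\ref{theorem:walksIV}---together with the analogous but simpler claim for Table~\ref{table:causalblocked} and the $y$-independence of Condition~\ref{condition:powerIV}---is the crux; once it is in place, the theorem follows immediately from Theorem~\ref{theorem:lintime}.
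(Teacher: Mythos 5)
Your proposal is correct and follows essentially the same route as the paper's own proof: check the $y$-independent Condition~\ref{condition:powerIV} once via the ADMG d-separation table, use Tables~\ref{table:causalblocked} and~\ref{table:noncausal} to compute the violator sets $O_{\ref{condition:causalblockedIV}}$ and $O_{\ref{condition:noncausalIV}}$, return the complement, and invoke Theorem~\ref{theorem:lintime} for the $O(p+m)$ bound. The ``crux'' you flag---verifying that the colour dynamics of Table~\ref{table:noncausal} track exactly the open walks not ending in a directed tail from $x$---is treated at the same (in fact slightly lower) level of detail in the paper, which simply asserts the correspondence.
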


\begin{example}
\label{example:validIV}
Consider the graph $G$ from Figure~\ref{figure:graphsIV}(a). We aim to decide
whether the conditional instrumental set $(Z=\{z\}, W=\{w\})$ is valid relative
to $(x,m)$ or $(x,y)$ in $G$. First, we check whether $x \not\perp_{G} Z \mid
W$, that is, whether there exists a walk from $Z$ to $x$ open given
$W$ (see Figure \ref{figure:admgcode}). Due to the edge $z \tikzrightarrow x$ this is true.
Hence, we also check the other two conditions (else the algorithm returns $\emptyset$). To check Condition~\ref{condition:causalblockedIV},
we find all nodes $v$ for which there exists a directed path from $x$ to $v$
containing a node in $W$ using Table~\ref{table:causalblocked}. Here, this set is empty because $w$ is not on any
directed path starting from $x$. To check
Condition~\ref{condition:noncausalIV}, we find all nodes $v$ for which there
exists an open walk from $Z$ to $v$ \emph{not} ending with a directed path from
$x$ using Table~\ref{table:noncausal}. Again, this set is empty because for both $m$ and $y$ there exists an open
walk from $z$ but it ends with a directed path from $x$. It follows that $({z},
{w})$ is a valid conditional instrumental set relative to $(x,m)$ and $(x,y)$,
respectively.
\end{example}

\subsection{Finding the Graphically Optimal Conditional Instrumental Set}
\label{subsection:efficientIV}
We can also use the CIfly framework to develop a linear-time algorithm for
computing the graphically optimal conditional instrumental set $(\zopt,\wopt)$
proposed by \citet{henckel2024graphical}. The original statement (see
Theorem~\ref{theorem:optimalcis}) assumes that the underlying causal ADMG has
been transformed by removing all descendants of the treatment, except $y$,
using a latent projection~\citep{richardson2003markov}. As discussed in Section
\ref{subsection:prelimsIV}, this is reasonable from a methodological
perspective. However, from an algorithmic perspective it is problematic.
Specifically, we show in Section \ref{section:comparison} that computing a
latent projection is equivalent to Boolean matrix multiplication. Thus,
linear-time algorithms may not rely on latent projections. We now provide an
alternative characterization of $(\zopt,\wopt)$ that avoids latent projections.

\begin{figure}[t]
  \begin{minipage}{0.54\textwidth}
    \begin{tabular}{rlrlr}
      \multicolumn{5}{@{}p{2in}@{}}{\textbf{Sets}:   $\{s\}, A, B$}                                         \\
      \multicolumn{5}{@{}p{2in}@{}}{\textbf{Start}:  $\{(s, \tikzleftarrow, \text{pass})\}$}                \\
      \multicolumn{5}{@{}p{2in}@{}}{\textbf{Return}: yield}                                                 \\ [0.25em] \toprule
      \multicolumn{2}{c}{current}   & \multicolumn{2}{c}{next}                        &                     \\ \cmidrule(r){1-2}\cmidrule(l){3-4}
      n.-t.\                & color & n.-t.\                                  & color & rule                \\ \midrule
      $\tikzleftarrow$      & pass  & $\tikzleftarrow$                        & pass  & $\text{next} \in B$ \\
      $\tikzleftarrow$      & pass  & $\tikzleftrightarrow$, $\tikzleftarrow$ & yield & $\text{next} \in A$ \\
      $\tikzleftrightarrow$ & yield & $\tikzleftrightarrow$, $\tikzleftarrow$ & yield & $\text{next} \in A$ \\ \bottomrule
    \end{tabular}
  \end{minipage}
  \begin{minipage}{0.45\textwidth}
    \vspace*{0.35cm}
    \begin{lstlisting}[language=ruletable]
EDGES  --> <--, <->
# p, y are short for pass, yield
COLORS p, y
SETS   S, A, B
START  <-- [p] AT S
OUTPUT ... [y]
  
<-- [p] | <-- [p]      | next in B
<-- [p] | <->, <-- [y] | next in A
<-> [y] | <->, <-- [y] | next in A
    \end{lstlisting}
  \end{minipage}
  \caption{Rule table for (i) computing $\woptnew$ by setting $s \coloneq y$,
    $A$ as defined in Theorem~\ref{theorem:optimalCISnew}, and $B \coloneq D
    \setminus \{x\}$ and (ii) computing $\zoptnew$ with $s \coloneq x$, $A$,
    and $B \coloneq \emptyset$ in an ADMG. On the right, we show the corresponding text file to illustrate how to use colors in
    combination with the CIfly software. Specifically, it includes a line
    \texttt{COLORS} listing all colors and afterwards, we refer to colors in
    square brackets wherever they are required in the rules. Here, \texttt{p}
    stands for \texttt{pass} and \texttt{y} for \texttt{yield}.}
  \label{figure:optimalIVtables}
\end{figure}

\begin{restatable}[]{theorem}{OptimalCISnewtheorem}
\label{theorem:optimalCISnew}
Let $x$ and $y$ be nodes in an ADMG $G$ such that $y \in D \coloneq \de_G(x)$
and let $A \coloneq V \setminus D$. Moreover, define
\[
  \woptnew \coloneq \{v \in V \mid v \text{ is on a path } p
  \tikzrightarrow c_1 \tikzleftrightarrow \cdots \tikzleftrightarrow c_k
  \tikzleftrightarrow d_1 \tikzrightarrow \cdots \tikzrightarrow d_l
  \tikzrightarrow y \} \cap A
\]
where $p, c_1, \dots, c_k \in A$ and $d_1, \dots, d_l \in (D \setminus \{ x
\})$ possibly with $k=0$ or $l=0$ and
\[
  \zoptnew \coloneq \{v \in V \mid v \text{ is on a path } p
  \tikzrightarrow c_1 \tikzleftrightarrow \cdots \tikzleftrightarrow c_k
  \tikzleftrightarrow x\} \setminus \woptnew
\]
where $p, c_1, \dots, c_k \in A$ possibly with $k=0$. Then, $\woptnew$ and
$\zoptnew$ for graph $G$ are identical to $\wopt$ and $\zopt$ for graph
$\tilde{G}$ obtained from $G$ by a latent projection over $D \setminus \{x,
y\}$.
\end{restatable}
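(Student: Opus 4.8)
The plan is to establish the two set identities $\woptnew = \wopt$ and $\zoptnew = \zopt$, where throughout $\wopt$ and $\zopt$ denote the quantities of Theorem~\ref{theorem:optimalcis} formed in $\tilde G$, by first describing the edges of the latent projection $\tilde G$ explicitly in terms of paths in $G$ and then unwinding the definitions $\wopt = \displus_{\tilde G, \{x\}}(y) \setminus \{x,y\}$ and $\zopt = \displus_{\tilde G, \{y\}}(x) \setminus (\wopt \cup \{x,y\})$. Write $L \coloneq D \setminus \{x,y\}$ for the set that is projected out, so $\tilde G$ has vertex set $A \cup \{x,y\}$. I will use two standard facts about latent projection: that for $a,b \in A \cup \{x,y\}$ there is a $\tilde G$-edge with an arrowhead at $b$ between $a$ and $b$ exactly when $G$ has a path from $a$ to $b$ whose interior vertices all lie in $L$ and are non-colliders and whose $b$-incident edge has an arrowhead at $b$, the edge being directed $a \tikzrightarrow b$ if its $a$-incident edge has a tail at $a$ and bidirected otherwise; and that ancestral relations are preserved, $\an_{\tilde G}(v) = \an_G(v) \cap (A \cup \{x,y\})$, so in particular $\de_{\tilde G}(x) = \{x,y\}$ and $\tilde G$ indeed falls under Theorem~\ref{theorem:optimalcis}.

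The structural heart of the argument is that $D = \de_G(x)$ is descendant-closed and $G$ is acyclic, so $G$ has no edge $u \tikzrightarrow v$ with $u \in D$ and $v \in A$, and no edge with an arrowhead at $x$ emanating from a node of $D$. Tracing the forced orientations along a non-collider path through $L$ then yields a complete description of the $\tilde G$-edges incident to $A \cup \{x,y\}$: (i) the directed $\tilde G$-edges into a node $c \in A$ are exactly the directed $G$-edges into $c$, and they come from $A$, so $\pa_{\tilde G}(c) = \pa_G(c) \subseteq A$, and likewise $\pa_{\tilde G}(x) = \pa_G(x) \subseteq A$; (ii) a bidirected $\tilde G$-edge between $c, c' \in A$ is exactly a bidirected $G$-edge, since any path dipping into $L$ would have to re-enter $A$ via a directed edge, forcing $c' \in D$; (iii) a bidirected $\tilde G$-edge $c \tikzleftrightarrow y$ with $c \in A$ corresponds exactly to a $G$-path $c \tikzleftrightarrow d_1 \tikzrightarrow \cdots \tikzrightarrow d_l \tikzrightarrow y$ with $d_1,\dots,d_l \in L$ (with $l = 0$ meaning the direct edge $c \tikzleftrightarrow y$); (iv) a bidirected $\tilde G$-edge $c \tikzleftrightarrow x$ with $c \in A$ is exactly a direct $G$-edge, because a dip into $L$ would have to end with a directed edge into $x$ from a node of $D$, creating a cycle; and (v) $p \tikzrightarrow y$ in $\tilde G$ is exactly a directed $G$-path $p \tikzrightarrow d_1 \tikzrightarrow \cdots \tikzrightarrow d_l \tikzrightarrow y$ with $d_i \in L$.

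Given (i)--(v), assembling the sets is bookkeeping. By (ii) and (iii), $\dis_{\tilde G, \{x\}}(y) \setminus \{x,y\}$ is exactly the set of $A$-nodes lying on some $G$-path $c_1 \tikzleftrightarrow \cdots \tikzleftrightarrow c_k \tikzleftrightarrow d_1 \tikzrightarrow \cdots \tikzrightarrow d_l \tikzrightarrow y$ with $c_i \in A$ and $d_j \in L$; taking $\pa_{\tilde G}$ of $\dis_{\tilde G, \{x\}}(y)$ and deleting $\{x,y\}$ then, by (i), prepends an optional leading edge $p \tikzrightarrow c_1$ with $p \in A$, and, by (v), in the $k = 0$ case additionally contributes the $A$-nodes on directed $G$-paths $p \tikzrightarrow d_1 \tikzrightarrow \cdots \tikzrightarrow y$ through $L$; this is precisely the path template defining $\woptnew$, so $\wopt = \woptnew$. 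Symmetrically, by (iv), (ii) and (i), $\dis_{\tilde G, \{y\}}(x) \setminus \{y\}$ consists of $x$ together with the $A$-nodes on a $G$-path $c_1 \tikzleftrightarrow \cdots \tikzleftrightarrow c_k \tikzleftrightarrow x$ with $c_i \in A$, and closing under $\pa_{\tilde G}$ and then removing $\wopt \cup \{x,y\}$ yields $\zoptnew$.

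I expect the main obstacle to be the orientation-tracing case analysis behind (i)--(v), in particular making precise the asymmetry between $x$ and $y$ --- a $\tilde G$-edge incident to $x$ cannot be manufactured by marginalizing $L \subseteq \de_G(x)$, whereas one incident to $y$ can --- which is exactly where descendant-closedness of $\de_G(x)$ does the work. A secondary but genuine subtlety is matching the degenerate-path conventions in the statements of $\woptnew$ and $\zoptnew$ to the set-theoretic definitions: the leading directed edge must be read as optional, since a parentless source of $G$ can lie in $\dis_{\tilde G, \{x\}}(y)$ and hence in $\wopt$ without admitting any $p$; and one must note that concatenating the witnessing $G$-paths produces walks that can be pruned to genuine paths. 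Both are routine once flagged.
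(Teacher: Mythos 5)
Your proposal is correct and follows essentially the same route as the paper's proof: both characterize the edges of the latent projection $\tilde G$ in terms of paths in $G$ (exploiting that $D = \de_G(x)$ is descendant-closed, so no edge leaves $D$ into $A$ and no directed edge enters $x$ from $D$) and then match the resulting path templates against $\wopt$ and $\zopt$, with your version spelling out the $\displus$ bookkeeping and the degenerate-path conventions that the paper's terser argument glosses over. One small patch to your fact (iv): ruling out a dip into $L$ cannot rest solely on "a directed edge into $x$ from $D$ creates a cycle," since an ADMG may contain $d \tikzleftrightarrow x$ with $d \in D$; that sub-case is excluded by the same non-collider propagation you use for (ii) and (iii), which forces the remainder of the path to be directed into $c$ and hence $c \in D$, a contradiction.
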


Here, we define $\zoptnew$ and $\woptnew$ as sets of nodes lying on certain
paths. This makes the problem of finding them naturally amenable to the CIfly
framework. To compute $\woptnew$ we start a CIfly algorithm from $y$ and go up
paths of the form given in the definition of $\woptnew$, while tracking whether
we are currently on a segment of the form $d_i \tikzrightarrow \dots
\tikzrightarrow y$ or have already entered the segment $p \tikzrightarrow c_1
\tikzleftrightarrow \dots \tikzleftrightarrow c_k$, using colors. We can
distinguish between the two cases if we have knowledge of $D$, which we can
trivially precompute with another CIfly call using conventional reachability
rules. By setting the algorithm to only return nodes encountered on the latter
segment, we obtain $\woptnew$. The rule table for this CIfly algorithm and its
text file version are shown in Figure~\ref{figure:optimalIVtables}. To obtain
$\woptnew$, we call this CIfly algorithm with inputs $s \coloneq y$, the set
$A$ as defined above and $B \coloneq D \setminus \{x\}$. 

We can also use the rule table in Figure \ref{figure:optimalIVtables} to
compute $\zoptnew$. To do so, we first run the CIfly algorithm with inputs $s
\coloneq x$, $A$ as defined above and $B \coloneq \emptyset$. This ensures that
the algorithm starts from $x$, follows up paths of the form $p \tikzrightarrow
c_1 \tikzleftrightarrow \dots \tikzleftrightarrow c_k$ and returns all reached
nodes. We then perform a post-processing step outside of CIfly to remove all
nodes in $\woptnew$ from the CIfly output and obtain $\zoptnew$. We provide
Python code implementing the full algorithm, including the two CIfly calls and
the post-processing step in Figure~\ref{figure:optimalIVcode}.

\begin{figure}
  \begin{lstlisting}[language=Python, style=codeStyle]
import ciflypy as cf
# p is the number of nodes, g is the ADMG, x and y are nodes
# descendants and optimal_iv are rule tables
def optimal_instrument(p, g, x, y):
  D = cf.reach(g, {"X": x}, descendants)
  A = set(range(p)) - set(de_x)
  Wo = set(cf.reach(g, {"S": y, "A": A, "B": D }, optimal_iv))
  Zo = set(cf.reach(g, {"S": x, "A": A, "B": []}, optimal_iv)) - Wo
  if y in D and Zo:
    return (list(Zo), list(Wo))
  else:
    return None
  \end{lstlisting}
  \caption{Python code for computing $(\zoptnew,\woptnew)$ in an ADMG as defined in
  Theorem~\ref{theorem:optimalCISnew}. We only verify validity
  (Condition~\ref{cond:optimalcis:sound}) and not graphical optimality
  (Condition~\ref{cond:optimalcis:opt}). }
  \label{figure:optimalIVcode}
\end{figure}

\begin{restatable}[]{theorem}{OptimalCISalgotheorem}
\label{theorem:optimalIValgo}
Let $x$ and $y$ be nodes in an ADMG $G$ such that $y \in \de_G(x)$. Then, there
exists an algorithm computing $(\zoptnew, \woptnew)$ in time $O(p + m)$, that
is, in linear time in the size of $G$.
\end{restatable}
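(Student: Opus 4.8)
The plan is to realise the algorithm of Figure~\ref{figure:optimalIVcode}, namely a constant number of \textsc{reach} calls chained with $O(p)$ set operations, and to show it runs in linear time and returns the sets $\zoptnew$ and $\woptnew$ of Theorem~\ref{theorem:optimalCISnew}. Concretely, the algorithm first computes $D = \de_G(x)$ with one CIfly call using a standard descendants-reachability rule table and sets $A = V \setminus D$; it then runs the rule table of Figure~\ref{figure:optimalIVtables} with $s \coloneq y$, the set $A$, and $B \coloneq D \setminus \{x\}$ to obtain $\woptnew$; next it runs the same rule table with $s \coloneq x$, $A$, and $B \coloneq \emptyset$ to obtain a set $Z'$, and sets $\zoptnew = Z' \setminus \woptnew$; finally it outputs $(\zoptnew, \woptnew)$. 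The extra check ``$y \in D$ and $\zoptnew \neq \emptyset$'' appearing in Figure~\ref{figure:optimalIVcode} only serves the downstream validity guarantee obtained via Theorem~\ref{theorem:optimalcis}; it costs $O(1)$ and is irrelevant to the present statement.

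For the running time, each of the three rule tables specifies a legitimate $(\mathcal{E}, \ell)$-CIfly reduction over $\mathcal{E} = \{\tikzrightarrow, \tikzleftrightarrow\}$, since each uses a fixed finite color set and finitely many rules and hence a valid $\phi$; by Theorem~\ref{theorem:lintime} each \textsc{reach} call therefore runs in $O(p+m)$. Representing all node sets as Boolean membership arrays over $V$, the operations $V \setminus D$ and $Z' \setminus \woptnew$ each cost $O(p)$. Summing the constantly many contributions yields the claimed $O(p+m)$ bound.

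The substantive part is correctness. For the descendants call it is immediate from the state-space correspondence of Definition~\ref{definition:cifly}, exactly as in Example~\ref{example:bayesball}. For the two calls of Figure~\ref{figure:optimalIVtables} I would show that a state $(v, n, \texttt{yield})$ is reachable from the start state precisely when $v$ lies on a walk in $G$ of the shape in Theorem~\ref{theorem:optimalCISnew} (a directed segment through $B$ ending at $s$, preceded by a bidirected/directed segment through $A$, with $k$ or $l$ possibly $0$). The key points: the color \texttt{pass} tracks the directed segment ending at $s$, traversed backwards through $B$ by Rule~1; the move from \texttt{pass} to \texttt{yield} is irrevocable, since no rule returns to \texttt{pass}, and marks crossing into $A$ along a bidirected edge (or, when $B = \emptyset$, at the very first step); the color \texttt{yield} tracks the $A$-segment, along which Rule~3 permits bidirected edges within $A$ and the single directed edge $p \tikzrightarrow c_1$; and, since \textbf{Return} lists only \texttt{yield}, the output is exactly the set of nodes lying on such an $A$-segment. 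Combined with the definitions of $\woptnew$ and $\zoptnew$ in Theorem~\ref{theorem:optimalCISnew} --- equivalently, with $\wopt$ and $\zopt$ of the latent projection --- this identifies the two outputs with $\woptnew$ and $\zoptnew$.

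I expect the main obstacle to be precisely this last equivalence: a careful case analysis matching every rule-table row --- its \texttt{current}, \texttt{next}, and \texttt{rule} entries --- to a step along a walk of the prescribed shape, paying particular attention to (i) the boundary cases $k = 0$ and $l = 0$, where the template collapses and the intended edge pattern must be read off consistently; (ii) the endpoint set-memberships, in particular verifying that $x$ and $y$ are never returned, which holds because $\woptnew, \zoptnew \subseteq A$ while $x, y \in D$; and (iii) the fact that state-space reachability tracks walks, so the characterisation is cleanest phrased over walks rather than over node-distinct paths. Once the correspondence is pinned down the output equals $(\zoptnew, \woptnew)$, and combined with the running-time bound the theorem follows.
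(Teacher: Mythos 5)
Your proposal is correct and follows essentially the same route as the paper's proof: invoke Theorem~\ref{theorem:lintime} for the constant number of CIfly calls and argue that the \texttt{pass}/\texttt{yield} colors in the rule table of Figure~\ref{figure:optimalIVtables} track exactly the walks defining $\woptnew$ and $\zoptnew$ in Theorem~\ref{theorem:optimalCISnew}, followed by an $O(p)$ set difference. If anything, you are more careful than the paper's own (very terse) proof, e.g.\ in using $B \coloneq D \setminus \{x\}$ and in flagging the $k=0$, $l=0$ boundary cases of the path template.
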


\begin{example}
\label{example:optimalIV}
Consider the ADMG $G$ from Figure~\ref{figure:graphsIV}(b). We are interested in computing the graphically optimal
conditional instrumental set $(\zoptnew, \woptnew)$ relative to $(x,y)$ in $G$. Here, $D = \de_G(x)
= \{x,d,y,f\}$ can be computed by a trivial reachability call and therefore $A
= \{a, b, c\}$. Next, we compute $\woptnew$ by applying the rule table in
Figure~\ref{figure:optimalIVtables} with $s \coloneq y$, $A \coloneq \{a, b,
c\}$ and $B \coloneq \{d,y,f\}$. The algorithm traverses the paths $b
\tikzrightarrow c \tikzleftrightarrow y$ and $m \tikzrightarrow y$. Since $m
\in B$ it only marks $b$ and $c$ with color \textit{yield} and therefore
returns $\woptnew = \{b, c\}$. Afterwards, $\zoptnew$ is computed by setting $s
\coloneq x$, $A \coloneq \{a, b, c\}$ as before and $B \coloneq \emptyset$. The
path $a \tikzrightarrow x$ is traversed and $\{a\}$ is returned. Because $a
\not\in \woptnew$, we obtain $\zoptnew = \{a\}$. As $\zoptnew$ is non-empty and
contains a parent of $x$, it follows by Theorem \ref{theorem:optimalcis} that
$(\{a\},\{b,c\})$ is a valid and graphically optimal conditional instrumental
set relative to $(x,y)$ in $G$.
\end{example}

\section{The Complexity of Moralization and Latent Projection}
\label{section:comparison}
In this section, we discuss two other popular algorithmic primitives within
graphical causal inference, \emph{graph
moralization}~\citep{cowell2007probabilistic} and \emph{latent
projection}~\citep{richardson2003markov}.  Concretely, we prove that they are
computationally equivalent to Boolean matrix multiplication (\textsc{bmm}).
Currently, there is no algorithm for \textsc{bmm} that is faster than general
matrix multiplication algorithms with the best known run time being above
$O(p^{2.371})$ for square matrices of dimension $p$~\citep{alman2024more}. This
disproves contrary statements in the literature~\citep[for
example,][]{geiger1989dseparation,tian1998finding,van2014constructing} that claim
moralization can be solved in time $O(p^2)$. Hence, as opposed to
\textsc{reach}, which CIfly builds on, moralization and latent projection cannot
be used as primitives for linear-time algorithms. On the contrary, there exists
a practically reasonable implementation for latent projection that uses $2p$
CIfly calls and therefore runs in time $O(p \cdot (p + m))$.

\subsection{Moralization}
\emph{Graph moralization} is a classic technique in causal and probabilistic
inference~\citep{lauritzen1988local,verma1993deciding}. It transforms a DAG $D$
into an undirected graph $G$ while preserving certain conditional independence
statements encoded by $D$ via d-separation. Specifically, we can test whether
$X$ and $Y$ are d-separated given $Z$ in $D$ by checking if $Z$ is a node cut
between $X$ and $Y$ in the moralization of a subgraph of $D$. This reduction of
a complex graphical property, d-separation, to a more standard notion, cuts,
is, at first glance, quite similar to CIfly, which relies on reductions to
\textsc{reach} instances. The main differences are that moralization constructs
an undirected graph and that this new graph has the same node set as the
original graph. This makes moralization generally less flexible than CIfly that
constructs a \emph{directed} state-space graph which contains multiple states
per node in the original graph. Nonetheless, moralization can be used to solve
certain tasks that are difficult to solve with CIfly, one example being the
task of finding a d-separator between $X$ and $Y$ in a DAG $G$ of \emph{minimum
size}~\citep{tian1998finding}. In the following, we show that moralization is
computationally harder than stated in the causal inference literature. We begin
by formally introducing the notion of a \emph{moralized graph}.

\begin{definition}[Moralization]
Consider a DAG $D = (V, E)$. The \emph{moralized} graph $G$ of $D$, also known
as the moralization of $D$, is the undirected graph with node set $V$ and edge
set
$
  E' = \{ a - b \mid a \neq b \text{ and } (a \tikzrightarrow b \text{ or } a
  \tikzleftarrow b \text{ or } \exists c \in V \text{ s.t. } a \tikzrightarrow
c \tikzleftarrow b \text{ in } D) \}.
$
\end{definition}

\begin{example}
  Consider the bottom half of Figure~\ref{figure:bmmtomoralize}. Let $D$ be the DAG given on the left and $G$ the undirected graph on the right. Here, $G$ is the moralization of $D$. We obtain $G$ from $D$ by, first, adding additional undirected edges between pairs of nodes that have a child in common in $D$, such as between $a_1$ and $c_1$, and, second, making all directed edges undirected. 
\end{example}

We will refer to the computational problem of constructing the moralized graph
for a given directed graph $G=(V,E)$ as \textsc{moralize}. The naive algorithm
for \textsc{moralize} needs time $O(p^3)$: for each node, loop over all pairs
of parents and insert an edge between them, if not already present. Finally,
make all edges undirected. From a theoretical point-of-view, we can improve the
run time by using that if $A$ is the adjacency matrix of $G$, that is, the
matrix with $a_{ij}=1$ if and only if $i \tikzrightarrow j \in E$, then the
inserted edges correspond to the non-zeros of the matrix $AA^T$. However, this
requires advanced techniques, namely algebraic methods for matrix
multiplication based on~\cite{strassen1969gaussian} and even those are firmly
away from an $O(p^2)$ run time~\citep{alman2024more}. Such improvements to
achieve subcubic run time were not discussed in the causal inference
literature, where it was incorrectly presumed that a naive algorithm achieves
time $O(p^2)$~\citep[for
example,][]{geiger1989dseparation,tian1998finding,van2014constructing,jeong2022frontdoor}.
Here, we focus on the opposite direction, namely showing that Boolean matrix
multiplication (\textsc{bmm}) can be solved with graph moralization. This
formally certifies that \textsc{moralize} is computationally as hard as
\textsc{bmm}. The latter problem is deeply studied~\citep{fischer1971boolean,
yu2018improved} with no algorithm faster than general matrix multiplication
known. It is defined as follows:

\begin{definition}[Boolean matrix multiplication]
Given matrices $X \in \{0,1\}^{p_1 \times p_2}$ and $Y \in \{0,1\}^{p_2 \times
p_3}$ compute the Boolean matrix product $X \cdot Y \in\{0,1\}^{p_1\times p_3}$
whose entries for $i=1, \dots, p_1$ and $j=1, \dots, p_3$ are given by
\[
  (X\cdot Y)_{ij} = \bigvee_{k = 1}^{p_2} (X_{ik} \land Y_{kj}).
\]
\end{definition}

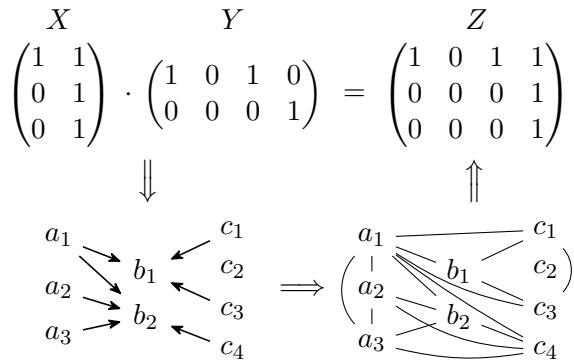
\begin{figure}[t]
\centering
\begin{tikzpicture}[xscale=0.925]
  \tikzset{circ/.append style={circle, fill=black, inner sep = 0,
  minimum size = 0.15cm}}
  \node (A) at (0,0) {$
    \begin{pmatrix} 1 & 1 \\ 0 & 1 \\ 0
      & 1
  \end{pmatrix}$};
  \node (B) at (2.5,0) {$
    \begin{pmatrix} 1 & 0 & 1 & 0 \\ 0 & 0
      & 0 & 1
  \end{pmatrix}$};
  \node (C) at (6,0) {$
    \begin{pmatrix} 1 & 0 & 1 & 1 \\
      0 & 0 & 0 & 1 \\
      0 & 0 & 0 & 1
  \end{pmatrix}$};
  \node (cdot) at (1,0) {$\cdot$};
  \node (eq) at (4.25,0) {$=$};
  \node (lp) at (0,1) {$X$};
  \node (lq) at (2.5,1) {$Y$};
  \node (lr) at (6,1) {$Z$};
  \node[rotate=270] at (1.25,-1.15) {$\Longrightarrow$};
  \node at (3.5,-2.6) {$\Longrightarrow$};
  \node[rotate=90] at (6,-1.15) {$\Longrightarrow$};
  \node (a1) at (0,-1.9) {$a_1$};
  \node (a2) at (0,-2.6) {$a_2$};
  \node (a3) at (0,-3.2) {$a_3$};
  \node (b1) at (1.25,-2.35) {$b_1$};
  \node (b2) at (1.25,-2.95) {$b_2$};
  \node (c1) at (2.5,-1.8) {$c_1$};
  \node (c2) at (2.5,-2.325) {$c_2$};
  \node (c3) at (2.5,-2.875) {$c_3$};
  \node (c4) at (2.5,-3.4) {$c_4$};
  \graph[use existing nodes, edge=arc] {
    a1 -> b1;
    a1 -> b2;
    a2 -> b2;
    a3 -> b2;
    c1 -> b1;
    c3 -> b1;
    c4 -> b2;
  };
  \node (a1) at (0+4.5,-1.9) {$a_1$};
  \node (a2) at (0+4.5,-2.6) {$a_2$};
  \node (a3) at (0+4.5,-3.3) {$a_3$};
  \node (b1) at (1.25+4.5,-2.35) {$b_1$};
  \node (b2) at (1.25+4.5,-2.95) {$b_2$};
  \node (c1) at (2.5+4.5,-1.8) {$c_1$};
  \node (c2) at (2.5+4.5,-2.325) {$c_2$};
  \node (c3) at (2.5+4.5,-2.875) {$c_3$};
  \node (c4) at (2.5+4.5,-3.4) {$c_4$};
  \graph[use existing nodes] {
    a1 -- b1;
    a1 -- b2;
    a2 -- b2;
    a3 -- b2;
    c1 -- b1;
    c3 -- b1;
    c4 -- b2;
    a1 -- a2;
    a1 --[bend right=45] a3;
    a2 -- a3;
    c1 --[bend left=45] c3;
    a1 -- c1;
    a1 --[bend right=10] c3;
    a1 --[bend right=3] c4;
    a2 --[bend right=14] c4;
    a3 --[bend right=10] c4;
  };
\end{tikzpicture}
\caption{Example of the Turing reduction from \textsc{bmm} to
\textsc{moralize}. The entry $Z_{ij}$ is $1$ if and only if there is an edge
$a_i - c_j$ in the moralized graph (bottom right).}
\label{figure:bmmtomoralize}
\end{figure}

We prove the following result with a straightforward reduction
illustrated in Figure~\ref{figure:bmmtomoralize}.

\begin{theorem}
Let $X \in \{0,1\}^{p_1 \times p_2}, Y \in \{0,1\}^{p_2 \times p_3}$ be two
Boolean matrices and assume there exists an $O(T(p))$ worst-case time algorithm
for moralizing a $p$ node graph, with $T: \mathbb{N} \mapsto \mathbb{N}$. Then,
the Boolean matrix product $Z = X \cdot Y$ can be computed in worst-case time
$O(T(p_1 + p_2 + p_3))$.
\end{theorem}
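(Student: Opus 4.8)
The plan is to give a Turing reduction that runs a single \textsc{moralize} call on a carefully constructed bipartite-layered DAG and then reads off the Boolean product from the added "moral" edges. First I would construct, from $X \in \{0,1\}^{p_1 \times p_2}$ and $Y \in \{0,1\}^{p_2 \times p_3}$, a DAG $D$ on the node set $\{a_1,\dots,a_{p_1}\} \cup \{b_1,\dots,b_{p_2}\} \cup \{c_1,\dots,c_{p_3}\}$, i.e.\ with $p \coloneq p_1 + p_2 + p_3$ nodes, where the edges point into the middle layer: add $a_i \tikzrightarrow b_k$ whenever $X_{ik} = 1$, and add $c_j \tikzrightarrow b_k$ whenever $Y_{kj} = 1$ (note the transpose — the $k$-th column of $Y$ tells us which $c_j$ feed into $b_k$). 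This is acyclic since all edges go from an $a$ or $c$ node to a $b$ node, so \textsc{moralize} is well-defined on it, and $D$ can be written down in time $O(p^2)$, which is dominated by $O(T(p))$ as soon as $T(p) = \Omega(p^2)$ (any honest algorithm for \textsc{moralize} must at least read its input, and one can pad to ensure this; alternatively one simply absorbs the $O(p^2)$ construction cost into the stated bound, since the theorem only claims an $O(T(p_1+p_2+p_3))$ upper bound and constructing the instance plus the moralization together is $O(p^2 + T(p)) = O(T(p))$ under the standing assumption that $T$ is at least linear in the input size $\Theta(p^2)$).

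Next I would invoke the hypothesized $O(T(p))$-time moralization algorithm on $D$ to obtain the moralized graph $G$, which takes time $O(T(p_1 + p_2 + p_3))$. The key structural observation is the marriage step of moralization: a pair of non-adjacent nodes becomes connected in $G$ precisely when they share a common child in $D$. Since $a_i$ and $c_j$ are never adjacent in $D$ (all $D$-edges go \emph{into} $b$ nodes, never between an $a$ and a $c$), we get $a_i - c_j$ in $G$ if and only if there exists some $b_k$ with $a_i \tikzrightarrow b_k$ and $c_j \tikzrightarrow b_k$ in $D$, which by construction holds if and only if there is some $k$ with $X_{ik} = 1$ and $Y_{kj} = 1$, i.e.\ if and only if $(X \cdot Y)_{ij} = 1$. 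So the last step is to set $Z_{ij} \coloneq 1$ iff the edge $a_i - c_j$ is present in $G$, which is a single $O(p_1 p_3)$ pass over the relevant entries of $G$'s adjacency structure and is again absorbed into $O(T(p))$.

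I do not expect a genuine obstacle here — the reduction is as direct as the figure suggests. The one point requiring a little care is the bookkeeping around $T$: if one wanted the statement to hold for arbitrarily small $T$ (e.g.\ hypothetically $T(p) = p$), the $O(p^2)$ cost of writing down $D$ and of reading off $Z$ would dominate, so one should either note that the input to \textsc{moralize} has size $\Theta(p^2)$ (forcing $T(p) = \Omega(p^2)$ for any correct algorithm) or phrase the conclusion as $O(p^2 + T(p_1+p_2+p_3))$; the paper's framing makes clear the intended reading is the former. A second minor point is to double-check that the \emph{transpose} in the handling of $Y$ is correct: column $j$ of $Z$ should be reachable-via-common-child from column $j$ of $Y$, and $Y_{kj} = 1$ is exactly the condition "$c_j$ is a parent of $b_k$," so the construction $c_j \tikzrightarrow b_k \iff Y_{kj} = 1$ is the right one. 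Everything else is routine, and together with the earlier direction (moralization via $AA^{\mathsf T}$) this certifies that \textsc{moralize} and \textsc{bmm} are computationally equivalent up to the stated polynomial overheads.
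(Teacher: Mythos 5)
Your proposal is correct and follows essentially the same route as the paper: the identical three-layer DAG with $a_i \tikzrightarrow b_k$ for $X_{ik}=1$ and $c_j \tikzrightarrow b_k$ for $Y_{kj}=1$, reading off $Z_{ij}$ from the moral edge $a_i - c_j$, and absorbing the $O(p^2)$ construction and read-off costs into $O(T(p))$ via $T(p)=\Omega(p^2)$. The only cosmetic difference is that the paper justifies $T(p)=\Omega(p^2)$ by the worst-case \emph{output} size of moralization (it may have to add $\Omega(p^2)$ edges), which is slightly more robust than your input-reading argument since it does not depend on the input representation being dense.
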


\begin{proof}
Construct a graph $G$ with node set $A \cup B \cup C$, where
$A=\{a_1,...,a_{p_1}\}$, $B=\{b_1,...,b_{p_2}\}$, and $C=\{c_1,...,c_{p_3}\}$.
Insert edge $a_{i} \tikzrightarrow b_{j}$ if $X_{ij} = 1$ and edge $b_{i}
\tikzleftarrow c_{j}$ if $Y_{ij} = 1$. Note that this graph is by definition
acyclic. Then, moralization will add an edge $a_{i} - c_{j}$ if, and only if,
$Z_{ij} = 1$ as this only happens in case there is at least one $k$ such that
$X_{ik} = 1$ and $Y_{kj} = 1$. For the run time, consider the three steps of
the sketched algorithm: (i) building graph $G$, (ii) performing moralization of
graph $G$ and (iii) constructing the resulting matrix $Z$. As $G$ has $p = p_1
+ p_2 + p_3$ nodes, (ii) takes time $O(T(p_1+p_2+p_3))$ by assumption. The time
complexity of (i) can be bounded by $O(p_1 \cdot p_2 + p_2 \cdot p_3)$, which
in turn is in $O((p_1 + p_2 + p_3)^2)$. Because $T(p)$ is necessarily in
$\Omega(p^2)$ (every algorithm for moralization has to add $\Omega(p^2)$ edges
in the worst-case), it follows that $O((p_1 + p_2 + p_3)^2)$ is in
$O(T(p_1+p_2+p_3))$. For (iii), similar arguments apply as it can be bounded by
$O(p_1 \cdot p_3)$.
\end{proof}

We conclude that moralization has a higher time complexity than previously
reported and is not suited as a primitive for linear-time algorithms in
contrast to CIfly primitives. 

\begin{figure}
  \centering
  \begin{tikzpicture}[>={Stealth[round,sep]}, xscale=1.1]
    \node (v1) at (6,3) {$v_1$};
    \node (v2) at (7,3) {$v_2$};
    \node (v3) at (8,3) {$v_3$};
    \node (v4) at (9,3) {$v_4$};
    
    \graph[use existing nodes, edges = {arc}] {
      v1 -- v2;
      v2 -- v3;
      v4 -- v3;
    };
    
    \node (v1s) at (6,1) {$v_1^s$};
    \node (v2s) at (7,1) {$v_2^s$};
    \node (v3s) at (8,1) {$v_3^s$};
    \node (v4s) at (9,1) {$v_4^s$};
    \node (v1) at (6,0) {$v_1$};
    \node (v2) at (7,0) {$v_2$};
    \node (v3) at (8,0) {$v_3$};
    \node (v4) at (9,0) {$v_4$};
    \node (v1t) at (6,-1) {$v_1^t$};
    \node (v2t) at (7,-1) {$v_2^t$};
    \node (v3t) at (8,-1) {$v_3^t$};
    \node (v4t) at (9,-1) {$v_4^t$};
    
    \graph[use existing nodes, edges = {arc}] {
      v1s -- v1;
      v2s -- v2;
      v3s -- v3;
      v4s -- v4;
      v1 -- v1t;
      v2 -- v2t;
      v3 -- v3t;
      v4 -- v4t;
      v1 -- v2;
      v2 -- v3;
      v4 -- v3;
    };
    
    \node (v1s) at (12,1) {$v_1^s$};
    \node (v2s) at (13,1) {$v_2^s$};
    \node (v3s) at (14,1) {$v_3^s$};
    \node (v4s) at (15,1) {$v_4^s$};
    \node (v1t) at (12,-1) {$v_1^t$};
    \node (v2t) at (13,-1) {$v_2^t$};
    \node (v3t) at (14,-1) {$v_3^t$};
    \node (v4t) at (15,-1) {$v_4^t$};
    
    \graph[use existing nodes, edges = {arc}] {
      v1s -- v1t;
      v2s -- v2t;
      v3s -- v3t;
      v4s -- v4t;
      v1s -- v2t;
      v2s -- v3t;
      v4s -- v3t;
    };

    \graph[use existing nodes, edges = {bidir}] {
      v1t -- v2t;
      v2t -- v3t;
      v3t -- v4t;
      v1t --[bend right] v3t;
    };
    
    \draw[arc] (v1s) to (11.5, 1) to (11.5,-1.75) to (14,-1.75) to (v3t);
    
    \node (v1) at (12,3) {$v_1$};
    \node (v2) at (13,3) {$v_2$};
    \node (v3) at (14,3) {$v_3$};
    \node (v4) at (15,3) {$v_4$};
    
    \graph[use existing nodes, edges = {arc}] {
      v1 -- v2;
      v2 -- v3;
      v4 -- v3;
      v1 --[bend right] v3;
    };
    \node[rotate=270] at (7.5,2) {$\Longrightarrow$};
    \node[rotate=90] at (13.5,2) {$\Longrightarrow$};
    \node at (10.5,0) {$\Longrightarrow$};
    
  \end{tikzpicture}
  \caption{Turing reduction showing that \textsc{transitive closure} can be solved in time
  	$O(T(p))$, given a $T(p)$ algorithm for \textsc{latent projection}.
  The edges in the transitive closure graph (top right) can be read off as edges
  $v_i^s \rightarrow v_j^t$ for $i \neq j$ in the latent projection graph (bottom
  right).}
  \label{fig:tctolp}
\end{figure}
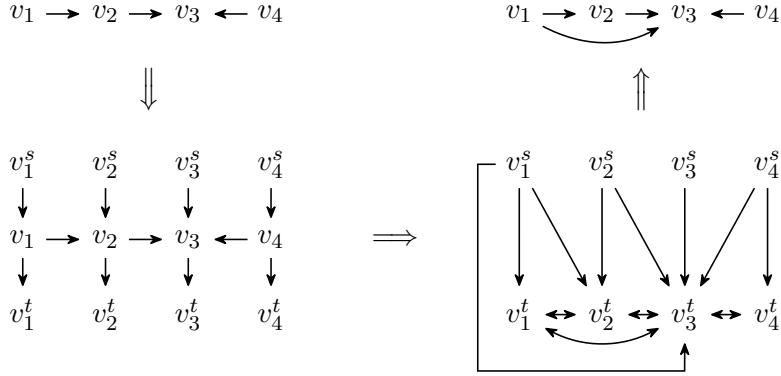

\subsection{Latent Projection}
\label{appendix:latentprojection}
\emph{Latent projection}~\citep{richardson2003markov} is a tool for removing
nodes from an ADMG while preserving the causal and probabilistic information
between the remaining ones.  Originally introduced for removing latent
variables it has also become popular as a method for simplifying graphical
criteria \citep{textor2016robust,witte2020efficient,henckel2024graphical}. The
complexity of computing a latent projection has not been discussed in the
literature. We now show that computing the latent projection is computationally
equivalent to the problem of computing the transitive closure of a directed
graph and, in turn, equivalent to \textsc{bmm} and \textsc{moralize}.
Linear-time algorithms therefore need to avoid latent projections, as we have
done in Section~\ref{subsection:efficientIV}. Formally, the latent projection
of a DAG is defined as follows.

\begin{definition}[Latent projection]
  Let $G = (V \cup L, (E_1,E_2))$ be an ADMG with $V$ denoting observed and
  $L$ latent variables. The latent projection of $G$ over $L$ is the ADMG $G^L = (V,
  (E'_1,E'_2))$ containing an edge $a \tikzrightarrow b$ in $E_1'$ if $G$
  contains a directed path $a \tikzrightarrow \dots \tikzrightarrow b$ with all
  non-endpoint nodes being in $L$ and an edge $a \tikzleftrightarrow b$  in
  $E_2'$ if $G$ contains a path of the form $a \tikzleftarrow \dots
  \tikzleftarrow f \tikzrightarrow \dots \tikzrightarrow b$ with all
  non-endpoint nodes being non-colliders and in $L$.
\end{definition}

\begin{example}
  Let $G$ be the ADMG at the bottom left of Figure~\ref{fig:tctolp}. Suppose
  the nodes in $L=\{v_1,v_2,v_3,v_4\}$ are latent and the nodes in
  $O=\{v_1^s,v_2^s,v_3^s,v_4^s,v_1^t,v_2^t,v_3^t,v_4^t\}$ are observed. The
  latent projection $G^L$ of $G$, is given on the right-hand side. We obtain
  $G^L$ from $G$ as follows. Add a directed edge between two nodes in $G^L$ if
  there is either a directed edge or a directed path consisting of latent
  variables between them in $G$. For example, $G$ contains the path $v_1^s
  \tikzrightarrow v_1 \tikzrightarrow v_2 \tikzrightarrow v_3 \tikzrightarrow
  v_3^t$ so we add the edge $v_1^s \tikzrightarrow v_3^t$ to $G^L$. Similarly,
  we add a bidirected edge between two nodes in $G^L$, if there is either a
  bidirected edge or there exists a non-directed and colliderless path
  consisting of latent nodes between them. For example, $G$ contains the path
  $v_3^t \tikzleftarrow v_3 \tikzleftarrow v_4 \tikzrightarrow v_4^t$ so we add
  the edge $v_3^t \tikzleftrightarrow v_4^t$ to $G^L$. 
\end{example}

We call the computational problem of constructing the latent projection for a
given directed graph \textsc{latent-projection}. We put it in relation to
\textsc{transitive-closure}, the problem of computing the transitive closure of
a directed graph.

\begin{definition}[Transitive closure]
  Let $G$ be a directed graph. The \emph{transitive closure} of $G$ is the
  graph $T$, which contains $a \tikzrightarrow b$ if there is a directed path
  $a \tikzrightarrow \dots \tikzrightarrow b$ in $G$.
\end{definition}

We now show that \textsc{latent-projection} is computationally equivalent to
\textsc{transitive-closure}. Remarkably, we rely on CIfly state-space graphs
for part of the proof (see Figure \ref{figure:latent reduction}); an example of
how they can be useful beyond being a tool for CIfly algorithms.

\begin{theorem}
  Let $G$ be a directed graph. If there exists an $O(T(p))$ algorithm for
  \textsc{latent-projection}, with $T: \mathbb{N} \mapsto \mathbb{N}$, then the
  transitive closure of $G$ can be computed in worst-case time $O(T(p))$.
  Conversely, if there exists an $O(T(p))$ algorithm for
  \textsc{transitive-closure}, then the latent projection of $G$ over any $L$
  can be computed in worst-case time $O(T(p))$. 
\end{theorem}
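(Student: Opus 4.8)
The plan is to prove the equivalence by giving a Turing reduction in each direction, in the same spirit as the \textsc{moralize}--\textsc{bmm} result above. Two standing observations will be used throughout: any algorithm for \textsc{latent-projection} or for \textsc{transitive-closure} takes $\Omega(p^2)$ time in the worst case, since its output may contain $\Omega(p^2)$ edges; and, as is standard for such reductions (and implicit in the preceding \textsc{moralize} argument), $T$ is mild enough that $T(\Theta(p)) = \Theta(T(p))$. Every auxiliary graph built below has $O(p)$ nodes, so one call of the assumed algorithm on it costs $O(T(p))$, while constructing those graphs and reading off the answer costs only $O(p^2)$, which is absorbed.

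For the first implication --- computing the transitive closure of $G$ given a latent-projection algorithm --- I would use the node-splitting construction sketched in Figure~\ref{fig:tctolp}. Given the directed graph $G = (V,E)$, which I may assume acyclic (otherwise contract strongly connected components in linear time, apply the reduction to the resulting DAG, and recover the general answer, including self-loops, in $O(p^2)$ time), build the ADMG $G'$ that replaces each $v \in V$ by a source copy $v^s$ and a sink copy $v^t$, adds $v^s \tikzrightarrow v$ and $v \tikzrightarrow v^t$, keeps the edges of $G$, and declares $V$ latent. Since the $v^s$ are sources and the $v^t$ sinks in $G'$, the only directed path from $u^s$ to $w^t$ with all interior nodes in $V$ is $u^s \tikzrightarrow u \tikzrightarrow \cdots \tikzrightarrow w \tikzrightarrow w^t$, which exists exactly when $w$ is reachable from $u$ in $G$; and no bidirected edge of the projection is incident to any $v^s$. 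Hence the transitive closure of $G$ is precisely the set of directed edges $u^s \tikzrightarrow w^t$ with $u \neq w$ in the latent projection of $G'$ over $V$. As $G'$ has $3p$ nodes, this costs $O(T(p))$.

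For the converse --- computing the latent projection $G^L$ of a directed graph $G$ on $V \cup L$ given a transitive-closure algorithm --- I would compute the directed edges $E_1'$ and the bidirected edges $E_2'$ separately. A directed edge $a \tikzrightarrow b$ lies in $E_1'$ iff $G$ has a directed $a$-to-$b$ path with interior in $L$; I capture this by the directed graph on $\{a^+ : a \in V\} \cup L \cup \{b^- : b \in V\}$ with edges $a^+ \tikzrightarrow \ell$ whenever $a \tikzrightarrow \ell$, $\ell \tikzrightarrow \ell'$ for latent-to-latent edges, $\ell \tikzrightarrow b^-$ whenever $\ell \tikzrightarrow b$, and $a^+ \tikzrightarrow b^-$ whenever $a \tikzrightarrow b$ in $G$ (all $\ell,\ell' \in L$): since the $a^+$ are sources, the $b^-$ sinks, and $L$-nodes reach only $L$- and $b^-$-nodes, $a \tikzrightarrow b \in E_1'$ iff $b^-$ is reachable from $a^+$, which its transitive closure exposes. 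A bidirected edge $a \tikzleftrightarrow b$ lies in $E_2'$ (for $a \neq b$) iff there is a latent node $f$ with directed paths $f \tikzrightarrow \cdots \tikzrightarrow a$ and $f \tikzrightarrow \cdots \tikzrightarrow b$ whose interiors lie in $L$; I would detect this for all pairs at once through a CIfly-style state-space graph (cf.\ Figure~\ref{figure:latent reduction}) with states $(\ell,\mathrm{up})$ and $(\ell,\mathrm{down})$ for $\ell \in L$ together with source/sink states $\alpha_a,\beta_b$, in which the ``up'' phase walks a path directed toward $a$ against its orientation, a single turn transition at the common source $f$ switches to the ``down'' phase, and the ``down'' phase walks a path directed toward $b$; then $a \tikzleftrightarrow b \in E_2'$ iff $\beta_b$ is reachable from $\alpha_a$, which the transitive closure of this $O(p)$-node graph exposes. (Equivalently, the bidirected condition is a Boolean product $M M^{\top}$ of the latent-ancestor incidence matrix with itself, reducible to transitive closure by the textbook three-layer construction.) Two transitive-closure calls on $O(p)$-node graphs therefore give $O(T(p))$.

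I expect the real obstacle to be this last point --- phrasing the bidirected-edge condition as a pure reachability question. Directed edges of the projection come from plain directed paths, but bidirected ones come from two-legged ``confounding'' paths, and encoding the constraint ``turn around exactly once, and only at a latent node'' is exactly what forces the auxiliary state-space graph (or the detour through Boolean matrix multiplication); getting its endpoint cases right --- legs of length one, or the common source adjacent to $a$ or $b$ --- is the part that needs care. The remainder, namely the node-splitting reduction and the $E_1'$ gadget, is routine once one checks that every intermediate graph stays of size $O(p)$, so that the assumed $O(T(p))$ algorithms can be invoked as black boxes.
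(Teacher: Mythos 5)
Your proposal is correct and follows essentially the same route as the paper: the first direction uses exactly the node-splitting gadget of Figure~\ref{fig:tctolp} with $V$ declared latent, and the second direction computes directed and bidirected projection edges via two transitive-closure calls on $O(p)$-node auxiliary graphs that are, up to presentation, the two state-space graphs of Figure~\ref{figure:latent reduction} (your ``up/down with a single turn at $f$'' phases are the paper's init/lat./yield colors), with the same $T(p) = \Omega(p^2)$ observation absorbing the construction and read-off costs. Your extra care about cyclic inputs (SCC contraction) and the single-latent endpoint cases goes slightly beyond what the paper spells out, but the argument is the same.
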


\begin{figure}[t]
  \begin{minipage}{0.5\textwidth}
    \vspace*{-0.95cm}
    \begin{tabular}{rlrlr}
      \multicolumn{5}{@{}p{2in}@{}}{\textbf{Sets}:   $L$}                                  \\[0.25em] \toprule
      \multicolumn{2}{c}{current}     & \multicolumn{2}{c}{next}  &                        \\ \cmidrule(r){1-2}\cmidrule(l){3-4}
      n.-t.\            & color       & n.-t.\            & color & rule                   \\ \midrule
      $\tikzrightarrow$ & init, lat.\ & $\tikzrightarrow$ & yield & $\text{next} \notin L$ \\
      $\tikzrightarrow$ & init, lat.\ & $\tikzrightarrow$ & lat.\ & $\text{next} \in L$    \\ \bottomrule
    \end{tabular}
  \end{minipage}\hspace*{0.5cm}
  \begin{minipage}{0.5\textwidth}
    \begin{tabular}{rlrlr}
      \multicolumn{5}{@{}p{2in}@{}}{\textbf{Sets}:   $L$}                                   \\ [0.25em] \toprule
      \multicolumn{2}{c}{current} & \multicolumn{2}{c}{next}       &                        \\ \cmidrule(r){1-2}\cmidrule(l){3-4}
      n.-t.\            & color & n.-t.\            & color        & rule                   \\ \midrule
      $\tikzleftarrow$  & init  & $\tikzleftarrow$  & lat.\        & $\text{next} \in L$    \\
      $\tikzleftarrow$  & lat.\ & any               & lat.\        & $\text{next} \in L$    \\
      $\tikzrightarrow$ & lat.\ & $\tikzrightarrow$ & lat.\        & $\text{next} \in L$    \\
      $\tikzrightarrow$ & lat.\ & $\tikzrightarrow$ & \text{yield} & $\text{next} \notin L$ \\ \bottomrule
    \end{tabular}
  \end{minipage}
  \caption{CIfly rule tables for constructing state-space graphs that can be
  used to compute the latent projection of a DAG over $L$ by using
  \textsc{transitive-closure}. }
  \label{figure:latent reduction}
\end{figure}

\begin{proof}
  Assume first we have an algorithm for \textsc{latent-projection}. We aim to
  compute the transitive closure for a directed graph $G = (V, E)$. To do so we
  construct a new directed graph $G' = (V^s \cup V \cup V^t, E')$, with 
  \[
    E' = \{ v_i^s \tikzrightarrow v_i \mid v_i \in V \} \cup \{ v_i
    \tikzrightarrow v_j \mid v_i \tikzrightarrow v_j \in E \} \cup \{ v_i
    \tikzrightarrow v_i^t \mid v_i \in V \}. 
  \]
  and let $L \coloneq V$ be the set of latent variables. If and only if $v_i^s
  \tikzrightarrow v_j^t$ for $i \neq j$ is in the latent projection of $G'$ over $L$, then the
  edge $v_i \tikzrightarrow v_j$ is in the transitive closure of $G$. The
  reason for this is that all paths over non-collider latent variables from
  $v_i^s$ to $v_j^t$ have the form $v_i^s \tikzrightarrow v_i \tikzrightarrow
  \dots \tikzrightarrow v_j \tikzrightarrow v_j^t$ and therefore correspond to
  a path of the form $v_i \tikzrightarrow \dots \tikzrightarrow v_j$ in $G$. We
  illustrate this construction in Figure~\ref{fig:tctolp}. Observe that the
  graph $G'$ has size linear in the size of $G$ and can be constructed in
  linear-time.  Moreover, the size of the projected graph is at most $O(p^2)$,
  which is in $O(T(p))$, as the latent-projection may have worst-case size
  $\Omega(p^2)$ yielding a trivial lower bound on $T(p)$. Hence,
  \textsc{transitive-closure} can be solved in time $O(T(p))$.

 Assume now that we have an algorithm for \textsc{transitive-closure}. We now
 show how \textsc{latent-projection} can be solved with two calls to
 \textsc{transitive-closure}, one determining the directed and the other the
 bidirected edges of the latent projection. The calls to
 \textsc{transitive-closure} are performed on the state-space graphs given by
 the rule tables in Figure~\ref{figure:latent reduction}. The table on the left
 tracks directed paths over latents and the table on the right tracks paths of
 the form $\tikzleftarrow \cdots \tikzleftarrow \tikzrightarrow \cdots
 \tikzrightarrow$ (with at least one $\tikzleftarrow$ and at least one
 $\tikzrightarrow$). We call the transitive closures of these two graphs
 $T_{\mathrm{dir}}$ and $T_{\mathrm{bidir}}$, respectively. An edge $(a,
 \tikzrightarrow, \text{init})$ to $(b, \tikzrightarrow, \text{yield})$ in
 $T_{\mathrm{dir}}$ indicates an edge $a \tikzrightarrow b$ in the latent
 projection of $G$. An edge $(a, \tikzleftarrow, \text{init})$ to $(b,
 \tikzrightarrow, \text{yield})$ in $T_{\mathrm{bidir}}$ corresponds to an edge
 $a \tikzleftrightarrow b$ in the latent projection of $G$. By the definition
 of CIfly reductions, the number of nodes in the state-space graph is in
 $O(p)$. Moreover, they can be constructed in time $O(p + m)$, as shown in
 Theorem~\ref{theorem:lintime}, which is in $O(p^2)$. We note that $T(p)$ is
 trivially in $\Omega(p^2)$ because there exists inputs and outputs of the
 transitive closure of this size. Hence, constructing the two state-space
 graphs is in $O(T(p))$ and the same holds for reading off the edges in the
 computed latent projection yielding overall time $O(T(p))$ of the procedure.
\end{proof}

The tables in Figure \ref{figure:latent reduction} can be used to implement a
latent projection algorithm in CIfly. For both tables start $p$ CIfly calls;
one from each node in $G$. This computes the latent projection in time $O(p
\cdot (p + m))$. There exist ways to solve transitive closure, and therefore
latent projection, with a better worst-case run time using fast matrix
multiplication algorithms, but for practical purposes this is a reasonable
implementation. 

\section{Conclusion and Outlook}
\label{section:conclusion}

We introduce CIfly, a framework for developing efficient algorithms and
algorithmic primitives in causal inference based on causal-to-reach reductions.
By reducing diverse reasoning tasks to reachability in dynamically constructed
state-space graphs, we provide a unified and efficient abstraction for
algorithm design. We propose rule tables as a high-level specification language
and provide a compiled Rust backend to ensure efficient execution, with
bindings for Python and R. This architecture facilitates prototyping,
reproducibility and deployment. It also removes the need to rewrite
core logic for new tasks or languages, a common source of errors. By
construction, any algorithm expressed within the rule table schema is
guaranteed to run in linear time. Through this, CIfly abstracts away low-level
implementation details and lets users focus on the logic of causal reasoning.

We demonstrate CIfly’s performance and flexibility across tasks on DAGs,
CPDAGs, and ADMGs, including novel algorithms for instrumental variables.
Future work includes CIfly algorithms for the front-door criterion
\citep{wienobst2024linear} and extensions to MAGs and PAGs
\citep{richardson2002ancestral}. Other potential avenues are applications to m-graphs \citep{mohan2021graphical}, max-linear Bayesian networks \citep{amendola2022conditional} or extremal graphical models \citep{engelke2020graphical}. Our website at \url{cifly.pages.dev}
includes additional rule tables and implementations and invites community
contributions. The CIfly framework also opens a broader research agenda,
centered around the question of which causal tasks can be expressed as
reachability problems. 

We believe that the flexibility, efficiency and transparency of CIfly can
provide a fruitful basis for future algorithm and software development in
causal inference. What BLAS and LAPACK are to linear algebra, we envision CIfly
becoming for algorithm development in causal reasoning. And, as a welcome
convenience, it now offers clean primitives even for simple tasks such as
descendant computation, that is, no more need to exponentiate adjacency
matrices or hand-roll your own BFS just to answer basic graph queries.

\DeclareRobustCommand{\VAN}[3]{#3}
\bibliography{main}

\begin{thebibliography}{46}
\providecommand{\natexlab}[1]{#1}
\providecommand{\url}[1]{\texttt{#1}}
\expandafter\ifx\csname urlstyle\endcsname\relax
  \providecommand{\doi}[1]{doi: #1}\else
  \providecommand{\doi}{doi: \begingroup \urlstyle{rm}\Url}\fi

\bibitem[Alman et~al.(2025)Alman, Duan, Williams, Xu, Xu, and Zhou]{alman2024more}
Josh Alman, Ran Duan, Virginia~Vassilevska Williams, Yinzhan Xu, Zixuan Xu, and Renfei Zhou.
\newblock More asymmetry yields faster matrix multiplication.
\newblock In \emph{ACM-SIAM Symposium on Discrete Algorithms (SODA)}, pages 2005--2039. SIAM, 2025.

\bibitem[Am{\'e}ndola et~al.(2022)Am{\'e}ndola, Kl{\"u}ppelberg, Lauritzen, and Tran]{amendola2022conditional}
Carlos Am{\'e}ndola, Claudia Kl{\"u}ppelberg, Steffen Lauritzen, and Ngoc~M Tran.
\newblock Conditional independence in max-linear bayesian networks.
\newblock \emph{The Annals of Applied Probability}, 32\penalty0 (1):\penalty0 1--45, 2022.

\bibitem[Angrist et~al.(1996)Angrist, Imbens, and Rubin]{angrist1996identification}
Joshua~D. Angrist, Guido~W. Imbens, and Donald~B. Rubin.
\newblock Identification of causal effects using instrumental variables.
\newblock \emph{Journal of the American Statistical Association}, 91\penalty0 (434):\penalty0 444--455, 1996.

\bibitem[Bowden and Turkington(1990)]{bowden1990instrumental}
Roger~J. Bowden and Darrell~A. Turkington.
\newblock \emph{Instrumental Variables}.
\newblock Cambridge University Press, 1990.

\bibitem[Brito and Pearl(2002)]{brito2002generalized}
Carlos Brito and Judea Pearl.
\newblock Generalized instrumental variables.
\newblock In \emph{Conference on Uncertainty in Artificial Intelligence (UAI)}, pages 85--93, 2002.

\bibitem[Cormen et~al.(2022)Cormen, Leiserson, Rivest, and Stein]{cormen2022introduction}
Thomas~H. Cormen, Charles~E. Leiserson, Ronald~L. Rivest, and Clifford Stein.
\newblock \emph{Introduction to Algorithms}.
\newblock MIT Press, 2022.

\bibitem[Cowell et~al.(2007)Cowell, Dawid, Lauritzen, and Spiegelhalter]{cowell2007probabilistic}
Robert~G. Cowell, Philip Dawid, Steffen~L. Lauritzen, and David~J. Spiegelhalter.
\newblock \emph{Probabilistic Networks and Expert Systems: Exact Computational Methods for Bayesian Networks}.
\newblock Springer Science \& Business Media, 2007.

\bibitem[Engelke and Hitz(2020)]{engelke2020graphical}
Sebastian Engelke and Adrien~S Hitz.
\newblock Graphical models for extremes.
\newblock \emph{Journal of the Royal Statistical Society Series B: Statistical Methodology}, 82\penalty0 (4):\penalty0 871--932, 2020.

\bibitem[Fischer and Meyer(1971)]{fischer1971boolean}
Michael~J. Fischer and Albert~R. Meyer.
\newblock {B}oolean matrix multiplication and transitive closure.
\newblock In \emph{Symposium on Switching and Automata Theory (SWAT)}, pages 129--131, 1971.

\bibitem[Geiger et~al.(1989)Geiger, Verma, and Pearl]{geiger1989dseparation}
Dan Geiger, Thomas Verma, and Judea Pearl.
\newblock d-separation: From theorems to algorithms.
\newblock In \emph{Conference on Uncertainty in Artificial Intelligence (UAI)}, pages 139--148, 1989.

\bibitem[Guo et~al.(2023)Guo, Perkovi{\'c}, and Rotnitzky]{guo2023variable}
F.~Richard Guo, Emilija Perkovi{\'c}, and Andrea Rotnitzky.
\newblock Variable elimination, graph reduction and the efficient g-formula.
\newblock \emph{Biometrika}, 110\penalty0 (3):\penalty0 739--761, 2023.

\bibitem[Heinze-Deml et~al.(2018)Heinze-Deml, Maathuis, and Meinshausen]{heinze2018causal}
Christina Heinze-Deml, Marloes~H. Maathuis, and Nicolai Meinshausen.
\newblock Causal structure learning.
\newblock \emph{Annual Review of Statistics and Its Application}, 5\penalty0 (1):\penalty0 371--391, 2018.

\bibitem[Henckel et~al.(2022)Henckel, Perković, and Maathuis]{henckel2019graphical}
Leonard Henckel, Emilija Perković, and Marloes~H. Maathuis.
\newblock Graphical criteria for efficient total effect estimation via adjustment in causal linear models.
\newblock \emph{Journal of the Royal Statistical Society: Series B (Statistical Methodology)}, 84\penalty0 (2):\penalty0 579--599, 2022.

\bibitem[Henckel et~al.(2024{\natexlab{a}})Henckel, Buttenschoen, and Maathuis]{henckel2024graphical}
Leonard Henckel, Martin Buttenschoen, and Marloes~H. Maathuis.
\newblock Graphical tools for selecting conditional instrumental sets.
\newblock \emph{Biometrika}, 111\penalty0 (3):\penalty0 771--788, 2024{\natexlab{a}}.

\bibitem[Henckel et~al.(2024{\natexlab{b}})Henckel, W\"urtzen, and Weichwald]{henckel2024adjustment}
Leonard Henckel, Theo W\"urtzen, and Sebastian Weichwald.
\newblock Adjustment identification distance: A gadjid for causal structure learning.
\newblock In \emph{Conference on Uncertainty in Artificial Intelligence (UAI)}, pages 1569--1598, 2024{\natexlab{b}}.

\bibitem[Hern{\'a}n and Robins(2006)]{hernan2006instruments}
Miguel~A. Hern{\'a}n and James~M. Robins.
\newblock Instruments for causal inference: an epidemiologist's dream?
\newblock \emph{Epidemiology}, pages 360--372, 2006.

\bibitem[Huang and Valtorta(2006)]{huang2006identifiability}
Yimin Huang and Marco Valtorta.
\newblock Identifiability in causal {Bayesian} networks: A sound and complete algorithm.
\newblock In \emph{National Conference on Artificial Intelligence (AAAI)}, pages 1149--1154, 2006.

\bibitem[Jeong et~al.(2022)Jeong, Tian, and Bareinboim]{jeong2022frontdoor}
Hyunchai Jeong, Jin Tian, and Elias Bareinboim.
\newblock Finding and listing front-door adjustment sets.
\newblock In \emph{Conference on Neural Information Processing Systems (NeuRIPS)}, pages 33173--33185, 2022.

\bibitem[Kalisch et~al.(2012)Kalisch, M\"achler, Colombo, Maathuis, and B\"uhlmann]{kalisch2012causal}
Markus Kalisch, Martin M\"achler, Diego Colombo, Marloes~H. Maathuis, and Peter B\"uhlmann.
\newblock Causal inference using graphical models with the {R} package {pcalg}.
\newblock \emph{Journal of Statistical Software}, 47\penalty0 (11):\penalty0 1--26, 2012.

\bibitem[Lauritzen and Spiegelhalter(1988)]{lauritzen1988local}
Steffen~L. Lauritzen and David~J. Spiegelhalter.
\newblock Local computations with probabilities on graphical structures and their application to expert systems.
\newblock \emph{Journal of the Royal Statistical Society: Series B (Methodological)}, 50\penalty0 (2):\penalty0 157--194, 1988.

\bibitem[Maathuis et~al.(2010)Maathuis, Colombo, Kalisch, and B{\"u}hlmann]{maathuis2010predicting}
Marloes~H. Maathuis, Diego Colombo, Markus Kalisch, and Peter B{\"u}hlmann.
\newblock Predicting causal effects in large-scale systems from observational data.
\newblock \emph{Nature Methods}, 7\penalty0 (4):\penalty0 247--248, 2010.

\bibitem[Meek(1995)]{meek1995causal}
Christopher Meek.
\newblock Causal inference and causal explanation with background knowledge.
\newblock In \emph{Conference on Uncertainty in Artificial Intelligence (UAI)}, pages 403--410, 1995.

\bibitem[Mohan and Pearl(2021)]{mohan2021graphical}
Karthika Mohan and Judea Pearl.
\newblock Graphical models for processing missing data.
\newblock \emph{Journal of the American Statistical Association}, 116\penalty0 (534):\penalty0 1023--1037, 2021.

\bibitem[Pearl(1995)]{pearl1995causal}
Judea Pearl.
\newblock Causal diagrams for empirical research.
\newblock \emph{Biometrika}, 82\penalty0 (4):\penalty0 669--688, 1995.

\bibitem[Pearl(2009)]{pearl2009causality}
Judea Pearl.
\newblock \emph{Causality}.
\newblock Cambridge University Press, second edition, 2009.

\bibitem[Perkovi\'c(2020)]{perkovic2020identifying}
Emilija Perkovi\'c.
\newblock Identifying causal effects in maximally oriented partially directed acyclic graphs.
\newblock In \emph{Conference on Uncertainty in Artificial Intelligence (UAI)}, pages 530--539, 2020.

\bibitem[Perkovi\'c et~al.(2018)Perkovi\'c, Textor, Kalisch, and Maathuis]{perkovic2018complete}
Emilija Perkovi\'c, Johannes Textor, Markus Kalisch, and Marloes~H. Maathuis.
\newblock Complete graphical characterization and construction of adjustment sets in markov equivalence classes of ancestral graphs.
\newblock \emph{Journal of Machine Learning Research}, 18\penalty0 (220):\penalty0 1--62, 2018.

\bibitem[Peters and B{\"u}hlmann(2015)]{peters2015structural}
Jonas Peters and Peter B{\"u}hlmann.
\newblock Structural intervention distance for evaluating causal graphs.
\newblock \emph{Neural Computation}, 27\penalty0 (3):\penalty0 771--799, 2015.

\bibitem[Richardson and Spirtes(2002)]{richardson2002ancestral}
Thomas Richardson and Peter Spirtes.
\newblock Ancestral graph markov models.
\newblock \emph{The Annals of Statistics}, 30\penalty0 (4):\penalty0 962--1030, 2002.

\bibitem[Richardson(2003)]{richardson2003markov}
Thomas~S. Richardson.
\newblock Markov properties for acyclic directed mixed graphs.
\newblock \emph{Scandinavian Journal of Statistics}, 30\penalty0 (1):\penalty0 145--157, 2003.

\bibitem[Robins(1986)]{robins1986new}
James~M. Robins.
\newblock A new approach to causal inference in mortality studies with a sustained exposure period-application to control of the healthy worker survivor effect.
\newblock \emph{Mathematical Modelling}, 7:\penalty0 1393--1512, 1986.

\bibitem[Rotnitzky and Smucler(2020)]{rotnitzky2020efficient}
Andrea Rotnitzky and Ezequiel Smucler.
\newblock Efficient adjustment sets for population average causal treatment effect estimation in graphical models.
\newblock \emph{Journal of Machine Learning Research}, 21\penalty0 (188):\penalty0 1--86, 2020.

\bibitem[Shachter(1998)]{shachter1998bayes}
Ross~D. Shachter.
\newblock Bayes-ball: The rational pastime (for determining irrelevance and requisite information in belief networks and influence diagrams).
\newblock In \emph{Conference on Uncertainty in Artificial Intelligence (UAI)}, 1998.

\bibitem[Spirtes et~al.(2000)Spirtes, Glymour, and Scheines]{spirtes2000causation}
Peter Spirtes, Clark Glymour, and Richard Scheines.
\newblock \emph{Causation, Prediction, and Search}.
\newblock MIT Press, second edition, 2000.

\bibitem[Strassen(1969)]{strassen1969gaussian}
Volker Strassen.
\newblock Gaussian elimination is not optimal.
\newblock \emph{Numerische Mathematik}, 13\penalty0 (4):\penalty0 354--356, 1969.

\bibitem[Textor et~al.(2016)Textor, van~der Zander, Gilthorpe, Li{\'s}kiewicz, and Ellison]{textor2016robust}
Johannes Textor, Benito van~der Zander, Mark~S. Gilthorpe, Maciej Li{\'s}kiewicz, and George~T.H. Ellison.
\newblock Robust causal inference using directed acyclic graphs: the {R} package ‘dagitty’.
\newblock \emph{International Journal of Epidemiology}, 45\penalty0 (6):\penalty0 1887--1894, 2016.

\bibitem[Tian et~al.(1998)Tian, Paz, and Pearl]{tian1998finding}
Jin Tian, Azaria Paz, and Judea Pearl.
\newblock Finding minimal d-separators.
\newblock Technical report, University of California, Los Angeles, 1998.

\bibitem[Verma and Pearl(1993)]{verma1993deciding}
Thomas Verma and Judea Pearl.
\newblock Deciding morality of graphs is {NP}-complete.
\newblock In \emph{Conference on Uncertainty in Artificial Intelligence (UAI)}, pages 391--399, 1993.

\bibitem[Wien{\"o}bst et~al.(2024)Wien{\"o}bst, van~der Zander, and Li{\'s}kiewicz]{wienobst2024linear}
Marcel Wien{\"o}bst, Benito van~der Zander, and Maciej Li{\'s}kiewicz.
\newblock Linear-time algorithms for front-door adjustment in causal graphs.
\newblock In \emph{Conference on Artificial Intelligence (AAAI)}, pages 20577--20584, 2024.
\newblock track 18.

\bibitem[Witte et~al.(2020)Witte, Henckel, Maathuis, and Didelez]{witte2020efficient}
Janine Witte, Leonard Henckel, Marloes~H. Maathuis, and Vanessa Didelez.
\newblock On efficient adjustment in causal graphs.
\newblock \emph{Journal of Machine Learning Research}, 21\penalty0 (246):\penalty0 1--45, 2020.

\bibitem[Yu(2018)]{yu2018improved}
Huacheng Yu.
\newblock An improved combinatorial algorithm for {B}oolean matrix multiplication.
\newblock \emph{Information and Computation}, 261:\penalty0 240--247, 2018.

\bibitem[{\VAN{Zander}{Van der Zander}{van der Zander}}(2020)]{van2020algorithmics}
Benito {\VAN{Zander}{Van der Zander}{van der Zander}}.
\newblock \emph{Algorithmics of Identifying Causal Effects in Graphical Models}.
\newblock PhD thesis, University of L{\"u}beck, 2020.

\bibitem[{\VAN{Zander}{Van der Zander}{van der Zander}} and Li{\'s}kiewicz(2020)]{van2020finding}
Benito {\VAN{Zander}{Van der Zander}{van der Zander}} and Maciej Li{\'s}kiewicz.
\newblock Finding minimal d-separators in linear time and applications.
\newblock In \emph{Conference on Uncertainty in Artificial Intelligence (UAI)}, pages 637--647, 2020.

\bibitem[{\VAN{Zander}{Van der Zander}{van der Zander}} et~al.(2014){\VAN{Zander}{Van der Zander}{van der Zander}}, Li{\'s}kiewicz, and Textor]{van2014constructing}
Benito {\VAN{Zander}{Van der Zander}{van der Zander}}, Maciej Li{\'s}kiewicz, and Johannes Textor.
\newblock Constructing separators and adjustment sets in ancestral graphs.
\newblock In \emph{Conference on Uncertainty and Artifical Intelligence (UAI)}, pages 907--916, 2014.

\bibitem[{\VAN{Zander}{Van der Zander}{van der Zander}} et~al.(2015){\VAN{Zander}{Van der Zander}{van der Zander}}, Textor, and Li{\'s}kiewicz]{van2015efficiently}
Benito {\VAN{Zander}{Van der Zander}{van der Zander}}, Johannes Textor, and Maciej Li{\'s}kiewicz.
\newblock Efficiently finding conditional instruments for causal inference.
\newblock In \emph{International Joint Conference on Artificial Intelligence (IJCAI)}, pages 3243--3249, 2015.

\bibitem[{\VAN{Zander}{Van der Zander}{van der Zander}} et~al.(2019){\VAN{Zander}{Van der Zander}{van der Zander}}, Li{\'s}kiewicz, and Textor]{van2019separators}
Benito {\VAN{Zander}{Van der Zander}{van der Zander}}, Maciej Li{\'s}kiewicz, and Johannes Textor.
\newblock Separators and adjustment sets in causal graphs: Complete criteria and an algorithmic framework.
\newblock \emph{Artificial Intelligence}, 270:\penalty0 1--40, 2019.

\end{thebibliography}

\clearpage

\appendix

\section{A Sound and Complete Algorithm for Finding Valid Conditional Instrumental Sets}
\label{appendix:iv:sound:and:complete:finder}
In this section, we derive an algorithm that decides whether a valid
conditional instrumental set relative to $(x,y)$ exists, respectively finds
such a set, based on the sound and complete graphical validity criterion by
\citet{henckel2024graphical}. This procedure is based on the polynomial-time
algorithm given by \citet{van2020algorithmics}. To ease notation we will use
$\tilde{G}$ to denote the ADMG obtained from $G$ by deleting all edges from $x$
to nodes in $\causal_G(x, y)$ as defined in Theorem \ref{theorem:validcis}. The
proofs are  minor adaptations of those by \citet{van2020algorithmics}. We
proceed as follows:

\begin{enumerate}
  \item In Definition \ref{definition: nearest separator}, we define the concept
    of a nearest separator.
  \item In Lemma \ref{lemma: nearest separator suffices}, we use these results
    to show that if there exists a valid conditional instrumental set $(z,W)$
    relative to $(x,y)$ in an ADMG $G$ such that $W\subseteq
    A=\an_{\tilde{G}}(\{y, z\})$, then the nearest separator $W'$ restricted to
    nodes not in $A\setminus \forb_{G}(x, y)$ also forms a valid conditional
    instrumental set.
  \item In Lemma \ref{lemma: ancestral IV}, we show that if a valid conditional
    instrumental set relative to $(x,y)$ in $G$ exists, then there also exists
    one satisfying the restriction that $W\subseteq \mathrm{an}(z\cup y,
    \tilde{G})$.
  \item Based on these results, in Algorithm~\ref{algorithm:condiv}, we show
    how to compute, for a given ADMG $G$ and $(x, y)$, a conditional instrument
    $(z, W)$ or decide that none exists in time $O(p \cdot (p + m))$. 
\end{enumerate}

\begin{definition}[Nearest separator]
  Consider disjoint node sets $\{y, z\}$ and $R$ in an ADMG $G$. A node set
  $W$ is a nearest separator with respect to $y$, $z$ and $R$ if
    \begin{enumerate}
      \item $W \subseteq R \cap \an_G(\{y, z\})$,
      \item $y \perp_{G} z \mid W$ and
      \item for all $w\in W$ and any set $W' \subseteq (R \cap \an_G(\{y, z\}))
        \setminus \{x,y,w\}$ such that $w \not\perp_{G} z \mid W'$ it holds
        that $y \not\perp_{G} z \mid W'$.
    \end{enumerate}
    \label{definition: nearest separator}
\end{definition}

\begin{lemma}[Checking nearest separators suffice]
  Consider nodes $x$, $y$ and $z$ in an ADMG $G$. There exists a set $W
  \subseteq A= \mathrm{an}(y \cup z,\tilde{G})$, such that $(z,W)$ is a valid
  conditional instrumental set relative to $(x,y)$ in $G$ if and only if there
  exists a nearest separator $W'$ relative to $(y,z,A \setminus \forb_{G}(x,
  y))$ in $\tilde{G}$, and any such $W'$ satisfies that $(z,W')$ is a valid
  conditional instrumental set relative to $(x,y)$ in $G$.
  \label{lemma: nearest separator suffices}
\end{lemma}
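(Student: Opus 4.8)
The plan is to unpack validity via the criterion of Theorem~\ref{theorem:validcis} and then reason about nearest separators. Write $R \coloneq A \setminus \forb_{G}(x,y)$; since $A = \an_{\tilde{G}}(\{y,z\})$, a nearest separator with respect to $(y,z,R)$ in $\tilde{G}$ is, by Definition~\ref{definition: nearest separator}, a set $W'$ with $W' \subseteq R$, with $y \perp_{\tilde{G}} z \mid W'$, and with the extra minimality-towards-$z$ property in its third condition. By Theorem~\ref{theorem:validcis}, $(z,W)$ is a valid conditional instrumental set relative to $(x,y)$ in $G$ if and only if (i) $(\{z\} \cup W) \cap \forb_{G}(x,y) = \emptyset$, (ii) $x \not\perp_{G} z \mid W$, and (iii) $y \perp_{\tilde{G}} z \mid W$. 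The backward direction of the equivalence is then immediate: if a nearest separator exists and every nearest separator is valid, then any fixed nearest separator $W'$ is a valid conditional instrumental set with $W' \subseteq R \subseteq A$, which is the left-hand side.

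For the forward direction, assume $(z,W)$ is valid with $W \subseteq A$. From (i) we obtain $z \notin \forb_{G}(x,y)$ and $W \cap \forb_{G}(x,y) = \emptyset$, hence $W \subseteq A \setminus \forb_{G}(x,y) = R$; together with (iii), $W$ is a separator of $y$ and $z$ in $\tilde{G}$ lying in the restricted region $R \subseteq \an_{\tilde{G}}(\{y,z\})$. Invoking the standard existence result for nearest separators (the construction of \citet{van2020algorithmics}, adapted to our restricted region), a nearest separator $W'$ with respect to $(y,z,R)$ in $\tilde{G}$ exists. It then remains to show that \emph{every} nearest separator $W'$ is a valid conditional instrumental set. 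Conditions (i) and (iii) are immediate: $W' \subseteq R$ is disjoint from $\forb_{G}(x,y)$, we already have $z \notin \forb_{G}(x,y)$ from the hypothesis, and $y \perp_{\tilde{G}} z \mid W'$ holds by definition of a nearest separator. So the entire content is condition (ii).

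The crux, and the step I expect to be the main obstacle, is verifying $x \not\perp_{G} z \mid W'$ for an arbitrary nearest separator $W'$, under the standing assumption that some valid $W \subseteq A$ exists. I would argue by contradiction. Suppose $x \perp_{G} z \mid W'$ while $x \not\perp_{G} z \mid W$, and fix a \emph{path} $q$ from $z$ to $x$ in $G$ open given $W$ (a path exists because d-connection via walks and via paths coincide). Since $W'$ closes $q$, some node $c$ on $q$ witnesses this — either a non-collider of $q$ lying in $W'$, or a collider of $q$ with a descendant in $W$ but none in $W'$ — and we take the witnessing node closest to $z$ along $q$. In the non-collider case, $c \in W' \setminus W$, and the $z$-to-$c$ prefix of $q$ is a path open given $W$; because $q$ reaches $x$ only at its final node and every edge deleted in forming $\tilde{G}$ is incident to $x$, this prefix is a path in $\tilde{G}$, so $c \not\perp_{\tilde{G}} z \mid W$. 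Since $W \subseteq R$, $W \cap \{x,y\} = \emptyset$, and $c \notin W$, the third condition of Definition~\ref{definition: nearest separator} applied to $c \in W'$ with allowed set $W$ forces $y \not\perp_{\tilde{G}} z \mid W$, contradicting (iii). The collider case, where $c \notin W'$, does not reduce to the nearest-separator condition so directly; handling it is where most of the bookkeeping sits, and I would use the ancestral restriction $W, W' \subseteq \an_{\tilde{G}}(\{y,z\})$ together with the fact that $\forb_{G}(x,y)$ is closed under taking descendants (so a collider on $q$ all of whose descendants lie in $\forb_{G}(x,y)$ is already $W$-blocked), following the template of \citet{van2020algorithmics}, e.g.\ by passing to the moralized ancestral graph where separators become vertex cuts. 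Reconciling condition (ii), a statement about $G$, with the nearest separator and condition (iii), which both live in the edge-subgraph $\tilde{G}$, is the genuinely new difficulty relative to the proof it adapts; once it is resolved, validity of every nearest separator, and hence the forward direction, follows.
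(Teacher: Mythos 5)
Your skeleton matches the paper's: the backward direction is immediate, existence of a nearest separator follows from the valid $W$ via the known proposition, conditions (i) and (iii) for $W'$ are automatic, and the whole content is condition (ii), which you attack by taking a $W$-open connection from $z$ to $x$ and asking how $W'$ could block it. Your non-collider case is essentially the paper's first step: the witness $w'\in W'$ closest to $z$ has a $W$-open prefix to $z$, so the third condition of Definition~\ref{definition: nearest separator} (with test set $W$) forces $y \not\perp_{\tilde{G}} z \mid W$, a contradiction.

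However, the collider case — which you explicitly leave open with a pointer to ``the template of \citet{van2020algorithmics}'' and moralized ancestral graphs — is the substantive half of the argument, and your sketched route is doubtful: the moral-graph/vertex-cut reformulation of $y \perp_{\tilde{G}} z \mid W'$ lives in $\an_{\tilde{G}}(\{y,z\}\cup W')$ while the statement $x \not\perp_{\tilde{G}} z \mid W'$ you need lives in a different ancestral closure containing $x$, so there is no single moral graph in which both become cuts. The paper resolves this case differently and more directly. First it reduces everything to $\tilde{G}$ at the outset: since $W \cap \forb_G(x,y)=\emptyset$ and descendants of forbidden nodes are forbidden, any $W$-open walk from $z$ to $x$ avoids the deleted edges, so $x \not\perp_{\tilde{G}} z \mid W$. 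Second, it works with \emph{walks} rather than paths, so that a collider on a $W$-open walk is literally a node of $W \subseteq A = \an_{\tilde{G}}(\{y,z\})$ (in your path formulation the collider merely has a descendant in $W$, which adds an extra layer of routing). Then, for a collider in $W \setminus W'$ on the walk $u$ (after the non-collider step has shown no non-collider of $u$ lies in $W'$), one appends the directed path from that collider down to $z$ or to $y$ and back; choosing the occurrence of $z$ furthest along the expanded walk yields a subwalk from $z$ to $y$ open given $W'$ (contradicting $y \perp_{\tilde{G}} z \mid W'$) or from $z$ to $x$ open given $W'$ (the desired conclusion). Without this detour construction your proof does not close, so as written there is a genuine gap at exactly the step you flagged.
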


\begin{proof}
  The if statement is trivially true so it suffices to show that if there
  exists a $W \subseteq A$, such that $(z,W)$ is a valid conditional
  instrumental set relative to $(x,y)$ in $G$ then a nearest separator $W'$
  relative to $(y,z,A \setminus \forb_G(x, y))$ in $\tilde{G}$ exists and
  $(z,W')$ is a valid conditional instrumental set relative to $(x,y)$ in $G$.
  By assumption, $y \perp_{\tilde{G}} z \mid W$, $x \not\perp_{G} z \mid W$ and
  $W \subseteq A \setminus \forb_G(x, y)$. Using the fact that a nearest
  separator $W'$ exists if some separating set $W$ exists (Proposition 6.7
  by~\cite{van2020finding}), the first and the third statement imply that there
  exists a nearest separator $W'$ relative to $(y,z,A \setminus \forb_G(x, y))$
  in $\tilde{G}$. By nature of being a nearest separator, $y \perp_{\tilde{G}}
  z \mid W'$ and clearly $W' \subseteq A \setminus \forb_G(x, y)$. It therefore
  only remains to show that $x \not\perp_{G} z \mid W'$.
  
  We first note that since $W \cap \forb_G(x, y) = \emptyset$ any walk from $x$
  to $z$ that starts with one of the edges deleted in $\tilde{G}$ must contain
  a collider in $\forb_G(x, y)$. As descendants of forbidden nodes (that are
  not $x$) are forbidden it follows that the walk is blocked. Therefore, there
  exists a walk not containing any such edge and it follows that $x
  \not\perp_{\tilde{G}} z \mid W$. We can therefore restrict our attention to
  $\tilde{G}$ for the remainder of the proof.
  
  Consider a walk $u$ from $z$ to $x$ open given $W$. If $u$ is blocked by $W
  \cup W'$ it follows that it must contain a non-collider in $W'$. Let $w' \in
  W'$ be the non-collider closest to $z$ on $u$. Then the subwalk from $z$ to
  $w'$ is open given $W$ and therefore it follows that $y \not\perp z \mid W$
  since $W'$ is a nearest separator. For the same reason $x \notin W'$ since by
  assumption $x \not\perp_{\tilde{G}} z \mid W$. Therefore, any such $u$ is
  open given $W \cup W'$ and $x \notin W'$.
  
  We have established $u$ is open given $W \cup W'$ so assume it is blocked by
  $W'$. This implies that no non-collider on $u$ is in $W'$ and that there is
  at least one collider not in $W'$ (but in $W$). Since $W \subseteq A$ for any
  such collider there exists a directed path to either $z$ or $y$. Expanding
  $u$ by adding these directed paths and back, choosing the $z$ furthest right
  on this new path we obtain a subwalk from either $z$ to $y$ that is open
  given $W'$ or from $z$ to $x$ that is open given $W'$. Both yield a
  contradiction. Therefore, $u$ cannot be blocked by $W'$ and it follows that
  $x \not\perp_{\tilde{G}} z \mid W'$, which trivially implies that  $x
  \not\perp_{G} z \mid W'$ since $G$ is a supergraph of $\tilde{G}$.
\end{proof}

\begin{table}[t]
  \centering
  \begin{tabular}{rrrrr}
    \multicolumn{3}{@{}p{2in}@{}}{\textbf{Sets}:   $X, Z, A$}                                                               \\
    \multicolumn{3}{@{}p{2in}@{}}{\textbf{Start}:  $(X, \tikzleftarrow)$}                                                   \\
    \multicolumn{3}{@{}p{2in}@{}}{\textbf{Return}: any}                                                                     \\ [0.25em] \toprule
    current                                  & next             &                                                           \\ \midrule
    $\tikzrightarrow$, $\tikzleftrightarrow$ & $\tikzleftarrow,\tikzleftrightarrow$ & $\text{next} \in A$                                       \\
    any                                      & any              & $\text{next} \in A \text{ and } \text{current} \not\in Z$ \\ \bottomrule
  \end{tabular}
  \caption{CIfly rule table for computing $\closure_G(X, Z, A)$ in an ADMG. }
  \label{figure:closure:ruletable}
\end{table}

\begin{lemma}[Checking ancestral instruments suffice]
  Consider nodes $x$ and $y$ in an ADMG $G$. There exists a valid conditional
  instrumental set $(z,W)$ relative to $(x,y)$ in $G$ if an only if there
  exists a node $z'$ and a set $W' \subseteq A= \mathrm{an}(y \cup
  z,\tilde{G})$, such that $(z',W')$ is a valid conditional instrumental set
  relative to $(x,y)$ in $G$. \label{lemma: ancestral IV}
\end{lemma}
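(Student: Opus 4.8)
The ``if'' direction is immediate, since the right-hand side already exhibits a valid conditional instrumental set. For the ``only if'' direction I would start from a valid conditional instrumental set $(z,W)$ relative to $(x,y)$ in $G$, so that Conditions~\ref{condition:validcis:forb}--\ref{condition:validcis:znottoy} of Theorem~\ref{theorem:validcis} hold, and construct a node $z'$ and a set $W' \subseteq \an_{\tilde{G}}(\{y,z'\})$ with $(z',W')$ valid. The overall plan mirrors the corresponding argument of \citet{van2020algorithmics}: first shrink $W$ to its ancestral part, then repair the instrument-strength condition, possibly by replacing $z$ with another node lying on a witnessing open walk.

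For the shrinking step, let $W_1 = W \cap \an_{\tilde{G}}(\{y,z\})$. Condition~\ref{condition:validcis:forb} is inherited by $(z,W_1)$ because $W_1 \subseteq W$. For Condition~\ref{condition:validcis:znottoy}, that is $y \perp_{\tilde{G}} z \mid W_1$, I would use the standard fact that the ancestral part of a $d$-separator still $d$-separates: given a hypothetical walk from $z$ to $y$ open given $W_1$, each collider on it is activated by a descendant in $W_1 \subseteq \an_{\tilde{G}}(\{y,z\})$ and hence stays activated given $W$, while any non-collider of that walk in $W \setminus W_1$ can be ruled out by tracing the walk forward along its out-arrows, which must reach a node of $\an_{\tilde{G}}(\{y,z\})$ (an endpoint, or a collider activated through $W_1$), so that the offending non-collider itself would lie in $\an_{\tilde{G}}(\{y,z\})$, a contradiction; hence the walk is open given $W$, contradicting Condition~\ref{condition:validcis:znottoy} for $(z,W)$.

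The substantive step is recovering Condition~\ref{condition:validcis:ztox}. Recall from the proof of Lemma~\ref{lemma: nearest separator suffices} that $W \cap \forb_G(x,y) = \emptyset$ implies $x \not\perp_G z \mid W \iff x \not\perp_{\tilde{G}} z \mid W$, so it suffices to reason in $\tilde{G}$. If $x \not\perp_{\tilde{G}} z \mid W_1$ already holds, we take $z' = z$ and $W' = W_1$. Otherwise a shortest walk $u$ from $z$ to $x$ open given $W$ is blocked by $W_1$, and since its non-colliders avoid $W \supseteq W_1$ the obstruction must be a collider $c$ on $u$ all of whose activating descendants lie in $W \setminus \an_{\tilde{G}}(\{y,z\})$. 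Following \citet{van2020algorithmics}, I would splice onto the sub-walk of $u$ from $z$ to $c$ a shortest directed path from $c$ to such a descendant $d \in W$ and carry out a case analysis on the structure of $u$ near $c$: either this exposes a node $z'$ on $u$ that is still $d$-connected to $x$ given $W$ through the ancestral portion of the walk, is $d$-separated from $y$ given $W$, and lies outside $\forb_G(x,y) \cup \{x,y\}$ --- in which case $z'$ together with $W \cap \an_{\tilde{G}}(\{y,z'\})$ is the desired pair by the shrinking argument above --- or the structure forces $d$ to be a descendant of a node in $\causal_G(x,y)$, hence $d \in \forb_G(x,y)$, contradicting Condition~\ref{condition:validcis:forb} for $(z,W)$. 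Either branch is then finished by a routine walk-surgery check of all three conditions of Theorem~\ref{theorem:validcis} for the chosen pair.

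I expect the crux to be exactly this last case analysis: restricting $W$ to its ancestral part preserves Condition~\ref{condition:validcis:znottoy} but can break Condition~\ref{condition:validcis:ztox}, precisely at colliders whose activators are non-ancestral, so the argument must show that such activators are either dispensable --- after sliding the instrument node along the open walk to a new $z'$ --- or forbidden, contradicting validity of the original pair. The remaining effort is the bookkeeping already present in \citet{van2020algorithmics}: verifying that the replacement $z'$ is disjoint from $\{x,y\}$, avoids $\forb_G(x,y)$, and retains the separation from $y$ needed for Condition~\ref{condition:validcis:znottoy}.
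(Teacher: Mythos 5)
Your ``if'' direction and your shrinking step are fine: replacing $W$ by $W_1 = W \cap \an_{\tilde{G}}(\{y,z\})$ preserves Conditions~\ref{condition:validcis:forb} and~\ref{condition:validcis:znottoy}, and your tracing argument for why a walk open given $W_1$ but blocked by $W$ cannot exist is the standard one. The gap is exactly where you say you expect it: the repair of Condition~\ref{condition:validcis:ztox} is not a proof but a promissory note. You assert a dichotomy (``either this exposes a node $z'$ \dots or the structure forces $d \in \forb_G(x,y)$'') without establishing it, and the mechanism you describe is already off: in the walk-based d-separation the paper uses throughout, every collider on an open walk lies in $W$ itself, so there is no ``activating descendant $d$'' to splice a directed path to --- the obstruction is simply a collider $c \in W \setminus \an_{\tilde{G}}(\{y,z\})$. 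More importantly, if you slide the instrument to some $z'$ on $u$, nothing in your sketch guarantees $y \perp_{\tilde{G}} z' \mid W \cap \an_{\tilde{G}}(\{y,z'\})$, and this is the condition that actually has to be fought for.

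The paper avoids the shrink-then-repair structure entirely by an extremal argument, and the twist it uses is the idea missing from your proposal. It chooses, over \emph{all} valid pairs, a $(z,W)$ and witness walk $u$ from $z$ to $x$ minimizing first the number of colliders on $u$ and then $|W|$, and then shows this $W$ is \emph{already} ancestral: for each $w \in W$, minimality of $|W|$ means deleting $w$ either opens a $z$--$y$ walk (so $w$ is a non-collider on it and $w \in \an_{\tilde{G}}(y \cup z \cup (W\setminus w))$) or blocks $u$ (so $w$ is a collider on $u$ and $u(x,w)$ is open given $W\setminus w$ with strictly fewer colliders). In the latter case the key move is to observe that $(w, W\setminus w)$ \emph{cannot} be a valid conditional instrumental set --- that would contradict the minimality of the collider count --- and since Conditions~\ref{condition:validcis:forb} and~\ref{condition:validcis:ztox} hold for it, Condition~\ref{condition:validcis:znottoy} must fail, yielding an open walk from $w$ to $y$ given $W \setminus w$; a short argument on its first edge then gives $w \in \an_{\tilde{G}}(y \cup (W\setminus w))$. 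Acyclicity closes the induction and gives $W \subseteq A$. In other words, the paper does not need the slid candidate to be valid; it exploits its \emph{invalidity} to certify ancestrality of the original $W$. Without that idea (or a fully worked-out substitute for your case analysis), your argument does not go through.
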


\begin{proof}
  The one direction is trivial, so we only need to show that if there exists a
  valid conditional instrumental set $(z,W)$ relative to $(x,y)$ in $G$ then
  there exists a node $z'$ and a set $W' \subseteq A$, such that $(z',W')$ is a
  valid conditional instrumental set relative to $(x,y)$ in $G$. Let $(z,W)$ be
  a valid conditional instrumental set and $u$ a walk from $z$ to $x$ open
  given $W$. By the same argument as in the proof of Lemma \ref{lemma: nearest
  separator suffices}, $u$ is also an open walk in $\tilde{G}$ and we can
  restrict ourselves to considering $\tilde{G}$.
  
  Choose $z,W$ and $u$ such that $u$ has the minimum number of colliders
  possible and $W$ is of minimum size, that is, any other walk would have at
  least as many colliders as $u$, and any walk with the same number of
  colliders is opened by a set with at least as many nodes as $W$. By
  construction no node $w \in W$ can be removed without opening a walk from $z$
  to $y$ given $W$ in $\tilde{G}$ or blocking $u$. In the former case $w$ is a
  non-collider on some walk from $z$ to $y$ open given $W\setminus w$ and
  therefore, $w \in \an(y \cup z \cup (W \setminus w),\tilde{G})$.
  
  In the latter case, $w$ must be collider on $u$ and the subwalk of $u$ from
  $x$ to $w$ is open given $W \setminus w$ and contains at least one less
  collider than $u$. This implies that there exists a walk $u'$ from $w$ to $y$
  that is open given $W \setminus w$ as otherwise $(w,W\setminus w)$ would be a
  valid conditional instrumental set relative to $(x,y)$ in $G$, contradicting
  the assumption that $u$ has the fewest number of colliders possible. The path
  $u'$ must begin with an edge of the form $w \rightarrow$. Otherwise $w$ is a
  collider on $u'' = u(x,w) \oplus u'$ and therefore $u''$ is a walk from $x$
  to $y$ that is open given $W$. Therefore, either $w \in \mathrm{an}(y,
  \tilde{G})$ or there exists a collider on $u'$ in $W \setminus w$ and
  therefore $w \in \an(W \setminus w, \tilde{G})$.
  
  We have shown that every node $w \in W$ satisfies that $w \in \an(y \cup z
  \cup (W \setminus w),\tilde{G})$. By the acyclicity of $\tilde{G}$ it follows
  that $W \subseteq A$.
\end{proof}

\begin{algorithm}[t]
  \DontPrintSemicolon
  \SetKwInOut{Input}{input}\SetKwInOut{Output}{output}
  
  \BlankLine
  \Input{A graph $G = (V, (E_{\text{dir}}, E_{\text{bidir}}))$, treatment $x$ and outcome $y$.}
  \Output{Set of nodes $R$.}
  \SetKwFunction{FNearestSep}{nearestSeparator}
  \SetKwProg{Fn}{function}{}{end}

  \BlankLine
  \Fn{\FNearestSep{$G, x, y, R$}}{
    $Z_0 \coloneq R \cap \an_G(x, y) \setminus \{x, y\}$ \;
    $X^* \coloneq \closure_G(x, Z_0, A)$ \;
    \uIf{$y \in X^*$}{
      \Return $\bot$ \;
    }
    \Else{
      \Return $Z_0 \cap X^*$ \;
    }
  }

  \BlankLine
  \ForEach{$z \in V \setminus (\forb_G(x, y) \cup \{y\})$}{
    $W \coloneq$ \FNearestSep{$\tilde{G}$, $y$, $z$, $V \setminus \forb_G(x, y)$} \;
    \If{$W \neq \bot$ \textbf{\emph{and}} $x \perp_G z \mid W$ \label{algorithm:condiv:check:line}}{
      \Return $(z, W)$ \; \label{algorithm:condiv:return:line}
    }
  }

  \Return $\bot$ \;
  \caption{$O(p \cdot (p + m))$ algorithm for finding a valid conditional
  instrument relative to $(x, y)$ in an ADMG $G$ or deciding that none exists.}
  \label{algorithm:condiv}
\end{algorithm}

Algorithm~\ref{algorithm:condiv} describes how the procedure suggested by these
proofs can be implemented. For every $z \not\in \forb_G(x, y)$, the algorithm
computes a nearest separator $W$ that does not contain a node in $\forb_G(x,
y)$ and afterwards checks whether $(z, W)$ is a valid conditional instrumental
set. If this is never the case then by Lemma \ref{lemma: nearest separator
suffices} and Lemma \ref{lemma: ancestral IV} no valid conditional instrumental
set relative to $(x,y)$ in $G$ exists. 

We compute the nearest separators using the linear-time algorithm developed by
\citet{van2020algorithmics}, which can be directly expressed in CIfly. For this
the notion of a \emph{closure} is central. Here, $\closure_G(X, Z, A)$ is
defined as the set of all nodes $v$ such that there exists a path from $X$ to
$v$ that only contains nodes in $A$ and no non-collider in
$Z$~\citep{van2020finding}. A CIfly rule table for computing the closure is
given in Table~\ref{figure:closure:ruletable}. The remaining graphical notions
in Algorithm~\ref{algorithm:condiv}, such as ancestors, forbidden nodes or
$\tilde{G}$ (the graph constructed by removing edges between $x$ and nodes in
$\causal_G(x, y)$) can be computed in linear time relying on straightforward
rule tables, many of which we have already introduced in other contexts in this
paper. Overall, this yields an $O(p \cdot (p+m))$ algorithm as linear time is
needed for every candidate $z$. 

\begin{theorem}\label{theorem:sound:and:complete:iv:algo}
  Let $x$ and $y$ be distinct nodes in an ADMG $G$. Then,
  Algorithm~\ref{algorithm:condiv} returns a valid conditional instrument $(z,
  W)$, if there exists one, and $\bot$ else in time $O(p \cdot (p + m))$. 
\end{theorem}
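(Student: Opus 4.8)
The proof splits into correctness — soundness together with completeness — and a running-time bound, and the engine for correctness is Lemma~\ref{lemma: nearest separator suffices}, Lemma~\ref{lemma: ancestral IV} and the validity characterization of Theorem~\ref{theorem:validcis}. The preliminary step is to establish that the subroutine \texttt{nearestSeparator}, invoked on $(\tilde{G}, y, z, R)$ with $R = V \setminus \forb_G(x,y)$, returns a nearest separator in the sense of Definition~\ref{definition: nearest separator} with respect to $y$, $z$ and $R \cap \an_{\tilde{G}}(\{y,z\})$ in $\tilde{G}$ whenever one exists, and $\bot$ otherwise. This is precisely the closure-based characterization of \citet{van2020finding, van2020algorithmics} (their Proposition 6.7), and the closure appearing in it is computed by the CIfly rule table of Table~\ref{figure:closure:ruletable}, so by Theorem~\ref{theorem:lintime} one such call runs in $O(p+m)$.

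For soundness, suppose the algorithm returns $(z,W)$ on line~\ref{algorithm:condiv:return:line}. Then $z \in V \setminus (\forb_G(x,y) \cup \{y\})$, and since $x \in \forb_G(x,y)$ we also have $z \neq x$; moreover $W \neq \bot$ is a nearest separator of the kind above and the d-separation check on line~\ref{algorithm:condiv:check:line} has passed. From $W \subseteq V \setminus \forb_G(x,y)$ and $y, z \notin W$ (implicit in the separation statement defining a nearest separator) one obtains pairwise disjointness of $\{x,y\}$, $\{z\}$ and $W$ together with Condition~\ref{condition:validcis:forb}; the separation property of the nearest separator is Condition~\ref{condition:validcis:znottoy}, that is $y \perp_{\tilde{G}} z \mid W$; and the check on line~\ref{algorithm:condiv:check:line} is exactly Condition~\ref{condition:validcis:ztox}. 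Theorem~\ref{theorem:validcis} then certifies that $(z,W)$ is a valid conditional instrumental set.

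For completeness, suppose a valid conditional instrumental set relative to $(x,y)$ in $G$ exists. Lemma~\ref{lemma: ancestral IV} supplies one, $(z',W')$, with $W' \subseteq \an_{\tilde{G}}(\{y,z'\})$; Condition~\ref{condition:validcis:forb} forces $z' \notin \forb_G(x,y)$ and disjointness forces $z' \neq y$, so $z'$ is among the candidates the loop iterates over. Either the loop has already returned some tuple — valid by soundness, finishing the proof — or it reaches $z'$, where the existence of such a $W'$ lets me apply Lemma~\ref{lemma: nearest separator suffices}: a nearest separator $W$ relative to $(y, z', \an_{\tilde{G}}(\{y,z'\}) \setminus \forb_G(x,y))$ in $\tilde{G}$ exists, so by the preliminary step \texttt{nearestSeparator} returns $W \neq \bot$, and the same lemma guarantees $(z',W)$ is valid, whence Condition~\ref{condition:validcis:ztox} holds, the check on line~\ref{algorithm:condiv:check:line} succeeds, and $(z',W)$ is returned. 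If, on the other hand, no valid conditional instrumental set exists, soundness forbids any tuple from being returned and the algorithm outputs $\bot$.

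For the running time, $\forb_G(x,y)$, $\causal_G(x,y)$ and hence $\tilde{G}$ are precomputed once in $O(p+m)$ using a constant number of CIfly calls as in Section~\ref{section:adjustment} (folding the deleted edges defining $\tilde{G}$ into the transition rules rather than building $\tilde{G}$ explicitly). The loop runs at most $p$ times, and each iteration performs one \texttt{nearestSeparator} call — a reachability computation of $\an_{\tilde{G}}(\{y,z\})$, one closure call via Table~\ref{figure:closure:ruletable}, and $O(p)$ set operations — together with one ADMG d-separation check (Example~\ref{example:ADMG d-sep}, Figure~\ref{figure:admgcode}), each in $O(p+m)$ by Theorem~\ref{theorem:lintime}; the total is therefore $O(p \cdot (p+m))$. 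I expect the main obstacle to be the preliminary step and its set bookkeeping: verifying that \texttt{nearestSeparator} really computes a nearest separator in the precise sense of Definition~\ref{definition: nearest separator}, and checking that the parameter $R = V \setminus \forb_G(x,y)$ passed by the loop, after the internal restriction to $R \cap \an_{\tilde{G}}(\{y,z\})$, matches the set $\an_{\tilde{G}}(\{y,z\}) \setminus \forb_G(x,y)$ that Lemmas~\ref{lemma: nearest separator suffices} and~\ref{lemma: ancestral IV} are stated with, so that those lemmas apply verbatim; everything after that is routine given Theorem~\ref{theorem:lintime}.
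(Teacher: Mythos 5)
Your proposal is correct and follows essentially the same route as the paper's proof: both reduce correctness to Lemma~\ref{lemma: ancestral IV} and Lemma~\ref{lemma: nearest separator suffices}, verify the three conditions of Theorem~\ref{theorem:validcis} (forbiddenness by construction of $z$ and $W$, the $y$--$z$ separation by the definition of a nearest separator, and the $x$--$z$ connection by the explicit check in line~\ref{algorithm:condiv:check:line}), and bound the run time by a constant number of $O(p+m)$ reachability and set operations per candidate $z$. Your write-up is merely more explicit than the paper's, in particular about the soundness/completeness split and about matching the parameter $R = V \setminus \forb_G(x,y)$ against the set $A \setminus \forb_G(x,y)$ used in the lemmas.
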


\begin{proof}
  By Lemma~\ref{lemma: nearest separator suffices} and~\ref{lemma: ancestral
  IV}, it holds that there exists a valid conditional instrument $(z, W)$ if,
  and only if, the pair $(z, W')$ for a nearest separator $W'$ relative to $(y,
  z, A \setminus \forb_G(x, y))$ is also a valid conditional instrument. As the
  algorithm tests all non-forbidden $z$, the nearest separator is computed by
  the (unmodified) algorithm given by~\cite{van2020finding} and it is checked
  whether $(z, W')$ fulfills the conditions of a conditional instrument
  (condition~\ref{condition:validcis:forb} is satisfied by construction of $z$
  and $W'$, condition~\ref{condition:validcis:ztox} is checked explicitly in
  line~\ref{algorithm:condiv:check:line} and
  condition~\ref{condition:validcis:znottoy} is satisfied by the definition of
  $W'$), correctness follows. For the run time, observe that for each $z$, a
  constant amount of set operations and reachability calls are performed, thus
  yielding time $O(p \cdot (p+m))$. 
\end{proof}

We can adapt the algorithm to find a valid conditional instrumental set $(z,
W)$ for \emph{all} $z$ for which such a $W$ exists, simply by not terminating
early in line~\ref{algorithm:condiv:return:line} and storing a list of all
found instruments. For a fair comparison with the algorithm for finding
conditional instrumental sets by \texttt{DAGitty} which employs this general
strategy, we use this adapted version of Algorithm \ref{algorithm:condiv} in
our simulations. 

\section{Proofs for Section~\ref{section:iv}}
\label{appendix:proofIV}

\ValidCISwalks*

\begin{proof}
  As \citet{henckel2024graphical} have proven Theorem~\ref{theorem:validcis},
  it suffices to show that tuple $(Z,W)$ satisfies the graphical conditions in
  Theorem \ref{theorem:validcis} if and only if it also satisfies those in
  Theorem \ref{theorem:walksIV}. We first prove that any tuple $(Z,W)$ that
  satisfies the condition in Theorem~\ref{theorem:validcis} also satisfies
  those in Theorem~\ref{theorem:walksIV}. By the characterization of
  d-separation via walks~\citep{shachter1998bayes}, $x \not\perp_{G} Z \mid W$
  implies that Condition~\ref{condition:powerIV} in Theorem
  \ref{theorem:walksIV} holds. Furthermore, $\causal_G(x, y) \subseteq
  \forb_G(x, y)$ and therefore $(Z \cup W) \cap \forb_G(x, y) = \emptyset$
  implies that Condition~\ref{condition:causalblockedIV} holds. To show that
  Condition~\ref{condition:noncausalIV} holds, let $w$ be a proper walk from
  $Z$ to $y$ and suppose it is open given $W$. Since, $Z \perp_{\tilde{G}} y
  \mid W$ it must contain an edge $x \tikzrightarrow c$ where $c \in
  \causal_G(x, y)$. Consider the subwalk $w'=w(x,y)$. Any collider on $w'$
  needs to be in $W$ but the first collider is also a descendant of the causal
  node $c$ and therefore forbidden. This violates $(Z \cup W) \cap \forb_G(x,
  y) = \emptyset$ and therefore $w'$ cannot contain a collider and must be
  directed towards $y$. Therefore, any proper walk from $Z$ to $y$ is either
  blocked by $W$ or satisfies the exception in Condition
  \ref{condition:noncausalIV}.
  
  We now prove that any tuple $(Z,W)$ that does not satisfy the condition in
  Theorem \ref{theorem:validcis} also does not satisfy those in Theorem
  \ref{theorem:walksIV}. By the characterization of d-separation via walks
  \citep{shachter1998bayes}, $x \perp_{G} Z \mid W$ implies that Condition
  \ref{condition:powerIV} in Theorem \ref{theorem:walksIV} is violated. By the
  same characterization, $Z \not\perp_{\tilde{G}} y \mid W$ implies that there
  exists a proper walk $w$ from $Z$ to $y$ open given $W$ in $\tilde{G}$. As
  $\tilde{G}$ is a subgraph of $G$ it follows that $w$ is also a proper walk
  from $Z$ to $y$ open given $W$ in $G$. The walk $w$ also cannot end with a
  segment of the form $x \tikzrightarrow \dots \tikzrightarrow y$ as it would
  then contain an edge that does not exist in $\tilde{G}$. It follows that the
  existence of $w$ violates Condition \ref{condition:noncausalIV}.
  
  It therefore only remains to show that if $(Z \cup W) \cap \forb_G(x, y) \neq
  \emptyset$ then $(Z,W)$ may not satisfy the conditions of Theorem
  \ref{theorem:walksIV}, where we can assume that $(Z \cup W)$ satisfies $x
  \not\perp_{G} Z \mid W$ and $Z \perp_{\tilde{G}} y \mid W$. Suppose first
  that $Z \cap \forb_G(x, y) \neq \emptyset$. This implies that there exists a
  trek, that is, a walk of the form $Z \tikzleftarrow \dots \tikzleftarrow c
  \tikzrightarrow \dots \tikzrightarrow y$ where $c \in \causal_G(x, y)$ with
  possibly $c \in Z$. As by assumption $Z \perp_{\tilde{G}} y \mid W$ it
  follows that the trek must at least contain one node in $W$ and therefore $W
  \cap \forb_G(x, y) \neq \emptyset$. It therefore suffices to consider the
  case that $W \cap \forb_G(x, y) \neq \emptyset$. There are two cases here.
  First, $W \cap \causal_G(x, y) \neq \emptyset$ and second, $W \cap \forb_G(x,
  y) \setminus \causal_G(x, y) \neq \emptyset$. The former is a violation of
  Condition \ref{condition:causalblockedIV} so we can assume only the latter is
  true. The fact that $W \cap (\forb_G(x, y) \setminus \causal_G(x, y)) \neq
  \emptyset$ and $W \cap \causal_G(x, y) = \emptyset$ implies that we can
  choose a node $v$ from $W$ such that there exist walks of the form $x
  \tikzrightarrow \dots \tikzrightarrow v$ and $v \tikzleftarrow \dots
  \tikzleftarrow c \tikzrightarrow \dots \tikzrightarrow y$ where $v \neq c \in
  \causal_G(x, y)$ and such that both walks do not contain any other node in
  $W$. Combining them we obtain a walk $w$ from $x$ to $y$ that begins with an
  edge out of $x$, is open given $W$ and does not end with a segment of the
  form $x \tikzrightarrow \dots \tikzrightarrow y$. Furthermore, as $x
  \not\perp_{G} Z \mid W$ we can choose a proper walk $w'$ from $Z$ to $x$ that
  is open given $W$. Consider the walk, $w''=w \oplus w'$. By construction $x$
  is a non-collider on $w''$ and therefore $w''$ is a walk from $Z$ to $y$ that
  is open given $W$. By nature of $w'$ it can also not end with a segment of
  the form $x \tikzrightarrow \dots \tikzrightarrow y$. Therefore either $w''$
  or a subwalk violate Condition \ref{condition:noncausalIV}.
\end{proof}

\ValidCISalgotheorem*

\begin{proof}
  To prove the statement, we need to show that the set
  $O_{\ref{condition:causalblockedIV}}$ that Table~\ref{table:causalblocked}
  computes contains all nodes $y$ violating
  Condition~\ref{condition:causalblockedIV} and the set
  $O_{\ref{condition:noncausalIV}}$ computed by Table~\ref{table:noncausal}
  contains all nodes $y$ violating Condition~\ref{condition:noncausalIV}. We
  begin with set $O_{\ref{condition:causalblockedIV}}$. Clearly, every path
  violating Condition~\ref{condition:causalblockedIV} has the form $x
  \tikzrightarrow a_1 \tikzrightarrow \cdots \tikzrightarrow a_k
  \tikzrightarrow w \tikzrightarrow b_1 \tikzrightarrow \cdots \tikzrightarrow
  b_{\ell}$ where $k = 0$ and $\ell = 0$ are possible, $a_i \not\in W$ and $w
  \in W$. Table~\ref{table:causalblocked} checks precisely paths of this form.
  For set $O_{\ref{condition:noncausalIV}}$, observe that the color non-causal
  tracks all open walks from $Z$ using the standard d-separation rules with the
  exception that the color causal-end is reached for nodes on an open path from
  $Z$ which ends with $x \tikzrightarrow \cdots \tikzrightarrow$.
  
  The task of finding all $y$ such that $(Z,W)$ is a valid conditional
  instrumental set relative to $(x,y)$ in $G$ can thus be solved with three
  CIfly calls, which take $O(p+m)$ time by Theorem~\ref{theorem:lintime}. The
  output is computed by a constant number of set union and set difference
  operations, which can be implemented in $O(p)$.
\end{proof}

\OptimalCISnewtheorem*

\begin{proof}
  Denote $D \setminus \{x, y\}$ by $D_s$. Since $y$ is the only possible node
  in $\de_{G}(D_s) \setminus D_s$, it follows that all bidirected edges in $G$
  between nodes in $V \setminus D_s$ also exist in $\tilde{G}$ and vice versa.
  Furthermore, any edge of the form $c \tikzleftrightarrow y$ in $\tilde{G}$
  corresponds to a path of the form $c \tikzleftrightarrow d_1 \tikzrightarrow
  \dots \tikzrightarrow d_l \tikzrightarrow y$ with possibly $l=0$ in $G$.
  Finally, any path of the form $p \tikzrightarrow y$ in $\tilde{G}$
  corresponds to a path of the form $p \tikzrightarrow d_1 \tikzrightarrow
  \dots \tikzrightarrow d_l \tikzrightarrow y$ with possibly $l=0$ in $G$. It
  follows that any node lying on one of the paths defining $\zoptnew$ in $G$
  that is not in $D_s$ lies on one of the paths defining  $\zopt$ in
  $\tilde{G}$. Since both sets are defined to consist of nodes not in $D_s$
  this suffices to show they are the same set.
  
  Regarding, $\woptnew$ and $\wopt$, all bidirected edges in $G$ also exist in
  $\tilde{G}$ and vice versa and the paths consist of nodes not in $D_s$ it
  follows that any of the paths defining $\woptnew$ in $G$ also exists in
  $\tilde{G}$. It follows that the two sets coincide.
\end{proof}

\OptimalCISalgotheorem*

\begin{proof}
  The algorithm directly finds $\woptnew$ and $\zoptnew$ as stated in
  Theorem~\ref{theorem:optimalCISnew} using Table~\ref{figure:optimalIVtables}.
  The transition rules track paths of the form $p \tikzrightarrow a_1
  \tikzleftrightarrow \cdots \tikzleftrightarrow  a_k \tikzleftrightarrow b_1
  \tikzrightarrow \cdots \tikzrightarrow b_{l} \tikzrightarrow y$ with $p, a_1,
  \dots, a_k \in A$ (possibly with $k=0$) and $b_1, \dots b_l \in B$ (again
  possibly with $l=0$). By setting $A \coloneq A$ and $B \coloneq D$, one
  obtains $\woptnew$, and with $A \coloneq A$ and $B \coloneq \emptyset$ and a
  subsequent set difference with $\woptnew$, the set $\zoptnew$ can be
  computed.
\end{proof}

\section{Simulations for Finding Valid Conditional Instrumental Sets}
\label{appendix:iv:simulations}
It is an important algorithmic task to find, given $x,y$ and $G$, a valid
conditional instrumental set $(Z,W)$ relative to $(x,y)$ in $G$. The popular
\texttt{DAGitty} software package~\citep{textor2016robust} provides the
\texttt{instrumentalVariables} function for this purpose. Given $x,y$ and $G$
it returns a list of valid conditional instrumental sets. More precisely, for
every $z$, for which the function finds a $W$ such that $(z,W)$ is a
conditional instrumental set, the function returns that pair $(z,W)$. The
algorithm has complexity $O(p \cdot (p + m))$, as it considers each $z$
separately and finding an appropriate $W$ takes linear time. However, we will
show that the algorithm by \texttt{DAGitty} is not based on a \emph{sound}
validity criterion and thus, at times, produces tuples
$(z,W)$ that are invalid. 

In this section, we compare the \texttt{DAGitty} implementation in terms of
accuracy and run time to two alternative algorithms for finding valid
conditional instrumental sets. First, the sound and complete algorithm
developed in Appendix \ref{appendix:iv:sound:and:complete:finder}, for which we
provide an R and a Python implementation using CIfly. It runs in time $O(p
\cdot (p + m))$ as, again, we find an appropriate $W$ for every potential $z$
separately. Second, the algorithm developed in Section
\ref{subsection:efficientIV} for computing the graphically optimal conditional
instrumental set (see also Figure~\ref{figure:optimalIVcode}). The underlying
criterion, given in Theorem \ref{theorem:optimalcis}, is \emph{not} complete as
there are two conditions which we need to check in turn to ensure that the
output is (i) valid and (ii) graphically optimal. In our implementation we only
check whether the first condition holds as we are solely interested in finding
valid conditional instrumental sets and not in efficiency guarantees. This
algorithm runs in time $O(p+m)$. In the remainder of this section we will refer
to these three algorithms for short as \texttt{DAGitty}, \texttt{complete} and
\texttt{optimal}, respectively.

\begin{table}[t]
  \centering
  \begin{tabular}{crrrrrrrrr} \toprule
         & \multicolumn{3}{c}{\texttt{DAGitty}} & \multicolumn{3}{c}{\texttt{complete}} & \multicolumn{3}{c}{\texttt{optimal}} \\ \cmidrule(r){2-4}\cmidrule(lr){5-7} \cmidrule(l){8-10}
    $p$  & sound   & unsound & none             & sound   & unsound & none              & sound   & unsound & none             \\ \midrule
    $10$ & $0.571$ & $0.005$ & $0.424$          & $0.574$ & $0.000$ & $0.426$           & $0.333$ & $0.000$ & $0.667$          \\
    $20$ & $0.745$ & $0.045$ & $0.210$          & $0.780$ & $0.000$ & $0.220$           & $0.488$ & $0.000$ & $0.512$          \\
    $30$ & $0.785$ & $0.096$ & $0.119$          & $0.870$ & $0.000$ & $0.130$           & $0.511$ & $0.000$ & $0.449$          \\
    $40$ & $0.794$ & $0.114$ & $0.092$          & $0.906$ & $0.000$ & $0.094$           & $0.589$ & $0.000$ & $0.411$          \\
    $50$ & $0.749$ & $0.183$ & $0.068$          & $0.922$ & $0.000$ & $0.078$           & $0.610$ & $0.000$ & $0.390$          \\ \bottomrule
  \end{tabular} \hspace{0.75cm}
  \caption{Proportion of instances in which the algorithms \texttt{DAGitty},
  \texttt{complete}, and \texttt{optimal} returned a list of valid conditional
  instrumental sets (sound), a list containing at least one invalid conditional
  instrumental set (unsound) and an empty list (none), respectively.}
  \label{table:optimalIVsimulations}
\end{table}

\subsection{Simulation Setup}
We consider graphs with $10$, $20$, $30$, $40$ and $50$ nodes. For each
setting, we generate $1000$ instances. The graphs are generated as follows: we
sample two undirected graphs in the Erdős-Renyi model. One with average degree
$1$ and the other with average degree $3$. In the former graph, we subsequently
make all edges bidirected and in the latter graph, we orient the edges into
directed ones according to a uniformly drawn linear ordering of the nodes.
Afterwards, we merge both graphs into a single ADMG. Here, we run two separate
versions for our simulation. In the first, we repeatedly draw distinct $x$ and
$y$ uniformly at random until we obtain a pair such that $y$ is a descendant of
$x$ (we restrict our simulations to graphs with at least one directed edge,
hence, such $x$ and $y$ always exist). In the second, we simply draw distinct
$x$ and $y$ uniformly at random.

For each instance we then use the algorithms \texttt{DAGitty},
\texttt{complete} and \texttt{optimal} to produce lists of conditional
instrumental sets. With the simpler algorithm for verifying validity sketched
in Section \ref{subsection:validIV} we check for each tuple whether it is
indeed valid and report three possible results. If the list is empty, that is,
no valid conditional instrumental set was found, we report \emph{none}. If the
list is not empty and all tuples on the list are valid, we report \emph{sound}.
Otherwise, we report \emph{unsound}. In our simulations, we consider DAGitty version
\texttt{0.3.4} and run the programs on a single thread of a
Intel\textregistered{} Core\textsuperscript{TM} i5-8350 CPU.

\subsection{Simulation Results}
Consider first the simulation in which we enforce that $y$ is a descendant of
$x$ (else the causal effect is trivially zero). We report the fraction of
instances for which the respective algorithms returned \emph{sound},
\emph{unsound} or \emph{no} conditional instrumental set in Table
\ref{table:optimalIVsimulations} over the $1000$ instances we generated for
each graph size. The results show that there are instances where
\texttt{complete} finds valid conditional instrumental sets and the other two
algorithms fail to do so. For \texttt{optimal} this is the case because it is
not complete. For $p=50$, for example, \texttt{optimal} fails to find a valid
conditional instrumental set even though one exists, as certified by
\texttt{complete}, in up to a third of instances. This indicates that more
research into graphically optimal conditional instrumental sets may be
warranted. For \texttt{DAGitty}, on the other hand, this is the case because it
is unsound. For $p=50$, for example, \texttt{DAGitty} returns invalid
conditional instrumental sets in $18\%$ of instances. In
Appendix~\ref{appendix:iv:DAGitty:bug}, we discuss the failure case of
\texttt{DAGitty} in more detail. We also report average run times in
Table~\ref{table:simulation:condiv:runtimes}. We see that the \texttt{complete}
R implementation is approximately 5 times faster than the \texttt{DAGitty} R
implementation. The R implementation of \texttt{optimal} runs in linear time
and is thus even faster by a factor 10. All implementations, however, are fast
enough for small scale causal inference tasks.

\begin{table}
  \centering
  \begin{tabular}{cccc} \toprule
    $p$  & \texttt{DAGitty} & \texttt{complete} & \texttt{optimal} \\ \midrule
    $10$ & $0.0064$         & $0.0006$          & $0.0001$         \\
    $20$ & $0.0090$         & $0.0013$          & $0.0001$         \\
    $30$ & $0.0122$         & $0.0021$          & $0.0001$         \\
    $40$ & $0.0167$         & $0.0032$          & $0.0001$         \\
    $50$ & $0.0203$         & $0.0040$          & $0.0002$         \\ \bottomrule
  \end{tabular}
  \caption{Average run time in seconds for the algorithms \texttt{DAGitty},
  \texttt{complete}, and \texttt{optimal}.}
  \label{table:simulation:condiv:runtimes}
\end{table}

Consider now the simulation in which we do not impose that $y$ be a descendant
of $x$. While this is a reasonable restriction, as else the causal effect is
trivially zero and no conditional instrumental set is needed, we report results
when this restriction is lifted in
Table~\ref{table:optimalIVsimulations:nodesc}. In the generated instances it
may still happen that $y$ is a descendant of $x$, it is just not enforced. As
our construction of the graphically optimal tuple assumes that $y$ is a
descendant of $x$ (see Theorem \ref{theorem:optimalCISnew}), we only apply
\texttt{optimal} to cases where this is true. Despite being of limited
practical importance, the results show that the \texttt{DAGitty} implementation
is not complete if $y$ is not a descendant of $x$, as \texttt{complete} finds
valid conditional instrumental sets in cases where the former fails to do so.

\begin{table}
  \centering
  \begin{tabular}{crrrrrrrrr} \toprule
         & \multicolumn{3}{c}{\texttt{DAGitty}} & \multicolumn{3}{c}{\texttt{complete}} & \multicolumn{3}{c}{\texttt{optimal}} \\ \cmidrule(r){2-4}\cmidrule(lr){5-7} \cmidrule(l){8-10}
    $p$  & sound   & unsound & none             & sound   & unsound & none              & sound   & unsound & none             \\ \midrule
    $10$ & $0.735$ & $0.003$ & $0.262$          & $0.757$ & $0.000$ & $0.243$           & $0.089$ & $0.000$ & $0.911$          \\
    $20$ & $0.904$ & $0.012$ & $0.084$          & $0.924$ & $0.000$ & $0.076$           & $0.075$ & $0.000$ & $0.925$          \\
    $30$ & $0.924$ & $0.014$ & $0.062$          & $0.950$ & $0.000$ & $0.050$           & $0.059$ & $0.000$ & $0.941$          \\
    $40$ & $0.953$ & $0.007$ & $0.040$          & $0.967$ & $0.000$ & $0.033$           & $0.059$ & $0.000$ & $0.941$          \\
    $50$ & $0.955$ & $0.012$ & $0.033$          & $0.976$ & $0.000$ & $0.024$           & $0.050$ & $0.000$ & $0.950$          \\ \bottomrule
  \end{tabular} \hspace{0.75cm}
  \caption{Proportion of instances in which the algorithms \texttt{DAGitty},
  \texttt{complete}, and \texttt{optimal} returned a list of valid conditional
  instrumental sets (sound), a list containing at least one invalid conditional
  instrumental set (unsound) and an empty list (none), respectively, in the
  simulation study where we do not enforce that $y$ is a descendant of $x$.}
  \label{table:optimalIVsimulations:nodesc}
\end{table}

\subsection{Unsoundness of the DAGitty implementation}
\label{appendix:iv:DAGitty:bug}
Our simulations suggest that the algorithm provided by \texttt{DAGitty} to find
valid conditional instrumental sets returns sets violating the sound and
complete graphical criterion by \citet{henckel2024graphical}. This indicates
that the algorithm by \texttt{DAGitty} is not sound in the sense that it
returns invalid conditional instrumental sets. In this section, we provide a
concrete example confirming that this is indeed the case. Before doing so, we
first give a cursory introduction to some notation for covariance matrices, the
two-stage least squares estimator, linear structural equation models compatible
with ADMGs and total causal effects. For more details, we refer readers to
\citet{henckel2024graphical}.

Regarding the covariance matrix notation, consider three random vectors $S,T$
and $W$. We use $\Sigma_{ST}$ to denote the covariance matrix between $S$ and
$T$. Furthermore, we use $\Sigma_{ST.W}=\Sigma_{ST} - \Sigma_{SW}
\Sigma_{WW}^{-1} \Sigma_{WT}$ to denote the conditional covariance matrix
between $S$ and $T$ given $W$. Regarding the two-stage least squares
estimators, consider now four random vectors $x,y,Z$ and $W$ where the first
two are univariate. Let $S_n, T_n$ and $Y_n$ be the random matrices
corresponding to $n$ i.i.d. observations of the random vectors $S =
(x,W),T=(Z,W)$ and $y$. Then, the two-stage least squares estimator
$\hat{\tau}_{yx}^{Z.W}$ is defined as the first entry of the estimator $$
\hat{\gamma}_{ys.t}=Y_n T_n^\top (T_n T_n^\top)^{-1} T_n S_n^\top \{S_n
T_n^\top (T_n T_n^\top)^{-1} T_n S_n^\top\}^{-1}. $$ If $\Sigma_{zx.w} \neq 0$,
then $\hat{\tau}_{yx}^{Z.W}$ converges in probability to the limit
$\tau_{yx}^{Z.W} = \Sigma_{yZ.W} \Sigma_{Zx.W}^{-1}$.

Finally, a linear structural equation model is a set of recursive generating
equations for a random vector $V$ that can be written as $V = \mathcal{A} V +
\epsilon$, where $\mathcal{A}$ is a lower-triangular adjacency matrix and
$\epsilon$ a, typically normally distributed, error vector with covariance
matrix $\Omega$. The covariance matrix of $V$, satisfies
$\Sigma_{VV}=(\mathrm{Id} - \mathcal{A})^{-1} \Omega (\mathrm{Id}
-\mathcal{A})^{-T}$, where we use $-T$ to denote the inverse of the transpose
matrix and $\mathrm{Id}$ to denote the identity matrix. A linear structural
equation model corresponds to an ADMG $G$, with the directed edges in $G$
corresponding to the non-zero entries of $\mathcal{A}$ and the bidirected edges
in $G$ to the non-zero off-diagonal entries of $\Omega$. Linear structural
models are used to study the effect of interventions, that is, manipulations of
the generating equations, on the distribution of $V$
\citep{pearl2009causality}. A typical target of inference is the total causal
effect of $x$ on $y$ which is defined as $\tau_{yx}=E\left(y\mid
\mathrm{do}(x=1)\right) - E\left(y\mid \mathrm{do}(x=0)\right)$, where
$E\left(Y\mid \mathrm{do}(x=s)\right)$ is the expected value of $Y$ in
distribution of $V$ we obtain when replacing the generating equation for $x$
with $x=s$. In a linear structural equation model compatible with an ADMG $G$,
$\tau_{yx}$ is equal to the product of the edge coefficients along the directed
paths from $x$ to $y$ in $G$. With this notation we can now provide an example
confirming that the \texttt{DAGitty} algorithm for finding valid conditional
instrumental sets is indeed unsound.

\begin{figure}[t]
  \centering
  \begin{tikzpicture}[xscale=1.1]
    \node (x) at (0,0) {$x$};
    \node (m) at ($(x)+(1.5,0)$) {$m$};
    \node (y) at ($(x)+(3,0)$) {$y$};
    \node (z) at ($(x)+(0,-1.5)$)   {$z$};
    \node (f) at ($(m)+(0,-1.5)$)   {$w$};
    
    \path[arc]   (x) edge    (m);
    \path[arc]   (x) edge     (z);
    \path[arc]   (m) edge    (y);
    \path[arc]   (m) edge    (f);
    \path[arc]   (f) edge    (z);
  \end{tikzpicture}
  \caption{DAG discussed in Example~\ref{example:daggity counter example} as an
  instance where \texttt{DAGitty} returns an invalid conditional instrumental
  set, namely $\{\{z\},\{w\}\}$.}
  \label{figure:daggity bug}
\end{figure}
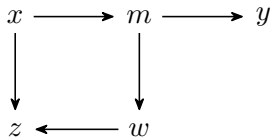

\begin{example}
  \label{example:daggity counter example}
  Consider the DAG $G$ in Figure \ref{figure:daggity bug} and suppose we are
  interested in estimating the causal of effect of $x$ on $y$ using a
  conditional instrument. Applying the graphical criterion by
  \citet{henckel2024graphical} we can see that since $\forb_G(x, y) =
  \{x,m,w,z,y\}$ no valid conditional instrumental set exists. Nonetheless, the
  \texttt{DAGitty} version 3.1 (checked in July 2025) returns the tuple
  $(\{z\},\{w\})$. We now construct an explicit structural equation model
  compatible with $G$ and show that in this model the population level limit of
  the two-stage least-square estimator using the conditional instrument
  $(\{z\},\{w\})$ is unequal to the target causal effect, that is,
  $(\{z\},\{w\})$  is not a valid conditional instrument \citep[see Definition
  1][]{henckel2024graphical}. This shows that the output of \texttt{DAGitty} is
  indeed wrong. Consider the SEM compatible with $G$ with all edge coefficients
  equal to $1$ and all errors standard normally distributed. The covariance
  matrix for the random vector $V=(x,m,w,y,z)$ is:
  $$
  \Sigma_{VV} = \begin{pmatrix}
    1  &  1  &  1  &  1  &  2 \\
    1  &  2  &  2  &  2  &  3 \\
    1  &  2  &  3  &  2  &  4 \\
    1  &  2  &  2  &  3  &  3 \\
    2  &  3  &  4  &  3  &  7
  \end{pmatrix}.
  $$
  Therefore the population level two-stage least squares coefficient $\tau_{yx}^{Z.W}$ is
  \begin{align*}
    \sigma_{yz.w} \sigma_{zx.w}^{-1} 
    &= (\sigma_{yz} - \sigma_{yw} \sigma_{ww}^{-1} \sigma_{wz}) (\sigma_{zx} - \sigma_{zw} \sigma_{ww}^{-1} \sigma_{wx}) \\
    &= \left(3 - 2 \times \frac{1}{3} \times 4\right)\left(2 - 1 \times \frac{1}{3} \times 4\right)^{-1} =\frac{1}{2}.
  \end{align*}
  The total causal effect $\tau_{yx}$, on the other hand, is equal to $1$ as
  there is only one directed path from $x$ to $y$ in $G$ and all edge
  coefficients are by equal to $1$. It follows that $(\{z\},\{w\})$  is not a
  valid conditional instrumental set and the algorithm provided by
  \texttt{DAGitty} is not sound. We note, however, that $z$ is a descendant of $x$ and it is unlikely that practitioners would rely on such an instrument. 
\end{example}

\section{Additional Material for Section \ref{section:cifly:in:action}}
In this section, we provide some more details regarding the algorithm for computing the parent adjustment
distance introduced in Section \ref{section:parent adjustment} and the
corresponding two simulation studies from Section \ref{section:cifly:in:action}.


\begin{figure}[t]
    \begin{lstlisting}[language=ruletable]
EDGES --> <--, ---
SETS X, W
COLORS pass, yield
START --> [pass] AT X
OUTPUT ... [yield]

... [pass]  | ---, --> [pass]  | next not in X and next not in W
... [pass]  | ---, --> [yield] | next not in X and next in W
... [yield] | ---, --> [yield] | next not in X
  \end{lstlisting}
  \caption{CIfly rule table for computing the nodes connected to $X$ via a
  proper possibly directed path that does not contain a node in $W$ in a CPDAG.}
  \label{cifly:tables:parent:aid:forbidden}
\end{figure}

\begin{figure}[t]
    \begin{lstlisting}[language=ruletable]
EDGES --> <--, ---
SETS X, W
COLORS init, causal, non-causal
START ... [init] AT X
OUTPUT ... [non-causal]

... [init]       | ---, --> [causal]     | true
... [init]       | <--      [non-causal] | true
--> [...]        | <--      [non-causal] | next not in X and current in W 
--- [causal]     | ---      [causal]     | next not in X and current not in W 
... [causal]     | -->      [causal]     | next not in X and current not in W 
--- [non-causal] | ---      [non-causal] | next not in X and current not in W 
<-- [non-causal] | ...      [non-causal] | next not in X and current not in W 
... [non-causal] | -->      [non-causal] | next not in X and current not in W 
  \end{lstlisting}
  \caption{CIfly rule table for computing the nodes connected to $X$ via a
  proper definite-status non-causal walk open given $W$ in a CPDAG.}
  \label{cifly:tables:parent:aid:noncausal}
\end{figure}

\subsection{Computing the Parent Adjustment Distance in CIfly}
\label{appendix:gadjid}
The \emph{parent adjustment distance} is a causal measure for the distance
between two DAGs or CPDAGs $G_{\text{guess}}$ and $G_{\text{true}}$ proposed by
\citet{henckel2024graphical} and closely related to the structural intervention
distance \citet{peters2015structural}. For a formal introduction and motivation
of this measure, we refer interested readers to \citet{henckel2024graphical}.
The parent adjustment distance counts the number of node pairs $x$ and $y$ in
the graphs such that (i) Condition \ref{condition: amenability} of Theorem
\ref{theorem:adjustment} is violated relative to $(x,y)$ in $G_{\text{guess}}$
but not in $G_{\text{true}}$, (ii) $y$ is a parent of $x$ in
$G_{\text{guess}}$, which implies a zero causal-effect, but $y$ is a descendant
of $x$ in $G_{\text{true}}$, or (iii) the parents of $x$ in $G_{\text{guess}}$
are a valid adjustment set in $G_{\text{guess}}$ but not a valid adjustment set
relative to $x$ and $y$ in $G_{\text{true}}$. To compute this distance, we need
to check whether the parents of $x$ form a valid adjustment set relative to $x$
and $y$ in both $G_{\text{guess}}$ and $G_{\text{true}}$. To do so efficiently,
\cite{henckel2024adjustment} propose to decide this, given a fixed $x$, for all
$y$ simultaneously in linear time, thus yielding an \emph{overall} time of $O(p
\cdot (p + m))$. To achieve this they reformulate the CPDAG adjustment
criterion as follows.

\begin{theorem}[\citet{henckel2024adjustment}]
  Let $X$, $\{y\}$ and $W$ be pairwise disjoint node sets in a CPDAG $G$.
  Then $W$ is a valid adjustment set relative to $(X,y)$ in $G$ if, and only
  if,
  \begin{enumerate}
    \item no proper possibly directed walk that does not begin with a directed
      edge out of $X$ reaches $y$,
    \item no proper possibly directed walk that contains a node in $Z$ reaches $y$, and
    \item no proper definite-status non-causal walk that is not blocked by $Z$ reaches $y$.
  \end{enumerate}
  \label{theorem:parent:aid:conditions}
\end{theorem}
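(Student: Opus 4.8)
The plan is to prove Theorem~\ref{theorem:parent:aid:conditions} by showing that its three conditions, taken together, are equivalent to the three conditions of the adjustment criterion of \citet{perkovic2018complete} in Theorem~\ref{theorem:adjustment}, specialized to the singleton outcome $Y=\{y\}$; since the statement is attributed to \citet{henckel2024adjustment}, I would follow their walk-based argument and the techniques of \citet{van2020algorithmics}. The dictionary is, roughly, that Condition~1 of Theorem~\ref{theorem:parent:aid:conditions} encodes the amenability requirement (Condition~\ref{condition: amenability}), Condition~2 encodes the ``causal part'' of the forbidden-set requirement (Condition~\ref{condition: forbidden}), and Condition~3 --- in conjunction with Condition~2 --- encodes the $\tilde{G}$-separation requirement (Condition~\ref{condition: non-causal adjustment}) while also absorbing the remaining part of the forbidden-set requirement. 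I would translate each of the new conditions into a statement purely about walks in $G$ and then match the two triples, working in the order Condition~1, Condition~2, Condition~3, because the analysis of Condition~3 relies on the structure granted by the first two.

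For Condition~1, the only non-trivial direction is from walks to paths: every path is a walk, and conversely, because a proper walk from $X$ never revisits its first node (that node is its only node in $X$) and because ``possibly directed'' and ``proper'' are inherited by subwalks, a standard shortcutting argument extracts from any proper possibly directed walk from $X$ to $y$ not starting with a directed edge out of $X$ a proper possibly directed \emph{path} with the same property, and vice versa. For Condition~2, I would unfold $\forb_G(X,\{y\}) = \possde_G(\causal_G(X,\{y\})) \cup X$ with $\causal_G(X,\{y\}) = \possan_{G,X}(\{y\}) \cap \possde_G(X)$, and observe that the nodes lying on some proper possibly directed walk from $X$ to $y$ are exactly the nodes of $\causal_G(X,\{y\})$ (for the non-obvious inclusion I would splice a possibly directed path from $X$ into a causal node, a sub-path to the node in question, and a proper possibly directed path on to $y$, invoking the standard CPDAG fact that possibly directed paths can be taken to be an undirected segment followed by a directed segment, and truncating at the last node in $X$ to ensure properness). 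Hence Condition~2 is equivalent to $W \cap \causal_G(X,\{y\}) = \emptyset$; the forbidden nodes that are strict possible descendants of causal nodes but lie in $W$ are not flagged here, and I would defer showing that such a node forces a violation of Condition~3 to the next step.

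For Condition~3, I would use the characterization of d-separation in CPDAGs and in the derived graph $\tilde{G}$ via open proper definite-status walks \citep{perkovic2018complete,van2020algorithmics}, so that $X\not\perp_{\tilde{G}} y\mid W$ holds iff such a walk from $X$ to $y$ exists in $\tilde{G}$. Assuming Conditions~1 and~2 already hold, the key point is that the edges deleted in passing from $G$ to $\tilde{G}$, namely those from $X$ to $\causal_G(X,\{y\})$, are precisely the edges whose use could make a proper definite-status walk from $X$ to $y$ possibly directed: under amenability no such walk can start with an undirected edge, so a possibly directed one starts $x \tikzrightarrow c$ with $c$ causal, i.e.\ with a deleted edge. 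It follows that the open proper definite-status walks from $X$ to $y$ in $\tilde{G}$ are exactly the open proper definite-status \emph{non-causal} walks from $X$ to $y$ in $G$ --- which is Condition~3 --- and the same analysis shows that a forbidden non-causal descendant of a causal node that lies in $W$ opens such a walk (go up to a causal node, down to that node, back up, and on to $y$ through a causal path), so that Condition~3 also covers the forbidden-set check missed by Condition~2. The main obstacle is not a single conceptual step but the bookkeeping in these walk surgeries: keeping walks proper, not destroying definite status when rerouting, and checking that edge deletion in $\tilde{G}$ meshes with the definite-status characterization; I would discharge these using the corresponding lemmas of \citet{perkovic2018complete} and \citet{henckel2024adjustment}.
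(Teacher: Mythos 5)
The paper does not prove this statement: Theorem~\ref{theorem:parent:aid:conditions} is quoted verbatim from \citet{henckel2024adjustment} and used as a black box to justify the three reachability calls in the parent-AID implementation, so there is no in-paper proof to compare your proposal against. Judged on its own terms, your strategy --- translating each of the three walk conditions into the corresponding condition of Theorem~\ref{theorem:adjustment} and handling amenability, the forbidden set, and the $\tilde{G}$-separation in that order --- is the natural route and almost certainly mirrors the argument in the cited source. Two points deserve care. First, your claim that ``the open proper definite-status walks from $X$ to $y$ in $\tilde{G}$ are exactly the open proper definite-status non-causal walks from $X$ to $y$ in $G$'' is stated too strongly as a set equality: a non-causal definite-status walk in $G$ can still begin with a deleted edge, for example $x \tikzrightarrow c \tikzrightarrow d \tikzleftarrow \cdots$ with $c \in \causal_G(X,y)$ and the collider $d$ a node of $\forb_G(X,y)\setminus\causal_G(X,y)$ lying in $W$; such a walk is open, non-causal, and absent from $\tilde{G}$, and your Condition~2 does not exclude it since $d$ is not causal. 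The overall biconditional survives (in that scenario both the forbidden-set condition of Theorem~\ref{theorem:adjustment} and Condition~3 of Theorem~\ref{theorem:parent:aid:conditions} fail), but the correct statement is an equivalence of the conjunctions of conditions, not a bijection between the two families of witnessing walks, and your write-up should be restructured accordingly. Second, the theorem as printed has a notational slip ($Z$ in Conditions~2 and~3 where the hypotheses introduce $W$) that you silently normalize; worth flagging explicitly. Beyond that, the proposal is a roadmap rather than a proof --- the definite-status walk surgeries in CPDAGs that you defer to the lemmas of \citet{perkovic2018complete} and \citet{henckel2024adjustment} are where essentially all of the work lies --- which is acceptable for a result the paper itself imports by citation.
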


Hence, given a graph $G$ and a treatment $x$, we can find all $y$ that violate
Condition 1, 2 and 3, respectively, using three reachability calls. We show how
to express these reachability algorithms in CIfly below. Note that Conditions 2
and 3 in Theorem \ref{theorem:parent:aid:conditions} are \emph{not} one-to-one
rephrasings of Conditions 2 and 3 from Theorem~\ref{theorem:adjustment}. 

In Figure~\ref{cifly:code:parent:aid}, we give the Python code for computing
the parent adjustment distance between two CPDAGs. It relies on rule tables for
finding all $y$ that violate Condition 1, 2 and 3 of
Theorem~\ref{theorem:parent:aid:conditions}. Concretely, we can re-use the
left table in Figure~\ref{figure:adjustment simple} to handle the
first condition. For the second and third condition, the tables in
Figure~\ref{cifly:tables:parent:aid:forbidden}
and~\ref{cifly:tables:parent:aid:noncausal} are used. 

\begin{figure}[p]
  \begin{lstlisting}[language=Python, style=codeStyle]
import ciflypy as cifly

def poss_desc(g, x):
    return set(cifly.reach(g, {"X": x}, poss_desc_table))

def not_amenable(g, x):
    return set(cifly.reach(g, {"X": x}, not_amenable_table))

def forbidden(g, x, W):
    return set(cifly.reach(g, {"X": x, "W": W}, forb_path_table))

def non_causal(g, x, W):
    return set(cifly.reach(g, {"X": x, "W": W}, non_causal_table))

def not_amen_not_adjust(g, x, W):
    nam = not_amenable(g, x)
    return nam, nam.union(forbidden(g, x, W)).union(non_causal(g, x, W))

# edges_true and edges_guess represent the true and guessed graphs; as CIfly graphs are represented by edgelists, we also pass the number of nodes p
def parent_aid(p, edges_true, edges_guess):
    parents = [[] for _ in range(p)]
    for u, v in edges_guess["-->"]:
        parents[v].append(u)
    # for extra performance, pre-parse edgelists into CIfly graphs
    g_guess_parsed = cifly.Graph(edges_guess, poss_desc_table)
    g_true_parsed = cifly.Graph(edges_true, poss_desc_table)

    mistakes = 0
    for x in range(p):
        pa = set(parents[x])
        nam_guess = not_amenable(g_guess_parsed, x)
        desc_true = poss_desc(g_true_parsed, x)
        nam_true, nad_true = not_amen_not_adjust(g_true_parsed, x, pa)
        for y in range(p):
            if y == x:
                continue
            if y in pa:
                if y in desc_true:
                    mistakes += 1
            elif y in nam_guess:
                if y not in nam_true:
                    mistakes += 1
            else:
                if y in nad_true:
                    mistakes += 1
    return mistakes
  \end{lstlisting}
\caption{Python code for computing the parent adjustment distance for CPDAGs.}
  \label{cifly:code:parent:aid}
\end{figure}

The algorithm proceeds by going through all nodes $x$ to check for mistakes
between the two given graphs. First, all nodes $y$ are computed for which no
adjustment set relative to $(x, y)$ exists. These are the $y$ that are
\emph{not amenable} in $G_{\text{guess}}$. For those where this is not the
case, we also compute the parents of $y$ in $G_{\text{guess}}$ and store them.
Afterwards, for $G_{\text{true}}$ it is checked (i) which $y$ are descendants
of $x$, that is for which $y$ the causal effect is zero, (ii) which $y$ are not
amenable and (iii) for which $y$ the parents of $x$ in $G_{\text{guess}}$ are
not a valid adjustment set in $G_{\text{true}}$. Finally, for each $y$, it is
evaluated whether there is a discrepancy between the graphs. For details of the
underlying logic, we refer to~\citep{henckel2024adjustment}. 

The code given in Figure \ref{cifly:code:parent:aid} is exactly the code we use
in the simulation study, where it is faster than \texttt{gadjid} on dense
graphs. The only optimization that is necessary is to parse  the graphs
represented by the edgelists \texttt{edges\_guess} and \texttt{edges\_true}
before the for-loop. This is due to the fact that this parsing can cause
considerable overhead in the \texttt{cifly.reach} calls. After all,
\texttt{reach} runs in linear time in the input size, that is, the overall run
time is proportional to reading the input. CIfly offers the \texttt{Graph}
constructor for this, to which an edge list in CIfly format has to be passed
and a rule table with the correct signature of the graph (the parsed graph will
be compatible with all rule tables with the same line \texttt{EDGES \dots}). On
\url{cifly.pages.dev}, we also provide the first implementation of the parent
adjustment distance in the R programming language. 

\subsection{Simulation Setup}
\label{appendix:adjustment simulation}
For the simulations, we generate CPDAGs in the following manner. We generate an
undirected Erdős-Renyi random graph with $p$ nodes and average degree $d$.
Afterwards, we orient the edges to directed ones according to a uniformly drawn
linear ordering of the nodes. Finally, we compute the CPDAG of whose
equivalence class this DAG is a member. 

For the experiments regarding the CPDAG adjustment criterion, we generate
CPDAGs with $d=4$. For $p$ being equal to $100$, $200$, $300$, $400$ and $500$,
we generate $20$ independent CPDAGs and average the run times. In case of the
parent adjustment distance, we generate CPDAGs with $d=4$ (sparse graphs) and
$d = p / 10$ (dense graphs). Again, we generate $20$ independent CPDAGs for $p$
being $100$, $200$, $300$, $400$ and $500$, and average the resulting
run times. In both simulations, we check that all algorithms output the same
results.

We compare against \texttt{pcalg} version 2.7.12, DAGitty version
\texttt{0.3.4} and \texttt{gadjid} version 0.1.0 on a single thread of a
Intel\textregistered{} Core\textsuperscript{TM} i5-8350 CPU. For gadjid, the
number of threads is controlled via the environment variable
\texttt{RAYON\_NUM\_THREADS}.

\end{document}